\newcommand{\E}[0]{\mathbb{E}}
\newcommand{\R}[0]{\mathbb{R}}
\newcommand{\an}[1]{\left\langle {#1}\right\rangle}
\newcommand{\pa}[1]{\left( {#1} \right)}
\newcommand{\ve}[1]{\left\Vert {#1}\right\Vert}
\newcommand{\Cov}{\operatorname{Cov}}
\newcommand{\poly}{\operatorname{poly}}
\newcommand{\polylog}{\operatorname{polylog}}
\newcommand{\pull}[9]{
#1\ar@/_/[ddr]_{#2} \ar@{.>}[rd]^{#3} \ar@/^/[rrd]^{#4} & &\\
& #5\ar[r]^{#6}\ar[d]^{#8} &#7\ar[d]^{#9} \\}
\newcommand{\cmp}[9]{
\xymatrix{
#1 \ar[r]^{#4}{#5} \ar@/_2pc/[rr]^{#8}_{#9} & #2 \ar[r]^{#6}_{#7} & #3
}
}
\newcommand{\ha}[1]{\ar@{^(->}[#1]}
\newcommand{\ls}[1]{\ar@{-}[#1]}
\newcommand{\sj}[1]{\ar@{->>}[#1]}
\newcommand{\aq}[1]{\ar@{=}[#1]}
\newcommand{\acir}[1]{\ar@{}[#1]|-{\textstyle{\circlearrowright}}}
\newcommand{\acil}[1]{\ar@{}[#1]|-{\textstyle{\circlearrowleft}}}
\newcommand{\ard}[1]{\ar@{.>}[#1]}
\newcommand{\mt}[1]{\ar@{|->}[#1]}
\newcommand{\inm}[1]{\ar@{}[#1]|-{\in}}
\newcommand{\inr}{\ar@{}[d]|-{\rotatebox[origin=c]{-90}{$\in$}}}
\newcommand{\inl}{\ar@{}[u]|-{\rotatebox[origin=c]{90}{$\in$}}}
\newcommand{\beq}[1]{\begin{equation}\llabel{#1}}
\newcommand{\eeq}[0]{\end{equation}}
\newcommand{\bal}[0]{\begin{align*}}
\newcommand{\eal}[0]{\end{align*}}
\newcommand{\ban}[0]{\begin{align}}
\newcommand{\ean}[0]{\end{align}}
\newcommand{\fixme}[1]{{\color{red}#1}}
\newcommand{\llabel}[1]{\label{#1}\text{\fixme{\tiny#1}}}
\newcommand{\arxiv}[1]{\url{http://www.arxiv.org/abs/#1}}
\DeclareFontFamily{U}{wncy}{}
    \DeclareFontShape{U}{wncy}{m}{n}{<->wncyr10}{}
    \DeclareSymbolFont{mcy}{U}{wncy}{m}{n}
    \DeclareMathSymbol{\Sh}{\mathord}{mcy}{"58}
\newcommand{\indic}[1]{\mathrm 1\Big\{{#1}\Big\}}
\newcommand\numberthis{\addtocounter{equation}{1}\tag{\theequation}}
\newcommand{\absv}[1]{\Big|#1\Big|}
\newcommand{\bart}{t}
\newcommand{\lowdegerr}{\epsilon_{\mathop{\sf ld}}}
\newcommand{\screrr}{\max_{j=1}^D\ve{F(y(c_j)) - F^*(y(c_j))}}
\newcommand{\ci}{\mathbf{i}}
\newcommand{\ck}{\mathbf{k}}
\newcommand{\cll}{\mathbf{l}}
\newcommand{\qty}{q^{t,y}}
\newcommand{\Ety}{\E^{t,y}}
\newcommand{\Etty}{\E^{t,y_t}}
\title{High-accuracy and dimension-free sampling with diffusions}
\author{
  Khashayar Gatmiry \\
  UC Berkeley \\
  \and
  Sitan Chen \\
  Harvard University \\
  \and
  Adil Salim \\
   \\
}
\date{}
\newcommand{\esc}{\varepsilon_{\rm sc}}
\newcommand{\Lt}{\tilde L}
\newcommand{\qbase}{q_{\sf pre}}
\renewcommand{\paragraph}{%
  \@startsection{paragraph}{4}%
  {\z@}{1.25ex \@plus 1ex \@minus .2ex}{-1em}%
  {\normalfont\normalsize\bfseries}%
}
\begin{document}

\maketitle

\thispagestyle{empty}
\pagenumbering{gobble}

\begin{abstract}
    Diffusion models have shown remarkable empirical success in sampling from rich multi-modal distributions. Their inference relies on numerically solving a certain differential equation. This differential equation cannot be solved in closed form, and its resolution via discretization typically requires many small iterations to produce \emph{high-quality} samples.
    More precisely, prior works have shown that the iteration complexity of discretization methods for diffusion models scales polynomially in the ambient dimension and the inverse accuracy $1/\varepsilon$. In this work, we propose a new solver for diffusion models relying on a subtle interplay between low-degree approximation and the collocation method~\cite{lee2018algorithmic}, and we prove that its iteration complexity scales \emph{polylogarithmically} in $1/\varepsilon$, yielding the first ``high-accuracy'' guarantee for a diffusion-based sampler that only uses (approximate) access to the scores of the data distribution. In addition, our bound does not depend explicitly on the ambient dimension; more precisely, the dimension affects the complexity of our solver through the \emph{effective radius} of the support of the target distribution only.
\end{abstract}

\newpage

\tableofcontents
\thispagestyle{empty}

\newpage
\pagenumbering{arabic}

\newpage

\section{Introduction}

Diffusion models \cite{sohletal2015nonequilibrium,sonerm2019estimatinggradients,ho2020denoising,dhanic2021diffusionbeatsgans,songetal2021scorebased,songetal2021mlescorebased,vahkrekau2021scorebased} are the dominant paradigm in image generation, among other modalities. They sample from high-dimensional distributions by numerically simulating a \emph{reverse process} driven by a certain differential equation with drift learned from training data. The reverse process is meant to undo some noise process, and by simulating the reverse process sufficiently accurately, one can generate fresh samples from the distribution out of pure noise.

The empirical success of this method has spurred a flurry of theoretical work in recent years to understand the mechanisms by which diffusion models are able to easily sample from rich multimodal distributions. These works draw upon tools from the extensive literature on log-concave sampling and arrive at a remarkable conclusion: diffusion models can efficiently sample from \emph{any} distribution in high dimensions, even highly non-log-concave ones, provided one has access to a sufficiently accurate estimate of the score of the distribution along a noise process (see Section~\ref{sec:related} for an overview of this literature).

The earliest such results showed that for any smooth distribution with bounded second moment, one can sample to error $\varepsilon$ in total variation distance in $\mathrm{poly}(d,1/\varepsilon)$ iterations, given sufficiently accurate score estimation~\cite{lee2023convergence,chen2023sampling,chen2023improved,benton2024nearly,conforti2023score}. These results focus on samplers that simulate the reverse process in discrete time. The discretization introduces bias, which necessitates taking the discretization steps sufficiently small \--- i.e. of size $\mathrm{poly}(\varepsilon,1/d)$ \--- to ensure the trajectory of the sampler remains sufficiently close to that of the continuous-time reverse process. 

While these discretization bounds have been refined significantly by recent work (see Section~\ref{sec:related}), what remains especially poorly understood is the dependence on $\epsilon$. In the log-concave sampling literature, there is a well-understood taxonomy along this axis: there are (1) ``low-accuracy'' methods like Langevin Monte Carlo that get iteration complexity scaling polynomially in $1/\varepsilon$, and (2) ``high-accuracy'' methods which correct for discretization bias, e.g., via Metropolis adjustment, and get iteration complexity \emph{polylogarithmic} in $1/\varepsilon$.

For diffusion models, guarantees of the second kind remain largely unexplored. We therefore ask:

\begin{center}
    {\em Are there samplers for diffusion modeling that achieve $O(\mathrm{polylog} 1/\varepsilon)$ iteration complexity?}
\end{center}

\subsection{Our contributions}

\noindent Here we answer this in the affirmative. We will focus on the following class of target distributions $q$:

\begin{assumption}[Bounded plus noise]\label{assumption:bounded_plus_noise}
    Let $R, \sigma > 0$. There exists a distribution $\qbase$ supported on the origin-centered ball of radius $R$ in $\R^d$ for which $q = \qbase \star \mathcal{N}(0,\sigma^2 I)$.
\end{assumption}

\noindent In other words, we consider sampling from a distribution $q$ which is the convolution of a compactly supported distribution with Gaussian noise. Our bounds will scale polynomially in $R / \sigma$. A natural example of a distribution satisfying Assumption~\ref{assumption:bounded_plus_noise} is a \emph{mixture of isotropic Gaussians}; more generally, as observed by~\cite{chen2023sampling}, distributions satisfying Assumption~\ref{assumption:bounded_plus_noise} naturally model what diffusion models in practice try to sample from via \emph{early stopping}. Positivity of $\sigma$ also makes it possible to prove convergence guarantees in strong divergence-based metrics like total variation distance.

In this paper, we consider sampling from $q$ by approximately simulating the reverse process. We first show a key structural property: the high-order time derivatives of the score function along the reverse process are bounded, implying that the score function can be pointwise-approximated by a low-degree polynomial in time. The statement is technical and we defer it to the supplement.

Leveraging this result, we show how to adapt a certain specialized ODE solver of~\cite{lee2018algorithmic} to simulate steps along the reverse process, and argue that our simulation remains close to the trajectory of the reverse process at all times, see Algorithm~\ref{alg:main}. We thus prove that Algorithm~\ref{alg:main} can efficiently sample from $q$, yielding the following main guarantee:

\begin{theorem}[Informal, see Corollary~\ref{cor:TV}]\label{thm:main_informal}
    Let $\varepsilon, R, \sigma > 0$. Suppose $q$ is a distribution satisfying Assumption~\ref{assumption:bounded_plus_noise} for parameters $R,\sigma$. Given access to Lipschitz score estimates for which the estimation error is $L^2$-bounded and has subexponential tails, there is a diffusion-based algorithm (see Algorithm~\ref{alg:main}) which outputs samples from a distribution $\varepsilon$ close to $q$ in total variation (TV) distance in $\tilde{O}((R/\sigma)^2\cdot \poly\log 1/\varepsilon)$ iterations.
\end{theorem}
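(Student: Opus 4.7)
The plan is to establish Theorem~\ref{thm:main_informal} by combining three ingredients: a dimension-free structural bound on time-derivatives of the score along the reverse process, the collocation-based ODE solver of \cite{lee2018algorithmic} as the per-step primitive, and a Girsanov-type change-of-measure argument to convert pathwise trajectory error into total variation error.

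\textbf{Setup and outer schedule.} First I fix the standard OU forward process, so that at each time $t$ the marginal $q_t$ is itself a ``bounded-plus-Gaussian'' convolution of the form $\qbase \star \mathcal{N}(0,\si_t^2 I)$ for some effective variance $\si_t^2 \geq \si^2$; this is the invariant that lets all subsequent bounds depend on $d$ only through $R/\si$. I then partition the reverse-time axis into $T_{\mathsf{out}} = \tilde O((R/\si)^2)$ macro-intervals using an exponentially graded schedule, so that the accumulated drift-Lipschitz cost along the reverse trajectory is controlled and so that the initial Gaussian warm-start is close to $q_{T_{\mathsf{out}}}$.

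\textbf{Low-degree-in-time approximation of the score.} Next I invoke the structural lemma announced in the introduction: along the reverse trajectory, the $k$-th time-derivative of $\nabla \log q_t(x)$ admits a bound of the form $(CR/\si)^{O(k)}\cdot k!$ for an absolute constant $C$. Combined with Taylor's theorem on each macro-interval, this shows that $\nabla \log q_t(x)$ is pointwise approximated, up to additive error $\ep$, by a polynomial in $t$ of degree $D = O(\polylog(R/(\si\ep)))$. Crucially these constants depend on $d$ only through $R/\si$, because each derivative can be written as a posterior expectation against a law concentrated in an $R$-ball and the Gaussian-convolution structure cancels the would-be dimension factors. I expect this dimension-free derivative bound to be the main technical obstacle, since naive Gr\"onwall-style reasoning would introduce explicit $d$ factors and must be replaced by a careful posterior-concentration argument.

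\textbf{Collocation solver per macro-interval.} With the low-degree-in-time structure in hand, within each macro-interval I substitute the estimated score with its degree-$D$ polynomial-in-$t$ approximation and invoke the collocation solver of \cite{lee2018algorithmic}: for ODEs whose drift is polynomial in time with spatial Lipschitz constant $O((R/\si)^2)$, the collocation method converges to the exact polynomial-drift solution in $\polylog(D/\ep) = \polylog(1/\ep)$ iterations. Aggregating over $T_{\mathsf{out}}$ macro-intervals gives the overall iteration count $\tilde O((R/\si)^2\poly\log(1/\ep))$, while per-macro-interval simulation error is driven down to $\ep/T_{\mathsf{out}}$.

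\textbf{Error accumulation and TV conversion.} Finally I stitch the macro-intervals together and convert pathwise closeness to TV closeness through a Girsanov-type argument between the law of the simulated process and the law of the true reverse SDE, paying an $L^2$-integrated drift-error cost. The four contributions to sum are the Taylor-approximation error of the score in time, the per-macro-interval collocation error, the score-estimation error controlled by the $L^2$-bounded subexponential tail assumption (with the subexponential tails used to truncate the bad event where the score estimate is far off), and the initialization gap between the Gaussian warm-start and $q_{T_{\mathsf{out}}}$. Choosing $D$ and $T_{\mathsf{out}}$ so that each source contributes $O(\ep)$ and applying Pinsker yields the claimed $\ep$-TV guarantee in the stated iteration count.
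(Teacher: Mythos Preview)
Your first three steps are broadly aligned with the paper: the derivative bound on the score along the reverse process, the collocation/Picard solver of \cite{lee2018algorithmic} on short windows, and the inductive stitching across windows are exactly the skeleton the paper uses. Two issues, one minor and one genuine.

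\textbf{Minor.} The derivative bound is established along the \emph{true} probability flow ODE trajectory, but the algorithm's iterate lives on a slightly perturbed trajectory (wrong initialization plus accumulated per-window error). You gloss over how the low-degree-in-time approximation transfers to that perturbed trajectory. The paper handles this with a coupling lemma showing that the posterior $q^{t,y}$ is stable in TV under small perturbations of $y$, and then reruns the moment calculation with replicas drawn from the perturbed posterior. Without this, the Taylor bound you invoke is only known to hold at points you never visit.

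\textbf{Genuine gap.} Your final step---converting pathwise closeness to TV via a Girsanov change of measure against the reverse SDE---does not work in this setup. The algorithm simulates the \emph{probability flow ODE}, which is deterministic: given the Gaussian initialization, the output law is a pushforward by a deterministic map. There is no Brownian term, so Girsanov has nothing to act on; the KL between the simulated law and the reverse-SDE law is typically infinite (the two path measures are mutually singular). Switching the whole analysis to the reverse SDE is not a free fix either, since the collocation argument and the fixed-point contraction are tailored to an ODE. What the paper actually does is prove a $W_2$ guarantee from the ODE tracking argument (this is Theorem~\ref{thm:main}), and then append a short \emph{corrector} step of underdamped Langevin Monte Carlo with the true/estimated score of $q$, invoking the short-time hypoelliptic regularization bound of \cite{chen2024probability} (after \cite{guillin2012degenerate}) to convert $W_2$ closeness into TV closeness. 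Because the $W_2$ error can be driven down to any polynomial in $\varepsilon$ at polylogarithmic cost, the corrector can be run for a correspondingly short time, keeping its discretization bias below $\varepsilon$ without introducing a $\mathrm{poly}(1/\varepsilon)$ iteration count. Your Pinsker-plus-Girsanov route should be replaced by this $W_2 \to \mathrm{TV}$ corrector mechanism.
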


\noindent To the best of our knowledge, this is the first diffusion-based sampler to achieve polylogarithmic, rather than polynomial, dependence on $1/\varepsilon$, while only assuming access to a sufficiently accurate score. While our result requires a stronger assumption on the distribution of score error than prior work, we note that the analysis in prior work incurs polynomial dependence on $1/\varepsilon$ \emph{even assuming perfect score estimation}.

Another appealing feature of our result is that the iteration complexity of our algorithm is \emph{independent} of the ambient dimension $d$, instead depending on the radius $R$ in Assumption~\ref{assumption:bounded_plus_noise}. While in general $R$ can scale with $d$, the Gaussian mixture setting offers a natural setting where $R$ can be much smaller than $d$. For example, in the theory of distribution learning, the most challenging regime is precisely when the component centers are at distance $\Theta(\sqrt{\log k})$ from each other~\citep{hopkins2018mixture,kothari2018robust,diakonikolas2018list,liu2022clustering}. This corresponds to the case of $R = \Theta(\sqrt{\log k})$ and $\sigma = 1$, for which our sampler achieves iteration complexity scaling only polylogarithmically in $k$ and $1/\epsilon$, whereas existing diffusion-based methods would scale polynomially in $d$ and $1/\epsilon$ for this example (see Section~\ref{sec:related} for discussion of a recent result~\citep{li2025dimension} that obtains a dimension-free rate in the Gaussian mixture setting).

\subsection{Related work}
\label{sec:related}

We review the extensive line of recent works on diffusion model convergence bounds in Appendix~\ref{app:related}, focusing here on the threads most relevant to our result.

\paragraph{Dependence on $\epsilon$.} First, mirroring the popularization of higher-order solvers for diffusion sampling in practice~\cite{lu2022dpm,lu2025dpm,zhao2023unipc}, a number of works have sought to improve the $\epsilon$ dependence for diffusion-based sampling by appealing to ideas from higher-order numerical solvers, e.g., \cite{li2024accelerating,wu2024stochastic,li2024accelerating} which achieved \emph{fixed} polynomial acceleration in the $\epsilon$ dependence under relatively minimal assumptions. 

The recent works of Huang et al.~\cite{huang2025convergence,huang2025fast} and Li et al.~\cite{li2025faster} used higher-order solvers to achieve \emph{arbitrary} polynomial acceleration, i.e. a rate scaling like $O(d^{1+O(1/K)}/\epsilon^{1/K})$ for arbitrarily large constant $K$. We note however while this might suggest that one could take $K = \tilde{\Theta}(\log (1/\epsilon))$ and achieve similar polylogarithmic scaling as our work, the analysis in these works implicitly assumes that the number of iterations is at least exponential in $K$ (see e.g., the Discussion in~\cite{li2025faster}), so the acceleration they obtain is at best polynomial rather than exponential like in our work. In addition, the assumptions in these works are incomparable to ours. For instance, \cite{huang2025convergence,huang2025fast} assumes at least \emph{second-order} smoothness of the score estimates; \cite{li2025faster} does not require any smoothness of the score estimate, but it assumes its Jacobian of the score estimate is close to that of the true score. In contrast, we need to assume \emph{first-order} smoothness of the score estimate (which is justified because the true score is Lipschitz) and sub-exponential score error. Finally, we note that these three works and ours all assume the data distribution has compact support.

To our knowledge, the only prior work which truly achieves exponentially improved $\epsilon$ dependence is~\cite{huang2024reverse}. However, that work applies in a stronger access model than is commonly studied in this literature, as they assume approximate query access to log density ratios between arbitrary points, in addition to the scores, in order to implement a Metropolis-Hastings filter. We also note the independent and concurrent work of~\cite{wainwright2025score} which decomposes the reverse process into sampling from a sequences of posteriors which are strongly log-concave, and notes that by using off-the-shelf high-accuracy guarantees for strongly log-concave sampling, one can achieve a logarithmic dependence on $\epsilon$ for diffusion-based sampling. However, these off-the-shelf routines require knowledge not just of the scores, but of the log density of the data distribution convolved with noise, similar to~\cite{huang2024reverse}.

\paragraph{Dimension-free bounds.} As for our dependence on $R/\sigma$ instead of $d$, there is a relevant line of work \cite{li2024adapting,liang2025low,huang2024denoising,potaptchik2024linear} on sampling from distributions with low intrinsic dimension. They show that for any distribution whose intrinsic dimension, which they quantify in terms of covering number of the support denoted $k$, DDPM can sample with iteration complexity $O(k^4/\varepsilon^2)$. While this provides another example of a diffusion-based sampler whose rate adapts to the underlying geometry of the distribution, it is incomparable to our result: there can be supports which are high-dimensional but bounded in radius, and vice versa. On the other hand, \cite{li2025dimension} considered the special case of mixtures of isotropic Gaussians and showed, using very different techniques from those of the present work, that diffusion models can sample from these distributions in $T = \tilde{O}(\mathrm{polylog}(k)/\varepsilon)$ iterations, provided that the means of the Gaussians have norm at most $T^C$ for arbitrarily large absolute constant $C$. We remark that this result is incomparable to ours. We obtain polylogarithmic dependence on $1/\varepsilon$ and our guarantee applies to a wide class of distributions beyond just the special case of mixtures of isotropic Gaussians. On the other hand, the result of~\cite{li2025dimension} has superior scaling in other parameters for Gaussian mixtures: whereas we have a polynomial dependence on the maximum norm of any center in the mixture, they have an \emph{arbitrarily small} dependence on the radius.

\paragraph{Collocation for diffusions.} Finally, we note that the collocation method (see Section~\ref{sec:coll}) has been studied in the context of diffusions, but primarily as a way to \emph{parallelize} the steps of the sampler~\cite{anari2023parallel,gupta2024faster,chen2024accelerating}, but not using low-degree polynomial approximation. Those works obtain polylogarithmic \emph{round complexity}, but the \emph{total work} done is still polynomial in the target accuracy. In contrast, the key technical ingredient in our work is to establish low degree approximability of the score in time, which allows us to establish that the implementation of collocation by~\cite{lee2018algorithmic} via low-degree polynomials over a small time window of the probability flow ODE converges at an \emph{exponential} rate, even in a sequential setting.

\paragraph{High-accuracy log-concave sampling.} The literature on high-accuracy samplers, which is traditionally centered around log-concave distributions and more generally distributions which satisfy a functional inequality, is too extensive to do justice to here, and we refer to Chapter 7 of~\cite{chewibook} for a detailed overview. We briefly overview this in Appendix~\ref{app:related}. Our techniques do not draw upon this literature but are instead based on the work of~\cite{lee2018algorithmic}, which devised the general framework of collocation via low-degree approximation that we employ. Motivated by sampling problems connected to logistic regression, they applied their method to obtain a high-accuracy sampler for densities of the form $q\propto e^{-f}$, where $f(x) = \sum_i \phi_i (\langle a_i, x\rangle) + \lambda\|x\|^2$. 

\subsection{Roadmap} In Section~\ref{sec:prelims}, we provide technical preliminaries and give a description of our sampler. In this section we also provide intuition for how this sampler can achieve the high-accuracy guarantee of Theorem~\ref{thm:main_informal}. In Section~\ref{sec:lowdegree}, we sketch the proof of our main structural result showing that the drift of the reverse process is well-approximated by a low-degree polynomial in time, as well as the main steps for completing the proof of Theorem~\ref{thm:main_informal}. We defer the full proof details to the Appendix.

\section{Preliminaries and our algorithm}
\label{sec:prelims}

\paragraph{Notation.} Let $\gamma^d$ denote the $d$-dimensional standard Gaussian distribution. Given interval $I\subseteq\R^d_{\ge 0}$, let $\mathcal{C}(I,\R^d)$ denote the space of continuous maps $I\to \R^d$. For any distribution $p$ on $\mathbb R^d$, $L^2(p)$ denotes the space of squared integrable random vectors in $\mathbb R^d$. We recall the definitions of the total variation distance $\mathrm{TV}(P,Q)=\tfrac12\int_{\mathbb R^d}|p-q|\,\mathrm{d}x$ for densities $p,q$, and the Wasserstein-2 distance
$W_2^2(P,Q)=\inf_{\pi\in\Pi(P,Q)}\int\|x-y\|^2\,\mathrm{d}\pi(x,y)$ where $\Pi(P,Q)$ is the set of couplings. Finally, for any vector $x = (x_1, \ldots, x_d) \in \mathbb R^d$, $\|x\|_{\infty} = \max_i |x_i|$ and if $x$ is random, $\|x\|_{p, \infty} = \max_i \left( \mathbb E |x_i|^p \right)^{1/p}$.

\subsection{Diffusion models}

Here we review the basics of diffusion models; for a more detailed overview, we refer the reader to the surveys of~\cite{chen2024overview} and~\cite{nakkiran2024step}.

Diffusion models are built upon two main components: the \emph{forward process} and the \emph{reverse process}. The forward process is a noise process driven by a stochastic differential equation of the form

\begin{equation*}
    \mathrm{d}x_t = -x_t\,\mathrm{d}t + \sqrt{2}\,\mathrm{d}B_t\,, \qquad x_0 \sim \qbase\,,
\end{equation*}
where $(B_t)_{t\ge 0}$ is a standard Brownian motion in $\R^d$. Conditioned on $x_0$, the process at time $t$ is distributed as $e^{-t} x_0 + \sqrt{1 - e^{-2t}} g$ for $g\sim\gamma^d$; denote this marginal distribution by $p^\rightarrow_t$, where $p^\rightarrow_0 = \qbase$. 

We run the forward process up to time $T$. Define 
\begin{equation}
    \sigma_{t} = \sqrt{1 - e^{-2(T-t)}}
    \,. \label{eq:sigmadef}
\end{equation}

The \emph{reverse process} is designed to undo this noise process, i.e. transform $p^\rightarrow_T$ to $p^\rightarrow_0$. For convenience of the notation, we denote $p^\rightarrow_{T-t}$ by $q_t$, so that $q_T = \qbase$. One version of this reverse process is given by the \emph{probability flow ODE}, which is specified by

\begin{equation}
    \mathrm{d}y_t = (y_t \, + \nabla \ln q_{t}(y_t))\,\mathrm{d}t\,,\label{eq:backwardode}
\end{equation}
where $\nabla \ln q_{t}$ is called the \emph{score function}.
The key property is that if $y_0 \sim q_0$, then $y_T \sim q_T = \qbase$. In practice, this is run using \emph{score estimates} $s_t \approx \nabla \ln q_t$ instead of the actual score functions, and in the theoretical literature it is standard to assume that these are close in $L_2(q_t)$. In this work, we will make a somewhat stronger assumption:

\begin{assumption}[Sub-exponential score error]\label{assumption:score}
    There is some parameter $\epsilon_{\sf err} > 0$ such that the error incurred by the \emph{score estimate} $s_t: \R^d\to\R^d$ has sub-exponential norm at most $\epsilon_{\sf err}$ for all $0 \le t \le T$. That is, $\Pr[t]{\|s_t - \nabla \ln q_t\| \geq z} \le 2\exp(-z/\epsilon_{\sf err})$ for all $y \geq 0$.
\end{assumption}

\noindent We leave as open whether this can be relaxed to the more standard assumption of $L_2$-accurate score estimation but note that even under the assumption of \emph{perfect} score estimation, it was previously not known how to achieve high-accuracy guarantees.

Additionally, instead of initializing the ODE at $y_0 \sim q_0$, one initializes at $y_0 \sim \gamma^d$ for some noise distribution for which $\pi \approx q_0$ and from which it is easy to sample. For instance, in the example of the standard OU process, we can take $\pi = \gamma^d$ because the forward process converges exponentially quickly to $\gamma^d$. As $\pi$ is easy to sample from, the probability flow ODE can be used to (approximately) generate fresh samples from $q$. 

Finally, we further assume the score estimate is Lipschitz.

\begin{assumption}[Lipschitz score estimate]\label{assumption:lipschitz}
    $s_t$ is $\tilde L$-Lipschitz: $\forall x,y\in \mathbb R^d, \ \|s_t(x) - s_t(y)\| \leq \tilde L\|x  - y\|$. 
\end{assumption}

\subsection{A fundamentally different view on discretization}

Traditionally, to simulate the continuous-time ODE in discrete time, some numerical method like Euler-Maruyama discretization is used. Such discretization schemes can only approximately simulate the continuous time ODE by using a fixed gradient in a window of time. For example, to approximate the ODE $x_t = f_t(x_t)dt$ with one gradient step in time window $[0,h]$, the Euler-Maruyama method uses the gradient at the beginning time $0$, resulting in the update $\hat x_{h} = h f_0(\hat x_0) + x_0$,
where $\hat x_h$ denotes the discretized process. Here, $h$ is the step size of the algorithm. Unfortunately, this type of discretization puts a fundamental dimension-dependent limitation on how large the step size can be.

To illustrate the point, consider the vector field $f_t = x$ for all times $t\geq 0$. In this case, the discretized step is given by $\hat x_h = h (1)_{i=1}^d + x_0 = ((1+h))_{i=1}^d$ while the solution of the ODE $x_t = f_t(x_t) dt$ with initial condition $x_0 =(1)_{i=1}^d$ is given by
$
    x_t = (e^t)_{i=1}^d.
$
This implies $\|x_h - \hat x_h\| = \|(e^h - (1+h))_{i=1}^d\| = \Theta(h^2 \sqrt d)$. Therefore, in order to keep the deviation $\|\hat x_h - x_h\|$ bounded by $O(1)$, the step size $h$ has to be $O(1/d^{1/4})$. This is because as we move along the ODE solution $x(t)$, the vector field $f_t(x_t)$ can change drastically from its initial value, so using a fixed vector $f_0(x_0)$ as the velocity does not allow us to go that far. 

However, if we had a prior knowledge about the functional form of $f_t(x_t)$ along the path $(x_t)_{[0,h]}$, perhaps we could use that to take larger steps to simulate the ODE. One approach would be to leverage \emph{higher-order smoothness of $f_t$} and run a higher-order numerical solver; as discussed in Section~\ref{sec:related}, many prior works pursued this approach~\cite{li2023towards,wu2024stochastic,li2024accelerating,huang2025convergence,huang2025fast,li2025faster}, but none of them are known to achieve exponential acceleration in $\epsilon$.

Instead, we opt for a different strategy. Suppose that we knew $f_t(x_t)$ lives in a low-dimensional subspace with basis $\{\phi_j(t)\}_{j=1}^D$, namely $f_t(x_t) = \sum_{i=1}^D c_i \phi_i$ for some coefficients $(c_i)_{i=1}^D \in \mathbb R^D$, then if we could estimate these coefficients, that allows us to predict the path $x_t$ for large times $t$. In this work, we show that perhaps surprisingly, under minimal conditions on the target distribution, each coordinate of the score function on the probability flow ODE path, $f_t(x_t)$, is well-approximated by a low-degree polynomial in $t$. We then prove that one can estimate these coefficients for each coordinate using the \emph{collocation method} together with our score estimates. Notably, this enables us to take longer discretized steps, without any explicit dimension dependency, to follow the ODE path. We emphasize that while the collocation method, i.e. Picard iteration, has been used in prior diffusion model theory work~\cite{gupta2024faster,chen2024accelerating,shih2024parallel}, the key novelty in our approach is to leverage low-degree polynomial approximation.

\subsection{Collocation method}\label{sec:coll}

The \emph{collocation method} is a numerical scheme for approximating the solution to an ordinary differential equation through fixed-point iteration. Here, consider a generic initial value problem: $\mathrm{d}x_t = f_t(x_t)\,\mathrm{d}t$ and $x_0 = v$ for all $t\in[0,H]$.
This admits the integral representation
    $x_t = v + \int^t_0 f_s(x_s)\, \mathrm{d} s\,,$
which can be thought of as the fixed-point solution to the equation $x = \mathcal{T}(x)$ for the operator $\mathcal{T}: \mathcal{C}([0,H], \R^d) \to \mathcal{C}([0,H],\R^d)$ given by $\mathcal{T}(x)_t \triangleq v + \int^t_0 f_s(x_s)\,\mathrm{d}s$ which has a unique fixed point provided there exists $k\in \mathbb N$ such that  $\mathcal{T}^k$ is $L$-Lipschitz with $L < 1$. Authors in~\cite{lee2018algorithmic} proposed to solve this fixed-point equation as follows. Suppose that each coordinate of the time derivative of the solution, i.e. $t\mapsto f_t(x_t)$, is well-approximated by a low-degree polynomial. Letting $\phi_1,\ldots,\phi_D: [0,H]\to\R$ be an appropriately chosen basis of polynomials, by polynomial interpolation we can find nodes $c_1,\ldots,c_D$ such that $\phi_j(c_i) = \mathds{1}[i = j]$ for all $i,j\in[D]$ so that in particular, 
\begin{equation*}
    \frac{\mathrm{d}x_t}{\mathrm{d}t} \approx \sum^D_{j=1} f_{c_j}(x_{c_j})\phi_j(t)\,.
\end{equation*}

Writing this in integral form as before, we arrive at the approximate fixed-point equation:
\begin{equation}
    x \approx \mathcal{T}_\phi(x)\,, \qquad \mathcal{T}_\phi(x)_t \triangleq v + \int^t_0 \sum^D_{j=1} f_{c_j}(x_{c_j})\phi_j(s)\,\mathrm{d}s\,. \label{eq:Picard}
\end{equation}
This suggests a natural algorithm: instead of maintaining the entire continuous-time solution $x:[0,H]\to\R^d$ over the course of fixed-point iteration, simply maintain the values $(x_{c_j})_{j\in[D]}$ and update these according to Eq.~\eqref{eq:Picard}, which amounts to a matrix-vector multiplication. This algorithm is summarized in Algorithm~\ref{alg:picard} below.

\begin{algorithm2e}
\DontPrintSemicolon
\caption{\textsc{Picard}($(f_t)_{t\in[0,H]}, v, (c_j)^D_{j=1} N$)}
\label{alg:picard}
    \KwIn{Vector field $f_t: \R^d\to\R^d$ for $t\in[0,H]$, initial value $v$, number of iterations $N$}
    \KwOut{Approximate solution $\hat{x}_H$ to initial value problem}
    Define $A_\phi\in\R^{D\times D}$ by $(A_\phi)_{i,j} = \int^{c_j}_0 \phi_i(s)\,\mathrm{d}s$\;
    Define $X^{(0)} \in\R^{d\times D}$ to be $v 1^\top_D$, where $1_D$ is the all-ones vector.\;
    \For{$t = 0,\ldots,N-1$}{
        Define $F_c(X^{(t)}) \in \R^{d\times D}$ by $F_c(X^{(t)})_{:,j} = f_{c_j}(X^{(t)}_{:,j})$\;
        $X^{(t+1)}\gets v1_D^\top + F_c(X^{(t)})A_\phi$
        \tcp*{\eqref{eq:Picard}}
    }
    \Return{$v + \int^H_0 \sum^D_{i=1} F_{c_i}(X^{(N)}_{:,i})\phi_i(s)\,\mathrm{d}s$}
\end{algorithm2e} 

\noindent \cite{lee2018algorithmic} give conditions under which collocation with appropriately chosen basis polynomials and nodes converges to a sufficiently accurate solution to the ODE. We will need to adapt their guarantees to our setting to account for score estimation error. The full guarantee will be given in Section~\ref{sec:lowdegree}. We require the following condition for the polynomial basis $\phi$:

\begin{definition}\label{def:bounded}
    We say that $\phi$ is \emph{$\gamma_\phi$-bounded} if, for all basis elements $\phi_j: [0,H]\to\R$ and $0 \leq x \leq H$,
    $\sum^D_{j=1} \Bigl|\int^H_0 \phi_j(s)\,\mathrm{d}s\Bigr| \le \gamma_\phi t$.
\end{definition}

\noindent In this work, concretely, we will take $\phi_j(x) = \widetilde{\phi}_j(2x/H - 1)$, where
\begin{equation*}
    \widetilde{\phi}_j(x) = \frac{\sqrt{1 - c^2_j}\cos(D\cos^{-1} x)}{D(x-c_j)}\,,
\end{equation*}
where $c_j = \cos(\frac{2j-1}{2D}\pi)$ is the $j$-th root of the Chebyshev polynomial of degree $D$.
As shown in the proof of \cite[Lemma 2.9]{lee2018algorithmic}, $\widetilde{\phi}_j$ is a degree-$j$ polynomial satisfying $\widetilde{\phi}_j(c_i) = \mathds{1}[i=j]$; moreover, \cite{lee2018algorithmic} notes that by \cite[Lemma 91]{lee2017geodesic}, 
\begin{equation*}
    \sum_{j=1}^D\Big|\int_{-1}^1 \widetilde{\phi}_j(x) \mathrm{d}x\Big| \,\leq 2000\,.
\end{equation*}
After a change of variable to rescale from $\widetilde{\phi}_j$ to $\phi_j$, we conclude that this basis $(\phi_j)$ is $\gamma_\phi = O(H)$-bounded.

\subsection{Our algorithm}

We are now ready to combine the ingredients in the previous subsections to give a description of our sampler.
\begin{algorithm2e}
\DontPrintSemicolon
\caption{\textsc{CollocationDiffusion}}
\label{alg:main}
    \KwIn{Score estimates $s_t$ satisfying Assumptions~\ref{assumption:score} and~\ref{assumption:lipschitz}, target error $0 < \epsilon < 1$, denoising schedule $0 < t_1 < \cdots < t_n = T$, Picard depth $N$}
    \KwOut{Sample from a distribution $\hat{q}$ satisfying $W_2(\hat{q},q) \le \epsilon$}
    $\hat{x}_{0} \sim \gamma^d$\;
    \For{$k = q,\ldots, n-1,n$}{
        $\hat{x}_{t_{k}} = \text{\sc Picard}((s_t)_{t\in[t_{k-1},t_k]}, \hat{x}_{t_{k-1}}, N)$ 
    }
    \Return{$\hat{x}_{t_n}$}
\end{algorithm2e} 
The algorithm simply splits up the reverse process into a series of small time windows, each of which is of length $\tilde O(\frac{\sigma^2}{R^2})$, and successfully solves the corresponding initial value problems in each of these windows using collocation via {\sc Picard}. There will be some accumulation of errors, as the {\sc Picard} calls do not result in exact solutions, but these errors can be made extremely small because {\sc Picard} solves the relevant ODEs to extremely high precision.

\section{Overview of proof}
\label{sec:lowdegree}

In this section we describe the main ingredients of our proof, deferring proofs to the Appendices. Recall that key to our approach is to prove a structural result showing that the time derivative of the score function along the reverse process can be controlled. 

\subsection{Low-degree approximation along the true reverse process}

Our starting point is to relate time derivatives of the score at a point $y_t$ along the reverse process to higher-order moments of the posterior distribution on the clean sample that would have generated $y_t$ along the forward process.

We begin with the following calculation.

\begin{restatable}{lemma}{vecfieldderiv}\label{lem:vecfieldderivative}
    Suppose $y_t = X_t + \sigma_t \xi$, where $X_t = e^{-(T-t)} \bar X$ for $\bar X\sim \qbase$ and $\xi\sim \gamma^d$. 
    We use the notation $\mathbb E^{t,y_t}$ to denote the conditional expectation of $X_t$ given $y_t = X_t + \sigma_t \xi$, and $q^{t,y_t}$ to denote the density for this posterior. Define the posterior mean $\mu_t(y) \triangleq \E^{t,y}X_t$. 
    
    The time derivative of the vector field of probability flow ODE can be calculated as
    \begin{align*}
        &\partial_{t} (y_{t} + \nabla \log q_t(y_{t}))  = 
        \pa{y_t + \frac{\Etty X_t - y_t}{\sigma_t^2}} \\
        &\qquad+
        \frac{1}{\sigma_t^2}\E^{t,y_t} X_t +
        \frac{1-\sigma_t^2}{2\sigma_t^6}\E^{t,y_t} \pa{\an{y_t,X_t} + \ve{X_t}^2} \pa{X_t - X'_t} + 
        \frac{4}{\sigma_t^4}\E^{t,y_t} \pa{\an{y_t,X_t} - \ve{X}^2}\pa{X_t - X'_t}\\
        &\qquad+\frac{1}{\sigma_t^2}(\frac{1}{\sigma_t^2} - 1)y_t - \frac{1}{\sigma_t^4}\Etty X_t + \frac{1}{\sigma_t^4}\Etty \pa{2\an{y_{t}, \frac{ X''_t}{\sigma_t^2}} - \an{X_t, y_t} - \frac{1}{\sigma_t^2}\an{X_t,X''_t}} (X_t - X'_t).
    \end{align*}
    Here, $X_t, X'_t, X''_t$ denote independent, identically distributed draws from the posterior distribution with expectation operator $\E^{t,y_t}$.
\end{restatable}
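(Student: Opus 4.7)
My plan is to compute the total derivative of $y_t + \nabla \log q_t(y_t)$ along the probability-flow trajectory by the chain rule, and then to re-express every resulting piece as a single posterior expectation over at most three iid copies $X_t, X_t', X_t''$ drawn from the conditional law with operator $\mathbb{E}^{t,y_t}$. Since $\mathrm{d}y_t/\mathrm{d}t = v_t(y_t) := y_t + \nabla \log q_t(y_t)$, the chain rule gives
\begin{align*}
\partial_t\bigl(y_t + \nabla \log q_t(y_t)\bigr) = v_t(y_t) + \bigl(\partial_t \nabla \log q_t\bigr)(y_t) + \nabla^2 \log q_t(y_t)\,v_t(y_t).
\end{align*}
Tweedie's formula $\nabla \log q_t(y) = \sigma_t^{-2}(\mu_t(y) - y)$ identifies the first summand with the first parenthesized term on the right-hand side of the claim, so the task reduces to matching the remaining two summands with the rest of the formula.

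For the Hessian--vector piece I would invoke the second-order Tweedie identity $\nabla^2 \log q_t(y) = \sigma_t^{-4}\,\mathrm{Cov}^{t,y}(X_t) - \sigma_t^{-2}I$, expand $v_t(y_t) = (1-\sigma_t^{-2})y_t + \sigma_t^{-2}\mu_t(y_t)$, and represent $\mu_t(y_t) = \mathbb{E}^{t,y_t}[X_t'']$ via a third independent posterior draw $X_t''$. The two-copy identity $\mathrm{Cov}^{t,y_t}(X_t)\,w = \mathbb{E}^{t,y_t}[\langle X_t, w\rangle(X_t - X_t')]$ with independent copy $X_t'$ then collapses the whole Hessian--vector product into a single expectation against the iid triple, which accounts for the last row of the claim as well as the ``diagonal'' corrections $\sigma_t^{-2}(\sigma_t^{-2}-1)y_t$ and $-\sigma_t^{-4}\mathbb{E}^{t,y_t} X_t$.

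For the explicit time derivative of the score, differentiating Tweedie gives
\begin{align*}
\partial_t \nabla \log q_t(y) = \sigma_t^{-2}\,\partial_t \mu_t(y) - \sigma_t^{-4}(\mu_t(y)-y)\,\partial_t \sigma_t^2,
\end{align*}
with $\partial_t \sigma_t^2 = -2(1-\sigma_t^2)$. I would then compute $\partial_t \mu_t(y)$ by differentiating under the integral via the standard identity
\begin{align*}
\partial_t \mathbb{E}^{t,y}[f(\bar X, t)] = \mathbb{E}^{t,y}[\partial_t f(\bar X, t)] + \mathrm{Cov}^{t,y}\bigl(f(\bar X, t),\,\partial_t \log p(\bar X, y, t)\bigr),
\end{align*}
applied with $f(\bar X, t) = \alpha_t \bar X = X_t$ for $\alpha_t := e^{-(T-t)}$ (so $\partial_t \alpha_t = \alpha_t$). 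The first summand produces the isolated $\sigma_t^{-2}\mathbb{E}^{t,y_t}X_t$ appearing in the claim, while $\partial_t \log p(\bar X, y, t)$, obtained from the Gaussian likelihood $y \mid \bar X \sim \mathcal{N}(\alpha_t \bar X, \sigma_t^2 I)$, is a polynomial in $\langle y, X_t\rangle$ and $\|X_t\|^2$ whose coefficients involve $\sigma_t^{-2}$ and $(1-\sigma_t^2)/\sigma_t^4$; rewriting each of the resulting covariances in the template $\mathbb{E}^{t,y_t}[(\,\cdot\,)(X_t - X_t')]$ yields the two middle expectations of the claim.

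The only genuine obstacle is algebraic bookkeeping: tracking all the $\sigma_t^{-k}$-factors and sign flips generated by $\partial_t \sigma_t^2$ and $\partial_t \alpha_t$, and by expanding the Hessian acting on $v_t(y_t)$. The key organizational insight is that among the three iid posterior copies, $X_t$ and $X_t'$ implement the standard covariance-as-$\mathbb{E}[(\cdot)(X_t - X_t')]$ template, whereas $X_t''$ is a bookkeeping device used solely to rewrite $\mu_t(y_t)$ inside a posterior expectation, so that the Hessian--vector contribution and the two $\partial_t$-contributions can be fused into a single clean line. Justifying differentiation under the integral is routine since $\bar X$ is bounded under Assumption~\ref{assumption:bounded_plus_noise}, hence the posterior has bounded moments of all orders.
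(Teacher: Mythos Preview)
Your proposal is correct and follows essentially the same route as the paper. The paper also decomposes $\partial_t(y_t + \nabla\log q_t(y_t))$ via the chain rule into $v_t(y_t)$, the explicit time derivative of the score at fixed $y$, and the Hessian--vector term; it then computes the latter two pieces separately (Lemmas~\ref{lem:avali} and~\ref{lem:dovomi}) by direct differentiation of the posterior integrals and rewriting the resulting covariances as expectations over iid replicas $X_t,X_t',X_t''$, exactly the template you describe.
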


\noindent Iterating this calculation multiple times, we arrive upon the following key calculation expressing the higher-order derivatives of the vector field of the probability flow ODE:

\begin{restatable}{lemma}{derform}\label{lem:derformula}
    The $k$th derivative of the backward ODE vector field can be expanded as
        \begin{align*}
            &\partial_{t}^k \pa{y_{t} + \nabla \ln q_t(y_{t})}
             \\
             &= \sum_{r_1, r_2, r_3, r_4}\sum_{\ci\in [k]^{r_3}, \ck\in [k]^{r_4}, \cll\in [k]^{r_4}} \frac{\pa{1-\sigma_t^2}^{r_1}}{\sigma_t^{r_2}}\Etty\prod_{j=1}^{r_3} \an{y_t,  X^{(\ci_j)}} \prod_{j=1}^{r_4}\an{ X^{(\ck_j)}, X^{(\cll_j)}} (a_{\ci, \ck, \cll} X + b_{\ci, \ck, \cll} y_t)
        \end{align*}
        where $r_1,r_2,r_3 \leq k$ and $r_2 \leq 4k$ and the coefficients $(a_{\ci,\ck,\cll}, b_{\ci,\ck,\cll})$ satisfy
            $\sum_{\ci\in [k]^{r_3}, \ck\in [k]^{r_4}, \cll\in [k]^{r_4}} \big|a_{\ci, \ck, \cll}\big| + \big|b_{\ci, \ck, \cll}\big| \leq (74k)^k\,.$
        Here the random variables $X^{(i)}$ are independently distributed according to the posterior distribution $q^{t,y}$, and $\ci, \ck, \cll$ are arbitrary tuples of indices.
\end{restatable}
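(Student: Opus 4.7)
My plan is to prove Lemma~\ref{lem:derformula} by induction on $k$, differentiating the template expansion once more at each stage. For the base case $k=1$, I would read off Lemma~\ref{lem:vecfieldderivative}: each of its six summands is an instance of the template with $r_1, r_3, r_4 \le 1$ and $r_2 \le 4$, and tallying the absolute values of coefficients shows $\sum(|a|+|b|)$ is comfortably under $74$. For the inductive step, I would start from a generic summand in the expansion of $\partial_t^k$,
\[
\frac{(1-\sigma_t^2)^{r_1}}{\sigma_t^{r_2}}\cdot\mathbb{E}^{t,y_t}\!\Bigl[\prod_{j=1}^{r_3} \langle y_t, X^{(\mathbf{i}_j)}\rangle \prod_{j=1}^{r_4} \langle X^{(\mathbf{k}_j)}, X^{(\mathbf{l}_j)}\rangle\,(a_{\mathbf{i},\mathbf{k},\mathbf{l}} X + b_{\mathbf{i},\mathbf{k},\mathbf{l}} y_t)\Bigr],
\]
and apply $\partial_t$ via the product rule, relying on three differentiation ingredients that each preserve the template form.

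The three ingredients are: (i) For the scalar prefactor, use $\partial_t \sigma_t^2 = -2(1-\sigma_t^2)$ (from $\sigma_t^2 = 1 - e^{-2(T-t)}$) to get $\partial_t[(1-\sigma_t^2)^{r_1}/\sigma_t^{r_2}] = 2r_1(1-\sigma_t^2)^{r_1}/\sigma_t^{r_2} + r_2(1-\sigma_t^2)^{r_1+1}/\sigma_t^{r_2+2}$, which stays in template and increments $(r_1, r_2)$ by at most $(1,2)$. (ii) For each occurrence of $y_t$ inside the expectation and in the trailing $(aX + by_t)$, substitute $\partial_t y_t = y_t + \nabla \ln q_t(y_t) = -(1-\sigma_t^2)\sigma_t^{-2}\, y_t + \sigma_t^{-2}\,\mathbb{E}^{t,y_t}[X_t]$ by Tweedie's identity, which is already in template form and introduces at most one new posterior sample per differentiated $y_t$. (iii) For the posterior expectation itself, apply
\[
\partial_t\mathbb{E}^{t,y_t}[F] = \mathbb{E}^{t,y_t}[\partial_t F] + \mathbb{E}^{t,y_t}\bigl[F \cdot \partial_t \log q^{t,y_t}(X)\bigr],
\]
where the first term is handled recursively by rules (i)--(ii), and for the second, writing $\log q^{t,y_t}(x) = \log p_{X_t}(x) - \|y_t-x\|^2/(2\sigma_t^2) + \mathrm{const}(t, y_t)$ and differentiating in $t$ yields a polynomial in $x$ and $y_t$. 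Any unwanted $\nabla \log p_{X_t}$ factor coming from $\partial_t \log p_{X_t}$ can be eliminated via $\nabla \log p_{X_t}(x) = \nabla \log q^{t,y_t}(x) + (y_t-x)/\sigma_t^2$ followed by Stein-type integration by parts against $q^{t,y_t}$, leaving only polynomial factors in $\langle y_t, X\rangle$, $\|X\|^2$, $\|y_t\|^2$ with the required scalar prefactors. Together, these rules keep the expansion inside the template and push the parameters to $r_1 \le k+1$, $r_2 \le 4(k+1)$, $r_3, r_4 \le k+1$.

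The hard part will be the coefficient bound $\sum(|a|+|b|) \le (74k)^k$. The structural induction is essentially bookkeeping given the three rules above, but showing that the per-step multiplicative blow-up stays below the available budget $(74(k+1))^{k+1}/(74k)^k = 74(k+1)(1+1/k)^k$, which tends to $74\,e\,(k+1)$ from below, requires care. Rule (i) contributes at most $r_1 + r_2 \le 5k$ units of coefficient weight per scalar differentiation; rule (ii) contributes at most $r_3 + 1 \le k+1$ substitutions each with $O(1)$ overhead; rule (iii) contributes $O(1)$ new monomials per integration-by-parts with $O(1)$ coefficients. I would consolidate like monomials after every step to keep the tally tight, and verify that $74$ provides sufficient slack to absorb all three sources of per-step loss uniformly in $k$. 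This careful calibration is what dictates the specific numerical constant $74$; a looser bound of the form $(Ck)^k$ for some large absolute $C$ would follow from the same argument with cruder estimates.
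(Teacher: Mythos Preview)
Your inductive scheme and rules (i) and (ii) match the paper's proof closely. The substantive difference is in rule (iii). You differentiate the posterior expectation by writing $q^{t,y_t}(x)$ as a density in the variable $x=x_t$, which forces a $\partial_t \log p_{X_t}(x)$ term and then a Stein integration-by-parts to eliminate the resulting $\nabla\log p_{X_t}$. The paper avoids this entirely: it parametrizes the posterior by the \emph{pre-noise} variable $\bar{x}$ (so the base measure is the fixed $q_{\mathrm{base}}(d\bar{x})$, which need not even have a density), and the only $t$-dependence lives in the likelihood weight $e^{-\|y_t - e^{-(T-t)}\bar{x}\|^2/(2\sigma_t^2)}$. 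Differentiating this weight produces exactly the covariance-type expressions already computed in Lemma~\ref{lem:firstderiv}, and those stay in template form without any integration by parts. Your detour works in principle (at least when $q_{\mathrm{base}}$ is smooth), but it adds a layer of bookkeeping and a mild regularity assumption that the paper's route sidesteps.

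This also explains why you found the constant $74$ hard to pin down while the paper can just read it off. In the paper's organization, differentiating any single $y_t$ or $X_t$ factor reproduces the first-derivative formula of Lemma~\ref{lem:vecfieldderivative}, which contributes at most $34$ monomials (counted with multiplicity of coefficients); with at most $2k$ such factors and the prefactor contributing $r_2+2r_1\le 6k$, one step multiplies the term count by at most $(2\cdot 34+6)k=74k$, whence $(74k)^k$ after $k$ steps. Your rule (iii) via Stein introduces extra terms from the divergence $\operatorname{div}(F(x)\,x)$ whose count you would still need to track; switching to the $\bar{x}$-parametrization eliminates that step and makes the arithmetic mechanical.
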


\noindent This result can be understood in the same spirit as Tweedie's formula, which relates the posterior mean of $X$ to the score function. Higher-order generalizations of Tweedie's formula in the literature~\cite{meng2021estimating} typically relate moments of the posterior to derivatives of the score \emph{with respect to the space variable}, whereas in Lemma~\ref{lem:derformula} we consider derivatives with respect to the \emph{time variable}.

An important feature of our technique is to bound this multiple integration formula for higher derivatives of the score independent of the ambient dimension $d$, which will be crucial for obtaining iteration complexity bounds that only depend on the effective radius $R$ of the distribution. 
More precisely, the bound on the effective radius allows us to bound expectations of the form $\E[\prod^{r_4}_{j=1} \langle X^{(\ck_j)}, X^{(\cll_j)}\rangle]$ pointwise and expectations of the form $\E[\prod^{r_3}_{j=1} \langle y, X^{(\ci_i)}\rangle]$ with high probability by dimension-independent quantities, from which we can deduce bounds on the left-hand side of Lemma~\ref{lem:derformula} via applications of Cauchy-Schwarz (see Lemma~\ref{lem:pnormbounds}).
Leveraging these formulas, we arrive at our first key technical ingredient, a dimension-free bound on the higher-order time derivatives of the score: 
\begin{restatable}{lemma}{bndlem}
\label{lem:bnd1}
If $q$ satisfies Assumption~\ref{assumption:bounded_plus_noise}, $(y_t)_t$ is a solution to~\eqref{eq:backwardode}, and $y_0 \sim q_T$, then the $k^{\text{th}}$ derivative of the vector field along the probability flow ODE is bounded as
        \begin{align*}
             \left\|\partial_{t}^k \pa{y_t + \nabla \log q_{t}(y_t)}\right\|_{p, \infty} \lesssim R \pa{\frac{74k}{\sigma_t^2}}^k\pa{\frac{R}{\sigma_t} + (kp)^{1/2}}^{2k}
        \end{align*}
\end{restatable}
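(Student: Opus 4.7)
The plan is to start from the explicit expansion of $\partial_t^k(y_t+\nabla\log q_t(y_t))$ provided by Lemma~\ref{lem:derformula}, bound the coordinate-wise $L^p$-norm of each summand, and assemble via the triangle inequality together with the coefficient budget. Each summand has the form
\[
\frac{(1-\sigma_t^2)^{r_1}}{\sigma_t^{r_2}}\,\Etty\!\Bigl[\prod_{j=1}^{r_3}\langle y_t,X^{(\ci_j)}\rangle \prod_{j=1}^{r_4}\langle X^{(\ck_j)},X^{(\cll_j)}\rangle\,(a X + b y_t)\Bigr],
\]
with $r_1,r_3\le k$, $r_2\le 4k$, and $\sum_{\ci,\ck,\cll}(|a|+|b|)\le (74k)^k$. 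The prefactor is controlled deterministically by $|1-\sigma_t^2|\le 1$ and $\sigma_t^{-r_2}\le \sigma_t^{-4k}$, which together with the coefficient mass bound produces the outer $(74k/\sigma_t^2)^k$ factor (absorbing extra powers of $\sigma_t^{-1}$ into the $R/\sigma_t$ part of the bound to come).

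For the expectation itself I would use Jensen to bring absolute values inside $\Etty$ and then control the integrand with three ingredients. (i) The posterior is supported in the ball of radius $R$, so $|\langle X^{(\ck_j)},X^{(\cll_j)}\rangle|\le R^2$ holds pointwise and contributes $R^{2r_4}$. (ii) Coordinate-wise, $|(aX+by_t)_i|\le |a|R+|b|\,|(y_t)_i|$, and since $(y_t)_i=(X_t)_i+\sigma_t\xi_i$ with $|(X_t)_i|\le R$ and $\xi_i\sim N(0,1)$, one has the dimension-free Gaussian-moment bound $\|(y_t)_i\|_p\le R+O(\sigma_t\sqrt{p})$. (iii) For each $j$, I would establish an $L^{pr_3}$-bound
\[
\|\langle y_t, X^{(\ci_j)}\rangle\|_{pr_3}\lesssim R^2+\sigma_t R\sqrt{pr_3}
\]
by splitting $\langle y_t,X^{(\ci_j)}\rangle = \langle X_t,X^{(\ci_j)}\rangle + \sigma_t\langle \xi,X^{(\ci_j)}\rangle$, bounding the first term pointwise by $R^2$, and bounding the Gaussian-like second term in $L^{pr_3}$. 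H\"older's inequality then gives $\|\prod_j\langle y_t, X^{(\ci_j)}\rangle\|_p\le \prod_j\|\langle y_t, X^{(\ci_j)}\rangle\|_{pr_3}\lesssim (R^2+\sigma_t R\sqrt{pk})^{r_3}$. Combining (i)--(iii) with the prefactor and summing against the coefficient budget $(74k)^k$, and using $r_3,r_4\le k$, yields exactly the claimed estimate $R(74k/\sigma_t^2)^k(R/\sigma_t+(kp)^{1/2})^{2k}$.

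\paragraph{Main obstacle.}
The crux is ingredient (iii): showing that $\|\langle \xi,X^{(\ci_j)}\rangle\|_{pr_3}$ is dimension-free despite the fact that $\xi$ and the posterior sample $X^{(\ci_j)}$ are \emph{not} independent (both depend on $y_t$). Naive Cauchy--Schwarz gives $|\langle \xi,X^{(\ci_j)}\rangle|\le\|\xi\|\,R$, whose $L^{pr_3}$-norm is $\Theta(R\sqrt{d+pr_3})$ and thus injects a spurious $\sqrt{d}$. Avoiding this requires exploiting the conditional-independence $\xi\perp X^{(\ci_j)}\mid y_t$, the almost-sure bound $\|X^{(\ci_j)}\|\le R$, and a second-moment identity of the form $\tr(\Etty[X X^\top])\le R^2$, so that the effective variance proxy is $R^2$ rather than $R^2 d$. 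This careful moment calculation is packaged as the auxiliary Lemma~\ref{lem:pnormbounds} cited in the excerpt; it is the sole source of dimension-free control in the argument, and once granted, the remainder of the proof is a mechanical combination of the bounds above via H\"older's inequality and the coefficient bound from Lemma~\ref{lem:derformula}.
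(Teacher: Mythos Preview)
Your plan is correct and mirrors the paper's proof, which simply states that Lemma~\ref{lem:bnd1} follows from combining the expansion in Lemma~\ref{lem:derformula} with the term-wise bound of Lemma~\ref{lem:pnormbounds} and summing against the $(74k)^k$ coefficient budget. One remark on your ``main obstacle'': the conditional-independence route you sketch is harder than needed, because the conditional law of $\xi$ given $y_t$ is no longer Gaussian; the cleaner observation (underlying the paper's citation of Lemma~4.4 in~\cite{gatmiry2024learning}) is that marginally $(y_t,X^{(\ci_j)})\stackrel{d}{=}(y_t,X_t)$, so after your H\"older step each factor satisfies $\|\langle y_t,X^{(\ci_j)}\rangle\|_{q}=\|\,\|X_t\|^2+\sigma_t\langle\xi,X_t\rangle\,\|_{q}$ with $\xi\perp X_t$ \emph{unconditionally}, whence the dimension-free estimate $R^2+O(\sigma_t R\sqrt{q})$ is immediate.
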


\noindent Note that these bounds are singly exponential in the order of the derivative and scale with $(R/\sigma_t)^k$.  If these derivative bounds held pointwise, then we could simply appeal to the Taylor remainder theorem to argue that the true score is well-approximated by a low-degree polynomial. But ultimately Lemma~\ref{lem:bnd1} only offers a high-probability statement over the distribution induced by the true reverse process. We will need to handle deviations from the true reverse process, which arise because the sampler is initialized at Gaussian instead of the correct marginal, and because in each subsequent window in which we apply collocation, the initialization is also slightly off from the correct marginal due to discretization errors accumulated from previous windows. Next, we describe how to deal with these deviations.

\subsection{Low-degree approximation along the algorithm}

To argue about low-degree approximation along the trajectory of the actual sampler, we will need to argue that the posterior distribution from Lemma~\ref{lem:derformula} is robust to perturbations to the conditioning.

Our key tool for doing so is the following coupling lemma:

\begin{restatable}[Coupling]{lemma}{couplinglemma}\label{lem:klboundlemma-main}
Let $\tilde y$ a random variable such that $\ve{y - \tilde y} \leq \delta$ for $\delta \leq \frac{1}{6(\sqrt d + \sqrt{\ln(1/\epsilon_1)})}$. 
    Then, given $T - t \geq 1$, under the event $\mathcal A_1$ defined in Equation~\eqref{eq:a1event}, we have
    \begin{equation}
    \label{eq:wpeps}
        \mathrm{TV}(q^{t,y}, q^{t,\tilde{y}}) \leq 
        8\delta (\sqrt d + \sqrt{\ln(1/\epsilon_1)})\,.
    \end{equation}
    In particular, Eq.~\eqref{eq:wpeps} holds with probability at least $1-\varepsilon_1$.
\end{restatable}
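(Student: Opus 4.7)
The plan is to bound the TV distance via an interpolation along the segment $y_s := (1-s)\tilde y + sy$ for $s\in[0,1]$, and then to leverage the fact that the posterior $q^{t,y}(x)$ depends smoothly on its conditioning argument. By Bayes' rule, $q^{t,y}(x)\propto q_0(x)\,\phi_{\sigma_t^2 I}(y-x)$ where $q_0$ is the marginal density of $X_t$ at time $t$. Differentiating in $y$ and applying Tweedie's formula $\nabla_y \log q_t(y) = (\mu_t(y)-y)/\sigma_t^2$ yields the clean expression
\[
\nabla_y q^{t,y}(x) \;=\; \frac{q^{t,y}(x)\,(x - \mu_t(y))}{\sigma_t^2}.
\]
Writing $q^{t,y}(x) - q^{t,\tilde y}(x) = \int_0^1 \langle \nabla_y q^{t,y_s}(x),\, y-\tilde y\rangle\, ds$, taking absolute values, integrating in $x$, and applying Cauchy--Schwarz inside the expectation yields
\[
\mathrm{TV}(q^{t,y}, q^{t,\tilde y}) \;\le\; \frac{\delta}{2\sigma_t^2} \sup_{s\in[0,1]} \mathbb E_{x\sim q^{t,y_s}}\|x - \mu_t(y_s)\|.
\]

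The next step is to argue that on the event $\mathcal A_1$, $\mathbb E_{x \sim q^{t,y_s}}\|x - \mu_t(y_s)\| \lesssim \sigma_t(\sqrt d + \sqrt{\ln(1/\varepsilon_1)})$. The event $\mathcal A_1$ (holding with probability $\ge 1-\varepsilon_1$) should be read as a typicality event on $y$, essentially asserting that the implicit Gaussian noise $\xi := (y-X_t)/\sigma_t$ obeys the standard Gaussian concentration $\|\xi\|=O(\sqrt d+\sqrt{\ln(1/\varepsilon_1)})$. Since $\|y-y_s\|\le \delta$ is tiny by hypothesis, $y_s$ sits in the same typical regime, and the Gaussian likelihood factor $e^{-\|x-y_s\|^2/(2\sigma_t^2)}$ of curvature $1/\sigma_t^2$ effectively concentrates the posterior $q^{t,y_s}$ in a sub-Gaussian neighborhood of its mean of width $\sigma_t$ per coordinate, yielding the claimed moment bound by Jensen's inequality applied to $\text{tr}(\mathrm{Cov}^{t,y_s}(X_t))$. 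Combining with the lower bound $\sigma_t^2 \ge 1-e^{-2}$ (from $T-t\ge 1$, so that $1/\sigma_t^2 = O(1)$) produces $\mathrm{TV}(q^{t,y},q^{t,\tilde y})\le 8\delta(\sqrt d + \sqrt{\ln(1/\varepsilon_1)})$.

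The main obstacle is the posterior concentration step. In full generality, the bounded-support assumption yields only the trivial $\mathbb E\|x - \mu_t(y_s)\|\le R$, giving a dimension-free but loose bound $\delta R/\sigma_t^2$ that can be weaker than the claim when $R\gg\sqrt d$. To obtain the sharper dimension-dependent estimate, one must use that in the typical regime captured by $\mathcal A_1$ the Gaussian likelihood dominates the compactly-supported prior, so the posterior behaves like a truncated Gaussian with variance at most $\sigma_t^2$ per coordinate and hence trace $O(\sigma_t^2(d+\ln(1/\varepsilon_1)))$; the $\sqrt{\ln(1/\varepsilon_1)}$ summand arises from the tail of the Gaussian concentration of $\|\xi\|$ that defines $\mathcal A_1$. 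Once this posterior concentration under $\mathcal A_1$ is in hand, the final ``in particular'' statement is immediate, since $\mathcal A_1$ has probability at least $1-\varepsilon_1$ by construction.
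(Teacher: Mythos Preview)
Your interpolation identity $\nabla_y q^{t,y}(x)=q^{t,y}(x)(x-\mu_t(y))/\sigma_t^2$ and the resulting bound
\[
\mathrm{TV}(q^{t,y},q^{t,\tilde y})\le \frac{\delta}{2\sigma_t^2}\sup_{s\in[0,1]}\mathbb E_{x\sim q^{t,y_s}}\|x-\mu_t(y_s)\|
\]
are both correct. The gap, which you yourself flag as the main obstacle, is the posterior concentration step: the heuristic that ``the Gaussian likelihood dominates the compactly-supported prior, so the posterior behaves like a truncated Gaussian with variance at most $\sigma_t^2$ per coordinate'' would follow from something like Brascamp--Lieb if the prior on $X_t$ were log-concave, but Assumption~\ref{assumption:bounded_plus_noise} imposes no such structure on $\qbase$. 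The event $\mathcal A_1$ only constrains the particular noise realization $\xi=(y-X_t)/\sigma_t$; it says nothing about how $q^{t,y}$ distributes mass over \emph{other} points in the support. For a multimodal $\qbase$ one can arrange priors so that, even under $\mathcal A_1$, the posterior spread is of order the support diameter $R e^{-(T-t)}$ rather than $\sigma_t(\sqrt d+\sqrt{\ln(1/\epsilon_1)})$. So your route, made rigorous, yields only the diameter bound $\mathrm{TV}\lesssim \delta R\sqrt{1-\sigma_t^2}/\sigma_t^2$, not the stated inequality.

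The paper takes a different and more elementary route that sidesteps posterior moments entirely. It bounds the \emph{likelihood ratio} pointwise: since $p(y\mid x)$ is Gaussian,
\[
\Bigl|\ln\frac{p(y\mid x)}{p(\tilde y\mid x)}\Bigr|=\frac{1}{2\sigma_t^2}\bigl|\|y-x\|^2-\|\tilde y-x\|^2\bigr|\le\frac{1}{\sigma_t^2}\Bigl(\delta\|y-x\|+\tfrac12\delta^2\Bigr),
\]
and under $\mathcal A_1$ together with $\sigma_t^2\ge 1-e^{-2}\ge 1/2$ (from $T-t\ge 1$) the paper bounds this by $3\delta(\sqrt d+\sqrt{\ln(1/\epsilon_1)})$. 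This multiplicative control on $p(\tilde y\mid x)/p(y\mid x)$ propagates through Bayes' rule to a uniform bound on $q^{t,\tilde y}(x)/q^{t,y}(x)$, from which the TV bound is immediate. The essential contrast is that the paper works with the density ratio directly rather than with a posterior second moment, so no concentration argument is needed.
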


\noindent Combining Lemmas~\ref{lem:bnd1} and~\ref{lem:klboundlemma-main}, we obtain a bound on higher time derivatives of the score along the reverse process when initialized at a distribution that is slightly off from the true marginal:

\begin{restatable}{theorem}{derivativethm}
\label{thm:derivative}
        Let $\delta \in (0,1)$. If $q$ satisfies Assumption~\ref{assumption:bounded_plus_noise}, 
        and $(\tilde{y}_t)_t$ is given by initializing the reverse process at a distribution which is 
        $\delta$-close in $W_2$ to $q_0$ and running the probability flow ODE in continuous time. Then, for $\delta$ small enough (as rigorously characterized in Lemma~\ref{lem:derivativeboundh}), with probability at least $1 - \delta_2$ over the randomness of the initialization, the $k^{\text{th}}$ derivative of the vector field along the probability flow ODE is bounded at $\tilde{y}_t$ by
         \begin{align*}
             \left\| \partial_{t}^k \pa{\tilde{y}_t + \nabla \log q_{t}(\tilde{y}_t)}\right\|_{\infty} \lesssim R \pa{\frac{k}{\sigma_{t}^2}}^k\pa{\frac{R}{\sigma_{t}}}^{2k}+  \pa{k\log(74kd/\delta_2)}^{2k}\,.
        \end{align*}
    \end{restatable}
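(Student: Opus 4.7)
The plan is to transfer the bound from Lemma~\ref{lem:bnd1}, which applies along a \emph{true} reverse-process trajectory, to the perturbed trajectory $\tilde y_t$ via a coupling argument. First, I would couple $\tilde y_0$ with $y_0\sim q_0$ using the optimal $W_2$-coupling, so that $\E\|\tilde y_0-y_0\|^2\le\delta^2$; by Markov's inequality, with probability at least $1-\delta_2/3$ one has $\|\tilde y_0-y_0\|\le\delta\sqrt{3/\delta_2}$. Running the same probability flow ODE from both initial points and applying Gr\"onwall's inequality with the Lipschitz constant of the drift $y\mapsto y+\nabla\log q_t(y)$ (which under Assumption~\ref{assumption:bounded_plus_noise} is bounded in terms of $R/\sigma$ via Tweedie's formula), I deduce $\|\tilde y_t-y_t\|\le\delta'$ for a $\delta'$ which, under the smallness condition on $\delta$ imposed by Lemma~\ref{lem:derivativeboundh}, is small enough for Lemma~\ref{lem:klboundlemma-main} to yield a tiny bound
\[
\mathrm{TV}(q^{t,y_t},q^{t,\tilde y_t})\le 8\delta'\bigl(\sqrt d+\sqrt{\log(1/\delta_2)}\bigr).
\]

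Second, I would invoke Lemma~\ref{lem:bnd1} with moment order $p=\Theta(\log(kd/\delta_2))$. Markov's inequality at this level converts the $\|\cdot\|_{p,\infty}$ bound into a coordinate-wise tail bound with failure probability $2^{-p}\le\delta_2/(3d)$; a union bound over the $d$ coordinates then gives, with probability at least $1-\delta_2/3$,
\[
\bigl\|\partial_t^k(y_t+\nabla\log q_t(y_t))\bigr\|_\infty\lesssim R\pa{\tfrac{74k}{\sigma_t^2}}^k\pa{\tfrac{R}{\sigma_t}+\sqrt{k\log(kd/\delta_2)}}^{2k}.
\]
Splitting $(a+b)^{2k}\le 2^{2k}(a^{2k}+b^{2k})$ separates this into two contributions matching the two terms on the right-hand side of the theorem, absorbing the $\sigma_t^{-2k}$ prefactor via $\sigma_t\ge\sigma$ and the extra factors of $k$ into the logarithm argument.

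Third, I would transfer the bound from $y_t$ to $\tilde y_t$ by evaluating the formula of Lemma~\ref{lem:derformula} at both points and controlling the difference term by term. Each term changes in two ways: (a) the explicit $y_t$-dependence in the coefficient $(aX+by_t)$ and in the inner products $\langle y_t,X^{(\cdot)}\rangle$ contributes at most $O(R\|y_t-\tilde y_t\|)$ per factor; and (b) the posterior expectation $\Etty\to\E^{t,\tilde y_t}$ changes by at most $\|f\|_\infty\,\mathrm{TV}(q^{t,y_t},q^{t,\tilde y_t})$, where the integrand $f$ is bounded pointwise by polynomial factors of $R$ and $\|\tilde y_t\|_\infty\lesssim R+\sigma_t\sqrt{\log(d/\delta_2)}$, the latter holding with another $\delta_2/3$ failure probability from Gaussian tails on the initialization coupled through the ODE. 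Choosing $\delta$ small enough that $\delta'(\sqrt d+\sqrt{\log(1/\delta_2)})(74k)^k$ is dominated by the stated right-hand side --- which is precisely the condition enforced in Lemma~\ref{lem:derivativeboundh} --- makes the perturbation negligible and lets the bound carry over with total failure probability $\le\delta_2$.

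The main obstacle is the transfer step: Lemma~\ref{lem:derformula} is a sum of $(74k)^k$ terms, each depending on $y_t$ both through the posterior conditioning and through polynomial factors of degree $O(k)$. Showing that the entire sum is Lipschitz in $y_t$ with a constant that can be absorbed once $\delta$ is polynomially small --- the precise quantitative form of which is the content of Lemma~\ref{lem:derivativeboundh} --- is the key technical difficulty, and it is here that the interplay between the coupling lemma, the bounded support assumption, and the Gaussian-tail bound on $\|\tilde y_t\|_\infty$ does the essential work to avoid a $\sqrt d$ factor and recover the $\log d$ scaling on the right-hand side.
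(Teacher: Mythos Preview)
Your overall architecture is right --- couple $\tilde y_t$ to a true trajectory $y_t$, invoke Lemma~\ref{lem:bnd1} with $p\asymp\log(kd/\delta_2)$ and Markov to get a high-probability bound at $y_t$, then transfer --- and this matches the paper's strategy in spirit. The gap is in your transfer step~(b).

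You claim that the change in the posterior expectation is at most $\|f\|_\infty\cdot\mathrm{TV}(q^{t,y_t},q^{t,\tilde y_t})$, with $\|f\|_\infty$ controlled by polynomial factors in $R$ and $\|\tilde y_t\|_\infty$. But the integrand $f$ in Lemma~\ref{lem:derformula} contains up to $r_3\le k$ factors of the form $\langle y_t, X^{(\cdot)}\rangle$, and a pointwise bound on such an inner product with $\|X\|\le R$ requires the \emph{Euclidean} norm $\|y_t\|$, not the coordinate-wise maximum. Since $\|y_t\|\asymp R+\sigma_t\sqrt d$ with high probability, your $\|f\|_\infty$ is of order $(R\sqrt d)^{r_3}R^{2r_4}$, so the perturbation bound picks up a $(\sqrt d)^k$ factor on top of the $\delta'\sqrt d$ from $\mathrm{TV}$. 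To absorb this you would need $\delta'\lesssim d^{-\Theta(k)}$, which is strictly stronger than the condition in Lemma~\ref{lem:derivativeboundh} (which is only polynomially small in $d$, with exponent $\sim p=\ln(1/\epsilon_1)/k$, not $k$). So the argument as written does not yield the stated $\log d$ scaling under the stated hypothesis on $\delta$.

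The paper sidesteps this by \emph{not} bounding the difference. Instead, it bounds the $L^p$-norm (in the randomness of $y$) of the derivative \emph{evaluated at $\tilde y$} directly: using the maximal coupling between $q^{t,y}$ and $q^{t,\tilde y}$, it decomposes over subsets $\mathcal U\subseteq[r_3]$ of indices whose couplings succeed (Lemma~\ref{lem:parallelodebound}). On $\mathcal U$, the $\tilde X$'s equal the $X$'s and the good moment bound of Lemma~\ref{lem:momentwithouttilde} applies; off $\mathcal U$, one does use the crude $\|\tilde y\|\lesssim\sqrt d$ bound, but it is multiplied by the coupling-failure probability $(\delta\sqrt d)^{r_3-|\mathcal U|}$, and the $\delta$ condition is calibrated exactly so that this product is dominated by the good term. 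This cancellation --- not a coordinate-wise bound on $\|\tilde y_t\|_\infty$ --- is what removes the $(\sqrt d)^k$ and is the missing idea in your step~(b).
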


\noindent This theorem provides the theoretical underpinning for our approach. Because it is a high probability bound over iterates of a process initialized at the correct intermediate points of the true sampler, we can now instantiate the aforementioned Taylor truncation argument to get the desired low-degree approximation along the trajectory of the sampler itself, rather than of the idealized reverse process.
We will use this ingredient in conjunction with the technology in~\cite{lee2018algorithmic} to prove the following convergence bound for {\sc Picard} (Algorithm~\ref{alg:picard}), recalling the notations from Section~\ref{sec:coll}:

\begin{proposition}[Informal, see Appendix]\label{cor:contractionprops}
    Suppose {\sc Picard} is run using a $\gamma_\phi$-bounded polynomial basis $\phi$ with nodes $(c_j)$. For any $h \le \frac{1}{2\Lt \gamma_\phi}$, for the true probability flow ODE $(y_t)$ and arbitrary $x \in \mathcal C([t_0,t_0 + h], \mathbb R^d)$,
    \begin{align}
        &\ve{T_\phi^{\circ m} (y) - y}_{[t_0, t_0+h]} \leq 2\Bigl(\lowdegerr + \max_{1\le j \le D}\ve{s_{c_j}(y(c_j)) - y(c_j) - \nabla \log q_{c_j}(y(c_j))}\Bigr)\pa{1 + \gamma_\phi} h \label{eq:approxfixed}\\
        &\|T_\phi^{\circ m}(x) - T_\phi^{\circ m}(y)\|_{[t_0,t_0 + h]} \leq 
        \frac{1}{2^m} \ve{x-y}_{[t_0,t_0 + h]}\,, \label{eq:fixed_lipschitz}
    \end{align}
    where here $\|\cdot\|_{[a,b]}$ denotes the sup norm over the interval $[a,b]$, and $\lowdegerr$ denotes the approximation error (in sup norm) incurred by approximating the score function along the probability flow ODE with a polynomial in the span of $\phi$. 
\end{proposition}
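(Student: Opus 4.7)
The plan is to establish both bounds from a single ingredient — the $\gamma_\phi$-boundedness of $\phi$, which converts sums of the form $\sum_j a_j\int_{t_0}^{t}\phi_j(s)\,\mathrm{d}s$ into pointwise estimates of the form $\|a\|_\infty\cdot O(\gamma_\phi h)$. I would prove the contraction bound~\eqref{eq:fixed_lipschitz} first, since it will then allow the one-step version of~\eqref{eq:approxfixed} to be telescoped into the full statement.

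\textbf{Contraction.} Unfolding the definition,
\[
\bigl(\mathcal{T}_\phi(x)-\mathcal{T}_\phi(y)\bigr)_{t}
= \int_{t_{0}}^{t}\sum_{j=1}^{D}\bigl(s_{c_{j}}(x_{c_{j}})-s_{c_{j}}(y_{c_{j}})\bigr)\phi_{j}(s)\,\mathrm{d}s.
\]
Because Assumption~\ref{assumption:lipschitz} makes $s_{c_{j}}$ (and hence the full vector field) $\Lt$-Lipschitz, moving norms inside the integral gives
\[
\bigl\|(\mathcal{T}_\phi(x)-\mathcal{T}_\phi(y))_{t}\bigr\|
\le \Lt\,\|x-y\|_{[t_{0},t_{0}+h]}\sum_{j=1}^{D}\Bigl|\int_{t_{0}}^{t}\phi_{j}(s)\,\mathrm{d}s\Bigr|
\le \Lt\,\gamma_\phi\,h\,\|x-y\|_{[t_{0},t_{0}+h]},
\]
and the hypothesis $h\le 1/(2\Lt\gamma_\phi)$ makes the one-step Lipschitz constant at most $1/2$. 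Iterating $m$ times yields~\eqref{eq:fixed_lipschitz}.

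\textbf{Approximation.} Writing $v_{t}(y):=y+\nabla\log q_{t}(y)$ for the true vector field, I would first prove the one-step estimate
\[
\|\mathcal{T}_\phi(y)-y\|_{[t_{0},t_{0}+h]}\le \bigl(\lowdegerr+\max_{j}\|s_{c_{j}}(y_{c_{j}})-v_{c_{j}}(y_{c_{j}})\|\bigr)(1+\gamma_\phi)h,
\]
and then upgrade to $m$ iterations by telescoping. Because $y$ is the true ODE solution, $(\mathcal{T}_\phi(y)-y)_{t}=\int_{t_{0}}^{t}[\sum_{j}s_{c_{j}}(y_{c_{j}})\phi_{j}(s)-v_{s}(y_{s})]\,\mathrm{d}s$, and I would split the integrand into a \emph{node score error} term $\sum_{j}(s_{c_{j}}(y_{c_{j}})-v_{c_{j}}(y_{c_{j}}))\phi_{j}(s)$ and a \emph{polynomial interpolation error} term $\sum_{j}v_{c_{j}}(y_{c_{j}})\phi_{j}(s)-v_{s}(y_{s})$. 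The first integrates in norm to at most $\max_{j}\|\mathrm{err}_{j}\|\cdot\gamma_\phi h$ by $\gamma_\phi$-boundedness. For the second, I would pick $p^{*}\in\mathrm{span}(\phi)$ achieving $\|p^{*}-v_{(\cdot)}(y_{(\cdot)})\|_{\infty}=\lowdegerr$; since $\phi_{j}(c_{i})=\mathds{1}[i=j]$, the Lagrange interpolant of any element of $\mathrm{span}(\phi)$ equals that element, so $\sum_{j}p^{*}(c_{j})\phi_{j}\equiv p^{*}$ and the interpolation error rewrites as $\sum_{j}(v_{c_{j}}(y_{c_{j}})-p^{*}(c_{j}))\phi_{j}(s)+(p^{*}(s)-v_{s}(y_{s}))$, whose two pieces integrate to $\le\lowdegerr\,\gamma_\phi h$ and $\lowdegerr\,h$ respectively. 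Summing gives the claimed one-step bound.

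\textbf{Telescoping and main obstacle.} To produce~\eqref{eq:approxfixed} from the one-step bound, I would telescope via the contraction:
\[
\|\mathcal{T}_\phi^{\circ m}(y)-y\|_{[t_{0},t_{0}+h]}
\le\sum_{k=0}^{m-1}\|\mathcal{T}_\phi^{\circ(k+1)}(y)-\mathcal{T}_\phi^{\circ k}(y)\|_{[t_{0},t_{0}+h]}
\le\sum_{k=0}^{m-1}2^{-k}\|\mathcal{T}_\phi(y)-y\|_{[t_{0},t_{0}+h]}
\le 2\,\|\mathcal{T}_\phi(y)-y\|_{[t_{0},t_{0}+h]},
\]
where the middle step applies~\eqref{eq:fixed_lipschitz} to the inputs $\mathcal{T}_\phi(y)$ and $y$; the factor $2$ in~\eqref{eq:approxfixed} is exactly this telescoping constant. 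The only technically delicate step is the polynomial interpolation bound: it is essential to use that the Lagrange interpolant of any element of $\mathrm{span}(\phi)$ is itself, which collapses the interpolation error to the best-approximation error $\lowdegerr$ instead of picking up an otherwise-harmful Lebesgue-constant factor. Everything else is bookkeeping with $\gamma_\phi$-boundedness and the triangle inequality.
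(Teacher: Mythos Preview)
Your proposal is correct and follows essentially the same approach as the paper's proof (Lemmas~\ref{lem:operatorlipschitz} and~\ref{lem:truecurvedeviation}, combined in Corollary~\ref{cor:contractionprops-app}): the contraction comes from Lipschitzness of the estimated vector field plus $\gamma_\phi$-boundedness, the one-step approximation comes from splitting into node score error and polynomial interpolation error (using that the Lagrange interpolant of a polynomial in $\mathrm{span}(\phi)$ is itself), and the factor $2$ in~\eqref{eq:approxfixed} arises from telescoping the geometric series with ratio $\Lt\gamma_\phi h\le 1/2$. The only cosmetic difference is that the paper names the low-degree approximant $P_{\le D}(F^*\circ y)$ explicitly, whereas you invoke a best approximant $p^*\in\mathrm{span}(\phi)$; both play the identical role.
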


\noindent Eq.~\eqref{eq:approxfixed} above shows that the trajectory of the true probability flow ODE, which is an exact fixed point for \emph{exact} Picard iteration, is locally an approximate fixed point for the polynomial interpolation-based approximate Picard iteration given by {\sc Picard} (Algorithm~\ref{alg:picard}). Eq.~\eqref{eq:fixed_lipschitz} shows that any trajectory which is locally close to this true trajectory contracts towards the true trajectory at an exponential rate, up until an error given by the bound in Eq.~\eqref{eq:approxfixed}.

An important caveat is that the solver can only be run for time which scales inversely in the boundedness $\gamma_\phi$ of the basis (recall Definition~\ref{def:bounded}), which can be of constant order, and inversely in the Lipschitzness of the vector field of the probability flow ODE. This is why we cannot directly use collocation to solve the probability flow ODE in one shot. Nevertheless, the crucial point is that, as the second inequality above suggests, the error incurred by {\sc Picard} is decreasing exponentially in the number of iterations, modulo some additional error that needs to be carried around coming from low-degree approximation and score estimation error. We will then chain together multiple applications of the above bound to get our final guarantee.

Finally, we note that while the core idea for the Proposition is from~\cite{lee2018algorithmic}, one novelty of our bound is that we account for score estimation error and show that the errors coming from that do not compound exponentially over the Picard iterations. This will be essential for keeping our sampler sufficiently close to the true trajectory of the probability flow ODE.

\subsection{Convergence of the sampler}
\label{sec:algo_analysis}
We now have all the necessary ingredients to prove our main result.

We first show how to get a sampler which is $W_2$-close to $q$. An appealing feature of this guarantee relative to our TV convergence guarantee is that 1) the requirement on the score estimation error is far less stringent, and 2) it only involves running the algorithm {\sc CollocationDiffusion}, i.e. no postprocessing is needed on top of simply simulating the true reverse process using the ODE solver of~\cite{lee2018algorithmic} to implement discrete steps.

\begin{theorem}\label{thm:main}
    Let $\epsilon_{\sf err} \leq O(\epsilon / \ln(d/R))$, $\sigma > 0$ and $R \geq 1$. Suppose $q$ is a distribution satisfying Assumption~\ref{assumption:bounded_plus_noise} for parameters $R,\sigma$. Given access to score estimates $s_t$ satisfying Assumptions~\ref{assumption:score} and~\ref{assumption:lipschitz}, with probability at least $1 - O(\epsilon (R/\sigma)^2)$, Algorithm~\ref{alg:main} outputs samples from a distribution $\hat{q}$ for which $W_2(q,\hat{q}) \le \tilde{O}(\varepsilon\cdot \log^2(1/\epsilon))$ in 
    $\tilde{O}((R/\sigma)^2\cdot \log(1/\epsilon))$ number of rounds and $\tilde{O}(\log(1/\epsilon))$ number of Picard iterations per round, for a total of $\tilde{O}((R/\sigma)^2\cdot\log^2(1/\epsilon))$ calls to the score estimate oracle.
\end{theorem}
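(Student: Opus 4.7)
The plan is to decompose the error accumulated by {\sc CollocationDiffusion} into three sources and show that each can be made $\tilde O(\varepsilon)$ by appropriate choice of window schedule, Picard depth $N$, and polynomial degree $D$. The three sources are: (i) the initialization error from starting at $\gamma^d$ rather than at $q_0$; (ii) the per-window \textsc{Picard} error coming from low-degree truncation plus score error at the collocation nodes; and (iii) the propagation/amplification of these per-window errors across windows by the probability flow ODE. I would organize the argument as follows.

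\emph{Step 1: window schedule and per-window parameters.} Choose the schedule $0 < t_1 < \cdots < t_n = T$ so that on every window $[t_{k-1}, t_k]$ of length $h_k$ the score is $\Lt$-Lipschitz with $h_k \le \tfrac{1}{2\Lt \gamma_\phi}$; because $\sigma_t^2 = 1-e^{-2(T-t)}$ and the score Lipschitz constant scales like $1/\sigma_t^2$, the total number of windows will be $n = \tilde O((R/\sigma)^2 \log(1/\varepsilon))$, absorbing the usual $T = \Theta(\log(1/\varepsilon))$ early-stopping time needed to make the Gaussian initialization $W_2$-close to $q_0$. Pick $D = \Theta(\log(1/\varepsilon))$ for the Chebyshev basis so the basis is $\gamma_\phi = O(h_k)$-bounded by the calculation recalled in Section~\ref{sec:coll}, and pick $N = \Theta(\log(1/\varepsilon))$ for the Picard depth.

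\emph{Step 2: per-window error from \textsc{Picard}.} Fix a window and condition on the algorithmic iterate $\hat x_{t_{k-1}}$ being $W_2$-close (and with high probability pointwise-close, via the Gaussian tails of the difference) to the true reverse-process iterate initialized at the correct intermediate marginal. Let $y$ denote the continuous solution of the probability flow ODE on $[t_{k-1}, t_k]$ started at $\hat x_{t_{k-1}}$, and let $X^{(N)}$ denote the $N$-th Picard iterate. By Proposition~\ref{cor:contractionprops} applied iteratively, starting from $X^{(0)} = \hat x_{t_{k-1}} 1_D^\top$,
\[
\ve{X^{(N)}_{:,j} - y(c_j)} \;\le\; \tfrac{1}{2^N}\,\ve{X^{(0)} - y}_{[t_{k-1}, t_k]} \;+\; 2\bigl(\lowdegerr + \screrr\bigr)(1+\gamma_\phi) h_k .
\]
The first term is $\le \varepsilon$ once $N = \Theta(\log(1/\varepsilon))$. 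The second term is controlled by combining Theorem~\ref{thm:derivative} with a Taylor-remainder/Chebyshev-approximation estimate: at degree $D = \Theta(\log(1/\varepsilon))$ and window length $h_k = \tilde O(\sigma_t^2/R^2)$, the bound $R(k/\sigma_t^2)^k (R/\sigma_t)^{2k} h_k^k$ shrinks geometrically in $k$, giving $\lowdegerr \le \varepsilon$. The score-estimation contribution at the $D$ collocation nodes is controlled by Assumption~\ref{assumption:score} via a union bound over the $nD$ nodes, each contributing $O(\epsilon_{\sf err}\log(nD/\varepsilon)) = O(\varepsilon)$ under the hypothesis $\epsilon_{\sf err} \le O(\varepsilon/\log(d/R))$.

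\emph{Step 3: cross-window propagation and final $W_2$ bound.} Writing $\hat x_{t_k}$ and $y_{t_k}$ for the algorithmic and true reverse-process iterates at the end of window $k$, the triangle inequality gives
\[
\ve{\hat x_{t_k} - y_{t_k}} \;\le\; \underbrace{\ve{\hat x_{t_k} - y(t_k)}}_{\text{Step 2}} + \underbrace{\ve{y(t_k) - y_{t_k}}}_{\text{ODE stability}},
\]
where $y$ is the ODE solution on window $k$ started at $\hat x_{t_{k-1}}$. By Gr\"onwall with Lipschitz constant $\Lt$, the ODE-stability term is $\le e^{\Lt h_k} \ve{\hat x_{t_{k-1}} - y_{t_{k-1}}}$, and $\Lt h_k = O(1)$ by choice of schedule, so errors amplify by at most a constant per window. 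Summed over $n$ windows, the total $W_2$ error is $\tilde O(n \cdot \varepsilon) = \tilde O(\varepsilon \log^2(1/\varepsilon))$ after absorbing log factors. Combined with the $W_2(\pi_T, q_T) \le \varepsilon$ contribution from the Gaussian initialization (standard for the OU process at time $T = \Theta(\log(1/\varepsilon))$), this gives the claimed $W_2$ bound. The iteration count is $n \cdot N = \tilde O((R/\sigma)^2 \log^2(1/\varepsilon))$ oracle calls, as stated.

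\emph{Main obstacle.} The delicate point is Step 2: the low-degree bound in Theorem~\ref{thm:derivative} holds only at iterates that are close in $W_2$ to the true reverse-process marginals, yet we invoke it at the \emph{algorithmic} iterates, which are precisely what we are bounding. I would resolve this by an inductive loop: assume at the start of window $k$ that $\hat x_{t_{k-1}}$ satisfies the $W_2$-closeness hypothesis of Theorem~\ref{thm:derivative} with slack $\delta_k$, use Lemma~\ref{lem:klboundlemma-main} and Step 2 to show $\hat x_{t_k}$ satisfies it with slack $\delta_{k+1} = O(\delta_k) + \tilde O(\varepsilon)$, and verify that this recursion keeps $\delta_k$ within the admissible range of Theorem~\ref{thm:derivative} throughout all $n$ windows, with a union-bound failure probability of $O(\varepsilon (R/\sigma)^2)$ from the sub-exponential score tails and the coupling event $\mathcal A_1$.
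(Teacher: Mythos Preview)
Your proposal follows the paper's proof closely: the window-by-window induction, per-window Picard contraction (Proposition~\ref{cor:contractionprops}, which feeds into the paper's Lemma~\ref{lem:picardmain}), low-degree approximation via Theorem~\ref{thm:derivative} plus Taylor remainder, the inductive transfer of the derivative bounds to the algorithmic trajectory via the coupling Lemma~\ref{lem:klboundlemma-main}, and OU contraction for the Gaussian initialization are exactly the ingredients assembled in Appendix~\ref{appendix:proofmaintheorem}. Your ``main obstacle'' paragraph identifies precisely the circularity the paper resolves by induction.

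There is one real gap in Step~3. From ``errors amplify by at most a constant per window'' (with $\Lt h_k=O(1)$) you jump to ``summed over $n$ windows, the total $W_2$ error is $\tilde O(n\cdot\varepsilon)$.'' But a recursion $\delta_k\le C\,\delta_{k-1}+O(\varepsilon h_k)$ with a fixed $C=e^{O(1)}>1$ yields $\delta_n=\Theta(C^n)$, and with $n=\tilde O((R/\sigma)^2\log(1/\varepsilon))$ this destroys the bound. Two corrections are needed. First, the Gr\"onwall step compares two trajectories of the \emph{true} ODE (your local solution $y(\cdot)$ started at $\hat x_{t_{k-1}}$ and the global solution $y_t$), so the relevant Lipschitz constant is that of $F^*_t=y+\nabla\ln q_t$, not the score-estimate constant $\Lt$. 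Second, and this is what the paper actually exploits (Lemma~\ref{lem:hessianestimate}, used inside Lemma~\ref{lem:divergencewithouterror}), that constant is time-dependent: $\|\nabla F^*_t\|_{\rm op}\le e^{-2(T-t)}(2+4R^2)$. The product of per-window amplifications is then
\[
\exp\Bigl(\sum_k \|\nabla F^*_{t_k}\|_{\rm op}\,h_k\Bigr)\approx\exp\Bigl(\int_0^{T} e^{-2(T-t)}(2+4R^2)\,dt\Bigr)=e^{O(R^2)},
\]
a constant independent of $T$ and of $n$. Only with this refinement do the additive per-window errors $O(h_k\cdot k\,\epsilon_{\sf err}\log(1/\epsilon_1))$ sum to the paper's linear-in-$T$ bound $O(Tk\,\epsilon_{\sf err}\log(1/\epsilon_1))$; the bare $\Lt h_k=O(1)$ condition you invoke does not suffice to close the induction.
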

\begin{remark}
    Above, $\tilde O(\cdot)$ hides factors of $\log d, \log R, \log \tilde{L}, \log\log(1/\epsilon)$. Furthermore, the assumption $R \geq 1$ is only for simplifying the bounds. For the full theorem please see Appendix~\ref{appendix:proofmaintheorem}.
\end{remark}

\noindent Here we informally sketch the proof of this result, deferring the proof details to the Appendix. The idea will be to track the trajectory of the true probability flow ODE closely at all times and successively run {\sc Picard} over small time windows.

Suppose that up to some time $t$ in the simulation of the reverse process, the iterate of the sampler is distributed according to a distribution whose $W_2$ distance from the true marginal is small. Then consider trying to simulate the reverse process for another $h$ time steps to approximate $q_{T-t-h}$. One can do this by simply running {\sc CollocationDiffusion}, provided $h$ is small enough, and because the starting distribution is $W_2$-close to the true marginal, the Taylor truncation argument in conjunction with the bound on the higher-order derivatives of the vector field ensure that we can safely invoke Proposition~\ref{cor:contractionprops} and drive any error coming from the $W_2$ discrepancy exponentially quickly to zero, leaving behind a fixed amount of score estimation and polynomial approximation error.

Note that the exponential contraction of the error from the $W_2$ discrepancy is absolutely essential to our result. It allows us to ``reset'' the amount by which we stray from the curve of the true reverse process every time we run {\sc Picard}. This is reminiscent of a different recent approach to analyzing the probability flow ODE by~\cite{chen2024probability}. In that work, the authors tried simulating the ODE for short windows of time, appealing to a naive Wasserstein coupling argument in each of those windows. In their setting, the Wasserstein couplings incur some error that they would like to linearly, not exponentially, accumulate over the course of the sampler. The way to do this was to effectively ``restart'' the Wasserstein coupling at the start of each new time window by injecting a small amount of noise into the sampler. In contrast, rather than inject noise to restart the coupling, we use Picard iteration to suppress the accumulating Wasserstein error. 

\paragraph{Upgrading to total variation closeness.} We can use the above idea from~\cite{chen2024probability} to derive an additional consequence of our main theorem. Specifically, as a consequence of regularizing properties of underdamped Langevin dynamics, our sampling guarantee in Wasserstein distance (Theorem~\ref{thm:main}) can be converted into a guarantee in total variation distance using standard arguments~\cite{gupta2024faster,chen2024probability}, under the additional assumption that the true vanilla score $\nabla \ln q$ is Lipschitz.

\begin{corollary}\label{cor:TV}
    Let $\varepsilon, R, \sigma > 0$. Suppose $q$ is a distribution satisfying Assumption~\ref{assumption:bounded_plus_noise} for parameters $R,\sigma$ and that $\nabla \ln q$ is $L$-Lipschitz. Given access to score estimates $s_t$ satisfying Assumption~\ref{assumption:score}, with $\esc \le  \tilde{O}(\frac{\epsilon^{2/3} L^{1/2}d^{1/6}}{(R/\sigma)^{2/3}})$, Algorithm~\ref{alg:main2} outputs samples from a distribution $\hat{q}$ for which $\mathrm{TV}(\hat{q},q) \le \varepsilon$ in $\tilde O \left((R/\sigma)^2\right)$ iterations.
\end{corollary}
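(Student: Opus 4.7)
The plan is to append a short stochastic corrector to {\sc CollocationDiffusion} and use the Lipschitzness of $\nabla \ln q$ to upgrade the Wasserstein-2 guarantee of Theorem~\ref{thm:main} to a total variation guarantee. This $W_2 \to \mathrm{TV}$ upgrade via a last-step stochastic corrector is by now standard in the diffusion sampling literature (see e.g.\ \cite{gupta2024faster,chen2024probability}); what remains is to match parameters with our particular $W_2$ bound and to propagate the sub-exponential score error.

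Concretely, Algorithm~\ref{alg:main2} will produce its output in two stages. In stage one, it runs {\sc CollocationDiffusion} (Algorithm~\ref{alg:main}) with an auxiliary accuracy parameter $\eta$ to obtain a sample $\tilde{x}$ from some distribution $\tilde{q}$ with $W_2(\tilde{q}, q) \le \eta$, by Theorem~\ref{thm:main}. In stage two, starting from $\tilde{x}$, it simulates an SDE whose stationary law is $q$, namely underdamped Langevin with drift $\nabla \ln q$ (replaced in practice by $s_T$), for a short physical time $h$. Because $q$ is stationary under this SDE, the TV error of the output distribution $\hat q$ decomposes into two pieces: (i) a $W_2$-to-$\mathrm{TV}$ contribution arising from the initialization discrepancy, which by a standard coupling/shifted-composition argument for underdamped Langevin with $L$-Lipschitz score is bounded by a quantity polynomial in $\eta$, $L$, $d$ and inverse powers of $h$; and (ii) a Girsanov contribution of order $\esc \sqrt{h}$ arising from using $s_T$ in place of $\nabla \ln q$ over the corrector window.

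Choosing $h$ to balance these two contributions at $\varepsilon$, together with choosing the accuracy $\eta$ fed into stage one so that contribution (i) is also at most $\varepsilon$, yields the stated upper bound $\esc \lesssim \tilde O\!\left(\epsilon^{2/3} L^{1/2} d^{1/6}/(R/\sigma)^{2/3}\right)$. Crucially, $\eta$ enters the iteration complexity of Theorem~\ref{thm:main} only polylogarithmically, so even after shrinking $\eta$ substantially the stage-one cost stays $\tilde O((R/\sigma)^2)$; the stage-two cost is $O(1)$ corrector iterations, so the overall iteration complexity is unchanged. The main obstacle is quantitative rather than conceptual: the $d^{1/6}$ and $L^{1/2}$ factors in the statement are exactly what falls out of the optimal calibration of $h$, and one must carry the sub-exponential tail of Assumption~\ref{assumption:score} through the Girsanov calculation (producing logarithmic losses hidden in $\tilde O(\cdot)$) while also checking that the corrector SDE can be discretized with negligible additional bias on the chosen time scale $h$.
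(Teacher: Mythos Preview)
Your proposal is correct and follows essentially the same route as the paper: run {\sc CollocationDiffusion} to get $W_2$-closeness, then append an underdamped Langevin corrector and invoke the short-time $W_2\to\mathrm{TV}$ regularization of~\cite{chen2024probability,gupta2024faster} (restated in the paper as Theorem~\ref{thm:corrector}). The only differences are bookkeeping: the paper runs the corrector for $M(\eta)\asymp\tilde O((R/\sigma)^2)$ steps rather than $O(1)$ (this choice is what produces the $(R/\sigma)^{2/3}$ factor in the $\esc$ constraint), and it explicitly absorbs the failure probability of Theorem~\ref{thm:main} as an additive $\eta(R/\sigma)^2$ contribution to the final TV error via the choice $\eta\le \epsilon\sigma^2/R^2$.
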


\noindent In the high-accuracy regime, it might seem counterintuitive how one can so easily convert to closeness in TV starting from closeness in Wasserstein, given that one has to run underdamped Langevin Monte Carlo without any kind of Metropolis adjustment. The key idea is to simply \emph{run underdamped Langevin for a shorter amount of time} so that the bias coming from discretization is not too large, an idea from~\cite{gupta2024faster}. We make this argument formal in Appendix~\ref{app:TV}.

\section{Outlook}

In this work, we gave a diffusion-based sampler for sampling from a wide class of distributions that includes Gaussian mixtures as a special case, which both has iteration complexity which scales polylogarithmically in $1/\varepsilon$ (``high-accuracy'') and does not depend explicitly on the dimension but rather on the effective radius of the distribution. The key technical step was to establish a bound on the higher-order derivatives of the vector field of the probability flow ODE, which in turn allowed us to approximate it by a low-degree polynomial. This approximation result made it possible to exploit a variant of the collocation method, a fixed-point iteration for numerically solving ODEs, proposed by~\cite{lee2018algorithmic} which converges exponentially quickly to the true ODE solution if run over sufficiently small time windows.

Our work has a few limitations that raise interesting questions for future study. Firstly, can we somehow relax our assumption on the underlying distribution $q$? Instead of insisting that it is a noised version of a distribution $\qbase$ whose support is compact, we could merely ask that $\qbase$ have bounded moments. We could also ask for TV or KL convergence to $q$ in a number of steps which scales only logarithmically in $1/\sigma$, matching what is known in the low-accuracy regime~\cite{chen2023improved,benton2023error}. Lastly, is the assumption that the score errors have sub-exponential tails truly necessary for achieving high-accuracy guarantees for sampling given score access?

\subsection*{Acknowledgments}
We thank Sinho Chewi, Holden Lee, Yin Tat Lee, and Jianfeng Lu for helpful suggestions about discretization bounds. SC is supported by NSF CAREER award CCF-2441635.

\bibliographystyle{alpha}
\bibliography{refs}

\appendix

\section{Further related work}
\label{app:related}

\paragraph{Additional works on discretization bounds for diffusions.}

There have been a large number works in recent years giving general convergence guarantees for diffusion models~\cite{debetal2021scorebased,BloMroRak22genmodel, DeB22diffusion,lee2022convergence, liu2022let, Pid22sgm, WibYan22sgm, chen2023sampling, chen2023restoration, lee2023convergence,li2023towards,benton2023error,chen2024probability,benton2024nearly,chen2023improved,gupta2023sample,li2024sharp,li2024d,conforti2023score,gupta2024faster}. Of these, one line of work~\cite{chen2023sampling,lee2023convergence,chen2023improved, benton2024nearly,conforti2023score,li2024d} analyzed DDPM, the stochastic analogue of the probability flow ODE, and showed $\tilde{O}(d)$ iteration complexity bounds, which were recently established by \cite{benton2024nearly,conforti2023score} under only the assumption that the data distribution has bounded second moment and very recently refined by~\cite{li2024d} to apply to distributions with bounded first moment and to depend only linearly on $1/\varepsilon$. Another set of works~\cite{chen2024probability,chen2023restoration,li2023towards,li2024accelerating,gupta2024faster,li2024sharp,li2024improved,jiao2024instance} studied the probability flow ODE and proved similar rates, one notable difference being that under smoothness assumptions it is known how to get sublinear in $d$ dependence~\cite{chen2024probability,gupta2024faster,li2024improved,jiao2024instance}. Recently, \cite{zhang2025sublinear,jiao2025optimal} showed that sublinear scaling in $d$ is also possible purely using DDPMs. As it stands however, all of these works work only in the ``low-accuracy'' regime, that is, the best-known $\varepsilon$ achieved by these works is still $\mathrm{poly}(1/\varepsilon)$, even under smoothness assumptions.

\paragraph{High-accuracy sampling for log-concave distributions.} 

One of the central ideas in the line of work on high-accuracy log-concave sampling is to append a Metropolis-Hastings filter step after every step of an otherwise low-accuracy sampler. Combining this with Langevin Monte Carlo or Hamiltonian Monte Carlo gives rise to the popular methods of MALA or MHMC respectively. A line of works~\cite{dwivedi2019log,chen2020fast,lee2020logsmooth,chewi2021optimal,wu2022minimax}, culminating in the recent breakthrough of~\cite{altschuler2024faster}, has established an iteration complexity of $\tilde{O}(\sqrt{d}\mathrm{poly}(1/\varepsilon))$ for MALA for sampling log-concave distributions.

\section{Notation}
\label{sec:notation}

We will work with a data distribution $q$ over $\R^d$. 
Throughout, $\bar{X}$ will denote a random vector sampled from $q$, and $X_t$ will denote the scaling $X_t = \sqrt{1 - \sigma^2_t}\bar{X}$. Likewise, we use $\bar{x}$ and $x_t$ to denote realizations of these random vectors. We use $\xi$ to denote a random vector sampled from $\gamma^d$. Given $t, y$, we will use $\qty$ to denote the posterior distribution on $X_t$ conditioned on $\sqrt{1 - \sigma^2_t} \bar{X} + \sigma_t \xi = y$. We will use $\Ety$ to denote expectation with respect to $\qty$, and write $\mu_t(y) \triangleq \Ety X_t$.

We will use the following noise schedule
\begin{equation*}
    \sigma_t \triangleq \sqrt{1-e^{-2(T-t)}},.
\end{equation*}
Throughout, we let $q_t \triangleq \mathrm{law}(\sqrt{1 - \sigma^2_t}\bar{X} + \sigma_t \xi)$, so that $\qbase = q_T$.\footnote{Note that the convention in some prior works has been to parametrize time such that $q = q_0$, but we eschew this as our choice makes the notation in our proofs cleaner.}

Given $t$, let $F^*_t: \R^d\to\R^d$ denote the map
\begin{equation}
    F^*_t(y) = y + \nabla \ln q_t(y)\,. \label{eq:Fstar}
\end{equation} 
When $t$ is clear from context, we will omit the subscript. Note that
\begin{align}
    \nabla \ln q_{t}(y) = \frac{\E^{t,y} X_t - y}{\sigma_t^2} = \frac{\mu_t(y) - y}{\sigma_t^2}\,.\label{eq:scoreformula}
\end{align}

We denote our estimate for the score function by $F_t(y)$. When it is clear from the context, we drop the time index $t$ in $F_t$ and $F^*_t$. Moreover, we consider the following processes:
\begin{itemize}
    \item $(y_t)$: the process given by the true probability flow ODE
    \begin{equation*}
        \mathrm{d}y_t = F^*_t(y_t)\,\mathrm{d}t\,,\qquad y_0\sim q_0
    \end{equation*}
    \item $(\hat{y}_t)$: the process given by our Picard Algorithm~\ref{alg:picard} for $t\in [t_i, t_{i+1}]$, for all $1\leq i \leq n$, as
    \begin{align*}
        \hat y_t \triangleq x_{t_i} + \int^{t - t_i}_0 \sum^D_{i=1} F_{c_i}(X_{:,i})\phi_i(s)\,\mathrm{d}s,
    \end{align*}
    where $X_{:,i}$ are defined in Algorithm~\ref{alg:picard}.
    \item $(\tilde{y}_t)$: the process that, in the interval $[t_i, t_{i+1}]$, is given by the true probability flow ODE starting from the algorithm iterate $\tilde y(t_i)$, for all $1\leq i\leq n-1$, as defined in~\ref{alg:main}. Namely, $\tilde y(0) = y_{t_i}$, and for all $t\in [t_i, t_{i+1}]$:
       $$\frac{d}{dt} {\tilde y}(t) = F^*_t(\tilde y(t)).$$
\end{itemize}
In the course of the proofs, we denote the interval $[t_i, t_{i+1}]$ for a fixed $1\leq i\leq n-1$ by $[t_0, t_0 + h]$. For an arbitrary curve $z(t): t \in [t_0, t_0 + h] \rightarrow \mathbb R^d$, we further use the following sup norm in our calculations:
\begin{align*}
    \ve{z}_{[t_0, t_0 + h]} = \sup_{t_0 \leq t\leq t_0 + h} \ve{z(t)}.
\end{align*}

\section{Derivative calculations}
In order to show that the score function $F^*_t(y_t)$ is a low-degree function of time $t$, our approach is to bound higher derivatives of $F^*_t(y_t)$ with respect to time. Having such bound then enables us to prove that $F^*_t(y_t)$ is close to its $k$th order tailor approximation using the Taylor remainder theorem, which we recall below.

\begin{fact}\label{fact:tailorremainder}
     For function $g: [t_0,t_0 + h] \rightarrow \mathbb R$, define the low degree approximation $P_{\leq k} g(s)$ with degree $k$ as:
    \begin{equation*}
        P_{\leq k} g(s) = \sum_{r=0}^{k-1} \frac{d^r g(s)}{ds^r}\Big|_{s=t_0} \frac{\pa{s-t_0}^r}{r!}. 
    \end{equation*} 
    Then by the Taylor remainder theorem, we have
    \begin{equation*}
        \ve{g - P_{\leq k} g}_{[t_0, t_0+h]} \leq \frac{\max_{s\in [t_0, t_0+h]} \absv{\frac{d^{k} g(s)}{ds^k}}} {k!} h^k.
    \end{equation*}
\end{fact}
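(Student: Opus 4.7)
This statement is the classical Taylor remainder bound (Lagrange form), so the plan is to give the standard derivation via the integral form of the remainder. First I would introduce the residual $R_k(s) \triangleq g(s) - P_{\leq k}g(s)$ and observe, by directly differentiating the definition of $P_{\leq k}g$, that $R_k^{(r)}(t_0) = 0$ for $r = 0, 1, \ldots, k-1$, while $R_k^{(k)}(s) = g^{(k)}(s)$ throughout $[t_0, t_0+h]$. These vanishing-derivative conditions at $t_0$ characterize $R_k$ as the unique $k$-times antiderivative of $g^{(k)}$ satisfying them, and they will be what makes the iterated integration in the next step collapse cleanly.

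Next I would establish the integral representation
\begin{equation*}
R_k(s) \;=\; \int_{t_0}^{s} \frac{(s-u)^{k-1}}{(k-1)!}\, g^{(k)}(u)\, du
\end{equation*}
by induction on $k$. The base case $k=1$ is just the fundamental theorem of calculus applied to $g(s) - g(t_0)$. For the inductive step, one integrates by parts in the integral, pushing a derivative off the polynomial factor $(s-u)^{k-1}/(k-1)!$ onto $g$, and uses the vanishing-derivative conditions on $R_k$ at $u = t_0$ to kill the boundary term, reducing to the $k-1$ case.

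Finally, bounding the integrand pointwise by $\max_{u\in[t_0,t_0+h]}|g^{(k)}(u)|$ and computing the remaining polynomial integral gives
\begin{equation*}
|R_k(s)| \;\leq\; \max_{u\in[t_0,t_0+h]}|g^{(k)}(u)| \cdot \int_{t_0}^{s} \frac{(s-u)^{k-1}}{(k-1)!}\, du \;=\; \max_{u\in[t_0,t_0+h]}|g^{(k)}(u)| \cdot \frac{(s-t_0)^k}{k!}.
\end{equation*}
Since $s - t_0 \leq h$ on the interval, taking the supremum over $s \in [t_0, t_0+h]$ yields the stated bound.

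Honestly, there is no real obstacle here: this is a textbook result being quoted as a black box, so in practice I would simply cite a reference for the Lagrange/integral form of Taylor's remainder rather than reproduce the induction. The only thing to double-check in the context of this paper is that the regularity hypothesis on $g$ is implicit — namely that $g$ is $k$ times continuously differentiable on $[t_0, t_0+h]$, which will hold in the intended applications where $g$ is (a coordinate of) the vector field of the probability flow ODE composed with a smooth trajectory.
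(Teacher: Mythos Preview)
Your proposal is correct; the paper does not actually prove this fact at all --- it is stated as a ``Fact'' and justified solely by the phrase ``by the Taylor remainder theorem,'' treating it as a standard black box. Your integral-remainder derivation is the textbook argument and is more than sufficient, and your closing remark that one would simply cite the result is exactly what the paper does.
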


\noindent With this in mind, we now proceed to prove dimension-independent bounds on the higher derivatives of the score function as stated in the main body (Lemmas~\ref{lem:vecfieldderivative}-\ref{lem:bnd1}).

The main building blocks of the bound in Lemma~\ref{lem:derformula} are Lemmas~\ref{lem:firstderiv} and~\ref{lem:vecfieldderivative}.
Lemma~\ref{lem:firstderiv} has two parts, stated in Lemmas~\ref{lem:avali},~\ref{lem:dovomi} below which compute the derivative of the posterior mean $\mu_t(y)$ with respect to time ($t$) and space ($y$) variables. In Lemma~\ref{lem:bnd1} we combine all of these ingredients to obtain a bound on the higher derivatives of the score as a function of time.

\subsection{Proof of Lemma~\ref{lem:vecfieldderivative}}

We restate Lemma~\ref{lem:vecfieldderivative} here for convenience:

\vecfieldderiv*

\noindent For this derivation, we will require the following calculation whose proof is given in Section~\ref{sec:firstderiv_proof} below.

\begin{lemma}\label{lem:firstderiv}
   In the notation of Lemma~\ref{lem:vecfieldderivative}, for any $v\in\R^d$,
   \begin{align*}
        & \partial_t \mu_t(y) = \mu_t(y) +
        \frac{1-\sigma_t^2}{2\sigma_t^4}\Ety \pa{\an{y,X_t} + \ve{X_t}^2} \pa{X_t - X'_t} + 
        \frac{4}{\sigma_t^2}\Ety \pa{\an{y,X_t} - \ve{X_t}^2}\pa{X_t - X'_t}, \\
        & D(\mu_t(y))[v] =  \frac{1}{\sigma_t^2}\Ety \pa{\an{y-X_t, v}}(X_t - X_t')\,,
   \end{align*}
   where the expectations on the right-hand sides are with respect to i.i.d. replicas $X, X'$ sampled from the posterior distribution given $y$.
\end{lemma}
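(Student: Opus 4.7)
The strategy is to differentiate the posterior density
\[
q^{t,y}(x) \;\propto\; p_t(x)\,\exp\!\pa{-\ve{y-x}^2/(2\sigma_t^2)}
\]
with respect to $y$ and $t$ respectively, and in each case rewrite the resulting covariance as an expectation over an i.i.d.\ replica using the identity $\Cov^{t,y}(X_t, f(X_t)) = \Ety\!\ba{(X_t - X_t')\,f(X_t)}$.

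\emph{Space derivative.} Only the Gaussian kernel depends on $y$, and its logarithmic derivative in direction $v$ is $-\an{y-x,v}/\sigma_t^2$ (the contribution from the normalization, $-\an{\nabla \log q_t(y), v}$, is centered and drops out). The log-derivative trick therefore gives
\[
D\mu_t(y)[v] \;=\; -\frac{1}{\sigma_t^2}\,\Cov^{t,y}\!\pa{X_t,\, \an{y-X_t,v}} \;=\; \frac{1}{\sigma_t^2}\,\Ety\!\ba{\an{y-X_t, v}\,(X_t - X_t')},
\]
which is the claimed formula.

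\emph{Time derivative.} Two complications appear: both $p_t$ and the Gaussian kernel depend on $t$, and a priori the $t$-dependence of $p_t$ would require differentiating the unknown base density $\qbase$. To avoid this, I would reparametrize via the clean sample $\bar X$. Writing $X_t = \sqrt{1-\sigma_t^2}\,\bar X$ gives $\mu_t(y) = \sqrt{1-\sigma_t^2}\,\nu_t(y)$ for $\nu_t(y) = \Ety \bar X$, and the posterior on $\bar X$ is
\[
p(\bar x|y) \;\propto\; \qbase(\bar x)\,\exp\!\pa{-\ve{y - \sqrt{1-\sigma_t^2}\,\bar x}^2/(2\sigma_t^2)},
\]
in which $\qbase(\bar x)$ is manifestly $t$-independent. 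Using $\partial_t\sigma_t^2 = -2(1-\sigma_t^2)$ and $\partial_t\sqrt{1-\sigma_t^2} = \sqrt{1-\sigma_t^2}$, the product rule gives
\[
\partial_t\mu_t(y) \;=\; \mu_t(y) + \sqrt{1-\sigma_t^2}\,\partial_t\nu_t(y),
\]
which accounts for the first term in the claim. For $\partial_t\nu_t(y)$, I would expand $\partial_t\log p(\bar x|y)$ (an explicit affine combination of $\an{y,\bar x}$, $\ve{\bar x}^2$, and $\ve{y}^2$ with coefficients depending on $\sigma_t$), apply the covariance identity to obtain $\partial_t\nu_t(y) = \Ety\!\ba{(\bar X - \bar X')\,\partial_t\log p(\bar X|y)}$, observe that the $\ve{y}^2$ term drops out since $\Ety(\bar X - \bar X') = 0$, and finally convert back to $X_t = \sqrt{1-\sigma_t^2}\,\bar X$ variables. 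This leaves precisely two surviving terms proportional to $(X_t - X_t')\an{y, X_t}$ and $(X_t - X_t')\ve{X_t}^2$ with $\sigma_t$-dependent coefficients.

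The main obstacle is the algebraic bookkeeping in this last step: three separate sources of $t$-dependence ($\sigma_t^{-2}$ in the exponent's denominator, $\sqrt{1-\sigma_t^2}$ inside the exponent's mean, and the outer $\sqrt{1-\sigma_t^2}$) must be combined consistently to recover the precise coefficients $\tfrac{1-\sigma_t^2}{2\sigma_t^4}$ and $\tfrac{4}{\sigma_t^2}$ appearing in the statement. Since the downstream application in Lemma~\ref{lem:derformula} only uses these identities through triangle-inequality bounds, however, the exact numerical coefficients are less important than the structural form --- namely that $\partial_t \mu_t(y)$ is a linear combination of $\Ety$'s of $(X_t - X_t')$ times a low-degree polynomial in $(y, X_t)$, weighted by rational functions of $\sigma_t$ --- which is what enables the inductive derivation in the proof of Lemma~\ref{lem:derformula}.
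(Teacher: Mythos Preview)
Your approach is correct and essentially identical to the paper's. The paper also integrates over $\bar x$ with the $t$-independent measure $\qbase(\bar x)\,d\bar x$ and differentiates the kernel $e^{-\|y-x_t\|^2/(2\sigma_t^2)}x_t$ (with $x_t = \sqrt{1-\sigma_t^2}\,\bar x$), splitting the $t$-derivative into the same three contributions you identify (the outer $x_t$, the $x_t$ inside the exponent, and the $\sigma_t^{-2}$), then applying the quotient rule to obtain covariances and rewriting them via the replica $X_t'$; your explicit factoring $\mu_t = \sqrt{1-\sigma_t^2}\,\nu_t$ is just a cleaner way of organizing that same computation.
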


\noindent The proof of Lemma~\ref{lem:vecfieldderivative} follows from Lemma~\ref{lem:firstderiv}, namely combining Lemmas~\ref{lem:avali} and~\ref{lem:dovomi}:

\begin{proof}
    We can write
    \begin{align*}
        \partial_{t} (y_{t} + \nabla \ln q_t(y_{t}))
        &= 
         y_{t} + \nabla \ln q_t(y_{t})
         + \partial_t \nabla \ln q_t(y_{t})\\
         & = 
         y_{t} + \nabla \ln q_t(y_{t})
         + \partial_{t} \nabla \ln q_t(y)\Big|_{y=y_{t}} + \nabla^2 q_t(y_{t})(y_{t} + \nabla \ln q_t(y_{t})).
    \end{align*}
    Now using Lemma~\ref{lem:dovomi} with $v = y_{t} + \nabla \ln q_t(y_{t})$, as well as~\eqref{eq:scoreformula}, we get
    \begin{align*}
        &\nabla^2 q_t(y_{t})(y_{t} + \nabla \ln q_t(y_{t}))
        = \frac{1}{\sigma_t^2}\nabla \pa{\frac{\E^{t,y} X_t - y}{\sigma_t^2}}\Big|_{y=y_{t}} [y_{t} + \nabla \ln q_t(y_{t})]\\
        &= \frac{1}{\sigma_t^2}\pa{-y_t +\frac{y_t- \Etty X_t}{\sigma_t^2} + \Etty \frac{1}{\sigma_t^2}\pa{\an{y_{t}-X_t, y_{t} + \nabla \ln q_t(y_{t})} - \ve{y_{t}}^2 +\frac{\ve{y_{t}}^2}{\sigma_t^2}} (X_t - X'_t)}\\
        &= \frac{1}{\sigma_t^2}(\frac{1}{\sigma_t^2} - 1)y_t - \frac{1}{\sigma_t^4}\Etty X_t + \frac{1}{\sigma_t^4}\Etty \pa{2\an{y_{t}, \frac{ X''_t}{\sigma_t^2}} - \an{X_t, y_{t}} - \frac{1}{\sigma_t^2}\an{X_t,X''_t}} (X_t - X'_t).
    \end{align*}
    Combining this with Lemma~\ref{lem:dovomi}, we obtain
    \begin{align*}
        &\partial_{t} (y_{t} + \nabla \ln q_t(y_{t}))
         =
        \partial_t y_t + \partial_t(\nabla \ln q_t(y))\Big|_{y=y_t} + \nabla^2 q_t(y_{t})(y_{t} + \nabla \ln q_t(y_{t}))
        \\
        & =
        \partial_t y_t + \frac{1}{\sigma_t^2}\partial_t(\E^{t,y} X_t)\Big|_{y=y_t} + \nabla^2 q_t(y_{t})(y_{t} + \nabla \ln q_t(y_{t}))
        \\
        & = 
        \pa{y_t + \frac{\Etty X_t - y_t}{\sigma_t^2}} \\
        &\qquad+
        \frac{1}{\sigma_t^2}\Ety X_t +
        \frac{1-\sigma_t^2}{2\sigma_t^6}\Ety \pa{\an{y,X_t} + \ve{X_t}^2} \pa{X_t - X'_t} + 
        \frac{4}{\sigma_t^4}\Ety \pa{\an{y,X_t} - \ve{X_t}^2}\pa{X_t - X'_t}\\
        &\qquad+\frac{1}{\sigma_t^2}(\frac{1}{\sigma_t^2} - 1)y_t - \frac{1}{\sigma_t^4}\Etty X_t + \frac{1}{\sigma_t^4}\Etty \pa{2\an{y_{t}, \frac{ X''_t}{\sigma_t^2}} - \an{X_t, y_{t}} - \frac{1}{\sigma_t^2}\an{X_t,X''_t}} (X_t - X'_t)\numberthis\label{eq:firstder}.
    \end{align*}
    where the last term is the result of taking derivative with respect to $\sigma_t$ in the denominator. 
    \end{proof}

\subsection{Proof of Lemma~\ref{lem:firstderiv}}
\label{sec:firstderiv_proof}

Below we prove the two parts of Lemma~\ref{lem:firstderiv} computing the time and spatial derivatives of the posterior mean $\mu_t(y) = \Ety X_t$ respectively.

\begin{lemma}\label{lem:avali}
   Given any time $t$ and $y\in \mathbb R^d$, the time derivative of the expectation of the posterior distribution $\qty$ can be calculated as
   \begin{equation*}
        \partial_t \Ety X_t = \Ety X_t +
        \frac{1-\sigma_t^2}{2\sigma_t^4}\Ety \pa{\an{y,X_t} + \ve{X_t}^2} \pa{X_t - X'_t} + 
        \frac{4}{\sigma_t^2}\Ety \pa{\an{y,X_t} - \ve{X_t}^2}\pa{X_t - X'_t}\,,
   \end{equation*}
   where $X,X'$ are independent draws from the posterior distribution $\qty$.
\end{lemma}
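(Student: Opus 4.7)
The plan is to compute $\partial_t \mu_t(y)$ at a fixed point $y\in\R^d$ by pulling the $t$-dependence of the posterior $q^{t,y}$ out of the prior via a change of variables, and then applying the standard score--likelihood identity for parametric families. Concretely, setting $a_t := \sqrt{1-\sigma_t^2} = e^{-(T-t)}$, so that $X_t = a_t \bar X$ and $y = a_t \bar X + \sigma_t \xi$, Bayes' rule gives a posterior density on $\bar X$ of the form
\begin{equation*}
    \bar x \;\mapsto\; \frac{\qbase(\bar x)\,\psi_t(\bar x)}{Z_t(y)}\,, \qquad \psi_t(\bar x) := \exp\!\Bigl(-\tfrac{\|y-a_t\bar x\|^2}{2\sigma_t^2}\Bigr)\,,
\end{equation*}
where the prior $\qbase$ is $t$-independent, so all $t$-dependence is absorbed into the likelihood $\psi_t$.

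Writing $\mu_t(y) = a_t \cdot \Ety \bar X$ and applying the product rule, $\partial_t \mu_t(y) = \dot a_t\, \Ety \bar X + a_t\,\partial_t \Ety \bar X$. A direct computation from $a_t = e^{-(T-t)}$ gives $\dot a_t = a_t$, so the first summand collapses to $a_t \Ety \bar X = \mu_t(y) = \Ety X_t$, recovering the leading term in the claim. For the second summand I would invoke the elementary score--likelihood identity: for any parametric family $\rho_t(x)\propto g(x)\psi_t(x)$ with $t$-independent prior $g$,
\begin{equation*}
    \partial_t \E_{\rho_t}[F(X)] \;=\; \Cov_{\rho_t}\!\bigl(F(X),\,\partial_t\log\psi_t(X)\bigr)\,,
\end{equation*}
followed by the symmetrization $\Cov(Z, h(Z)) = \E[Z\,(h(Z)-h(Z'))]$ for an independent copy $Z'$ of $Z$, and a change of variable back to $X_t = a_t \bar X$. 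The upshot is
\begin{equation*}
    a_t\,\partial_t \Ety \bar X \;=\; \Ety\!\bigl[X_t\,(\Delta_t(X_t) - \Delta_t(X'_t))\bigr]\,,
\end{equation*}
where $\Delta_t(x) := \partial_t \log\psi_t(x/a_t)$ and $X'_t$ is an independent draw from $q^{t,y}$.

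The remaining step is an explicit computation of $\Delta_t(x)$. Using $\dot a_t = a_t$ together with $\dot \sigma_t = -(1-\sigma_t^2)/\sigma_t$, obtained by differentiating $\sigma_t^2 = 1 - e^{-2(T-t)}$, the chain rule yields
\begin{equation*}
    \Delta_t(x) \;=\; \frac{\langle y, x\rangle - \|x\|^2}{\sigma_t^2} \;-\; \frac{(1-\sigma_t^2)\,\|y-x\|^2}{\sigma_t^4} \;+\; C(y,t)\,,
\end{equation*}
where $C(y,t)$ depends only on $y$ and $t$ and hence cancels out of the covariance. Expanding $\|y-x\|^2$, collecting the $\langle y, X_t\rangle$ and $\|X_t\|^2$ contributions, and re-expressing them as linear combinations of the symmetric pairs $(\langle y, X_t\rangle \pm \|X_t\|^2)(X_t-X'_t)$ delivers the identity in the statement. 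The main obstacle, insofar as there is one, is purely the bookkeeping of the coefficients: because $a_t$ and $\sigma_t$ both vary with $t$ and couple through $a_t^2 + \sigma_t^2 = 1$, the coefficients of $\Ety[\langle y, X_t\rangle(X_t-X'_t)]$ and $\Ety[\|X_t\|^2(X_t-X'_t)]$ are easy to miscompute; the rest is mechanical.
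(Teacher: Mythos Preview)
Your approach is essentially the same as the paper's: both write the posterior mean as a ratio of integrals against the $t$-independent prior $\qbase$ weighted by the Gaussian likelihood $\psi_t$, differentiate in $t$, and recognize the result as $\Ety X_t$ plus a covariance term that is then rewritten via an independent copy $X'_t$. The only difference is cosmetic: the paper carries out the quotient-rule differentiation explicitly on the integrals and names the three sources of $t$-dependence (the bare $x_t$, the $x_t$ in the exponent, the $1/\sigma_t^2$ in the exponent), whereas you package the latter two together via the score--likelihood identity $\partial_t \E_{\rho_t}[F] = \Cov_{\rho_t}(F,\partial_t\log\psi_t)$ before substituting your explicit $\Delta_t$.
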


\begin{proof}
     We have
    \begin{align*}
        \Ety X_t = \frac{\int e^{-\frac{\ve{y - x_t}^2}{2\sigma_t^2}} x_t \qbase(\bar x) d\bar x}{\int e^{-\frac{\ve{y - x_t}^2}{2\sigma_t^2}} \qbase(\bar x)d\bar x}
    \end{align*}
    with random variables $\xi \sim \gamma^d$, $X_t = \sqrt{1-\sigma_t^2}\bar X_t$, and $\bar X \sim \qbase$, where $y = X_t + \sigma_t \xi$ and we denote the values of $\bar X$ and $X_t$ by $\bar x$ and $x_t = \sqrt{1-\sigma_t^2} \bar x$, respectively.  
    Then
    \begin{align*}
        \partial_t \E^{t,y} X_t
        &= 
        \frac{\int e^{-\frac{\ve{y - x_t}^2}{2\sigma_t^2}} x_t \qbase(\bar x) d\bar x}{\int e^{-\frac{\ve{y - x_t}^2}{2\sigma_t^2}} \qbase(\bar x)d\bar x}\\
        &+ 4\frac{\int e^{-\frac{\ve{y - x_t}^2}{2\sigma_t^2}} x_t \frac{\an{y-x_t , x_t}}{2\sigma_t^2} \qbase(\bar x) d\bar x}{\int e^{-\frac{\ve{y - x_t}^2}{\sigma_t^2}} \qbase(\bar x)d\bar x}
        - 
        4\frac{\int e^{-\frac{\ve{y - x_t}^2}{2\sigma_t^2}} x_t \qbase(\bar x) d\bar x}{\int e^{-\frac{\ve{y - x_t}^2}{2\sigma_t^2}} \qbase(\bar x)d\bar x}
        \frac{\int e^{-\frac{\ve{y - x_t}^2}{2\sigma_t^2}} \frac{\an{y-x_t , x_t}}{2\sigma_t^2} \qbase(\bar x) d\bar x}{\int e^{-\frac{\ve{y - x_t}^2}{\sigma_t^2}} \qbase(\bar x)d\bar x}\\
        &+ \frac{\int e^{-\frac{\ve{y - x_t}^2}{2\sigma_t^2}} x_t \frac{\ve{y - x_t}^2}{2\sigma_t^4}(1-\sigma_t^2)  \qbase(\bar x) d\bar x}{\int e^{-\frac{\ve{y - x_t}^2}{2\sigma_t^2}} \qbase(\bar x)d\bar x}
        - 
        \frac{\int e^{-\frac{\ve{y - x_t}^2}{2\sigma_t^2}} x_t \qbase(\bar x) d\bar x}{\int e^{-\frac{\ve{y - x_t}^2}{2\sigma_t^2}} \qbase(\bar x)d\bar x}
        \frac{\int e^{-\frac{\ve{y - x_t}^2}{2\sigma_t^2}} \frac{\ve{y - x_t}^2}{2\sigma_t^4}(1-\sigma_t^2) \qbase(\bar x) d\bar x}{\int e^{-\frac{\ve{y - x_t}^2}{2\sigma_t^2}} \qbase(\bar x)d\bar x},
    \end{align*}
    where the first line follows from taking $\partial_t$ derivative w.r.t. $x_t$ that is not in the exponent (note that $x_t = \sqrt{1-\sigma_t^2} \bar x = e^{-(T-t)} \bar x$ is a function of $t$), the second line is the result of taking derivative w.r.t. $x_t$ that is in the exponent, and the final line is taking derivative w.r.t. the $1/\sigma_t^2$ in the exponent. Note that taking derivative w.r.t the numerator and then the denominator in these cases results in a covariance term between $x_t$ and  $\an{y-x_t,x_t}$ and $\ve{y-x_t}^2$ in the second and third lines, respectively:
    \begin{align*}
        \partial_t \Ety X_t
        &= \Ety X_t + \Cov\pa{\frac{(1-\sigma_t^2)\ve{y-X_t}^2}{2\sigma_t^4}, X_t}
        + 4\Cov\pa{\frac{\an{y-X_t, X_t}}{\sigma_t^2}, X_t}\\
        &= \Ety X_t + \frac{1-\sigma_t^2}{2\sigma_t^4}\Ety \pa{\an{y,X_t} + \ve{X_t}^2} \pa{X_t - \Ety X_t} + 
        \frac{4}{\sigma_t^2}\Ety \pa{\an{y,X_t} - \ve{X_t}^2}\pa{X_t - \Ety X_t}\\
        &= \Ety X_t +
        \frac{1-\sigma_t^2}{2\sigma_t^4}\Ety \pa{\an{y,X_t} + \ve{X_t}^2} \pa{X_t - X'_t} + 
        \frac{4}{\sigma_t^2}\Ety \pa{\an{y,X_t} - \ve{X_t}^2}\pa{X_t - X'_t},
    \end{align*}
    where $X'_t$ is an independent replica of $X_t$.
\end{proof}

\begin{lemma}\label{lem:dovomi}
    The directional derivative of the posterior mean with respect to $y$ is given by
    \begin{equation*}
        D(\Ety X_t)[v] = \frac{1}{\sigma_t^2}\Ety \pa{\an{y-X_t, v}}(X_t - X_t') = \frac{1}{\sigma_t^2}\Ety \pa{\an{y-X_t, v} + \frac{\ve{y}^2}{\sigma_t^2} - \ve{y}^2}(X_t - X'_t)\,.
    \end{equation*}
\end{lemma}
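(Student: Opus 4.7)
The plan is to mirror the proof of Lemma~\ref{lem:avali}, now differentiating Bayes' formula for the posterior mean $\mu_t(y)$ in the space variable $y$ rather than in $t$. I would start from the representation
\[
\mu_t(y) = \Ety X_t = \frac{\int x_t\, e^{-\ve{y-x_t}^2/(2\sigma_t^2)}\, \qbase(\bar x)\, d\bar x}{\int e^{-\ve{y-x_t}^2/(2\sigma_t^2)}\, \qbase(\bar x)\, d\bar x},
\]
where $x_t = \sqrt{1-\sigma_t^2}\,\bar x$ depends on $\bar x$ and on $t$ but has no $y$ dependence. Consequently all $y$-derivatives act only on the Gaussian kernel $e^{-\ve{y-x_t}^2/(2\sigma_t^2)}$, and differentiating the exponent in direction $v$ gives $\partial_v\bigl(-\ve{y-x_t}^2/(2\sigma_t^2)\bigr) = -\an{y-x_t, v}/\sigma_t^2$.

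I would then apply Leibniz and the quotient rule to $\mu_t(y) = N(y)/Z(y)$. The $v$-derivative of the numerator pulls a factor $-\tfrac{1}{\sigma_t^2}\an{y-X_t, v}$ into the posterior expectation, while the correction from the denominator contributes the product $\tfrac{1}{\sigma_t^2}\,\Ety[X_t]\,\Ety\an{y-X_t, v}$. Assembling the two terms yields
\[
D\mu_t(y)[v] \;=\; \frac{1}{\sigma_t^2}\,\Cov^{t,y}\bigl(X_t,\,\an{X_t-y, v}\bigr),
\]
a posterior covariance of $X_t$ against the linear functional $\an{X_t - y, v}$, exactly analogous to how Lemma~\ref{lem:avali} produced covariances from the time derivative of the same exponent.

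The final step is to rewrite this covariance using an i.i.d.\ replica. For independent copies $X_t, X'_t \sim \qty$ and any scalar function $f$,
\[
\Cov^{t,y}\bigl(X_t,\, f(X_t)\bigr) \;=\; \Ety f(X_t)\bigl(X_t - X'_t\bigr),
\]
since independence gives $\Ety[f(X_t)\,X'_t] = \Ety[f(X_t)]\,\Ety[X'_t]$ and identical distribution gives $\Ety[X'_t] = \Ety[X_t]$. Applying this with $f(X_t) = \an{X_t - y, v}$ (equivalently $-\an{y-X_t,v}$) produces the first equality of the lemma. The second equality is then free: the summand $\ve{y}^2/\sigma_t^2 - \ve{y}^2$ is constant in $(X_t, X'_t)$, and $\Ety[X_t - X'_t] = 0$ by identical distribution, so multiplying this constant by $(X_t - X'_t)$ contributes nothing to the expectation. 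The whole argument is thus a single quotient-rule computation together with the standard covariance-via-replica identity; there is no substantive obstacle, only careful bookkeeping of signs under the chain rule through the Gaussian exponent and through the ratio $N(y)/Z(y)$.
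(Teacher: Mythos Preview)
Your proposal is correct and mirrors the paper's proof exactly: differentiate the Bayes integral representation of $\mu_t(y)$ in $y$ via the quotient rule, identify the result as a posterior covariance, and then rewrite it with an independent replica $X'_t$, the second equality following because the added term $\ve{y}^2/\sigma_t^2-\ve{y}^2$ is constant in $(X_t,X'_t)$. One caveat: your (correctly derived) intermediate expression $\tfrac{1}{\sigma_t^2}\Cov^{t,y}(X_t,\an{X_t-y,v})$ carries the opposite sign to the lemma's displayed formula, but the paper's own proof makes the same sign slip when passing from the quotient-rule terms to the covariance line, so this is a typo in the stated lemma rather than a gap in your argument.
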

\begin{proof}
    We have
    \begin{align*}
        D \pa{\E^{t,y} X_t}[v] = 
    &-\frac{\int e^{-\frac{\ve{y - x_t}^2}{2\sigma_t^2}} \an{\frac{y-x_t}{\sigma_t^2}, v}x_t \qbase(\bar x) d\bar x}{\int e^{-\frac{\ve{y - x_t}^2}{2\sigma_t^2}} \qbase(\bar x)d\bar x} + \frac{\int e^{-\frac{\ve{y - x_t}^2}{2\sigma_t^2}} x_t \qbase(\bar x) d\bar x}{\int e^{-\frac{\ve{y - x_t}^2}{2\sigma_t^2}} \qbase(\bar x)d\bar x}\frac{\int e^{-\frac{\ve{y - x_t}^2}{2\sigma_t^2}} \an{\frac{y-x_t}{\sigma_t^2}, v} \qbase(\bar x) d\bar x}{\int e^{-\frac{\ve{y - x_t}^2}{2\sigma_t^2}} \qbase(\bar x)d\bar x}\\
    &= \Cov \pa{\an{\frac{y-X_t}{\sigma_t^2}, v}, X_t}\\
    &= \E^{t,y} \frac{1}{\sigma_t^2}\pa{\an{y-X_t, v} + \frac{\ve{y}^2}{\sigma_t^2} - \ve{y}^2} (X_t - \E^{t,y} X_t)\\
    &= \frac{1}{\sigma_t^2}\E^{t,y} \pa{\an{y-X_t, v} + \frac{\ve{y}^2}{\sigma_t^2} - \ve{y}^2} (X_t - X'_t),
    \end{align*}
    where we used the fact that $y$ is constant w.r.t the covariance calculation, hence adding $\frac{\ve{y}^2}{\sigma_t^2} - \ve{y}^2$ does not change the value of the covariance.
\end{proof}

\subsection{Proof of Lemma~\ref{lem:derformula}}
In this section, we prove the bound in Lemma~\ref{lem:derformula} on the higher derivatives of the score function, restated here for convenience:

\derform*

    \begin{proof}
        To control the higher derivatives $\partial_{t}^k \pa{y_{t} + \nabla \ln q_t(y_{t})}$, Lemma~\ref{lem:vecfieldderivative} motivates us to analyze what happens after taking derivative from terms of the form 
    \begin{align}
       &\frac{(1 - \sigma_t^2)^{r_1}}{\sigma_t^{r_2}} \Etty\prod_{j=1}^{r_3} \an{y_t,  X^{(\ci_j)}_t} \prod_{j=1}^{r_4}\an{ X^{(\ck_j)}_t, X^{(\cll_j)}_t} X_t, \\
       &\frac{(1 - \sigma_t^2)^{r_1}}{\sigma_t^{r_2}}\Etty\prod_{j=1}^{r_3} \an{y_t, X^{(\ci_j)}_t} \prod_{j=1}^{r_4}\an{ X^{(\ck_j)}_t, X^{(\cll_j)}_t} y_t,\label{eq:convterms}
    \end{align}
    for some integers $r_1, r_2, r_3, r_4, r \leq k$; such terms naturally appear after taking $k$ derivatives.
    In particular, we have
    \begin{align*}
        \partial_{t} \Etty X_t 
        &=
        \partial_{t} \E^{t, y} X_t\Big|_{y=y_t} + D(\E^{t,y} X_t)[y_t + \nabla \ln q_t(y_t)]\\  
        &=
        \Ety X_t +
        \frac{1-\sigma_t^2}{2\sigma_t^4}\Ety \pa{\an{y,X_t} + \ve{X_t}^2} \pa{X_t - X'_t} + 
        \frac{4}{\sigma_t^2}\Ety \pa{\an{y,X_t} - \ve{X_t}^2}\pa{X_t - X'_t}\\
        &+\frac{1}{\sigma_t^2}(\frac{1}{\sigma_t^2} - 1)y_t - \frac{1}{\sigma_t^4}\Etty X_t + \frac{1}{\sigma_t^4}\Etty \pa{2\an{y_{t}, \frac{ X''_t}{\sigma_t^2}} - \an{X_t, y_{t}} - \frac{1}{\sigma_t^2}\an{X_t,X''_t}} (X_t - X'_t),
    \end{align*}
    and 
    \begin{align*}
        \partial_{t}(y_{t}) = 
         y_{t} + \nabla \ln q_t(y_{t}) = y_{t} + \frac{\Etty X_t - y_{t}}{\sigma_t^2}.
    \end{align*}
    The key here is to keep track of the total number of $X_t$'s and $y_t$'s that appear in Equation~\eqref{eq:convterms}.
    Let's denote this number by $\kappa = 2(r_3 + r_4)$. Then, first note that each step of taking derivatives, we are either taking derivative of some $y_t$ or some $X_t$ in Equation~\eqref{eq:convterms}.
    In either case, the total number of $X_t$ and $y_t$'s increase by at most $2$, and the power of $\sigma_t^2$ in the denominator, i.e. $r_2$, increases by at most $4$. Hence, $\kappa \leq 2k$. Taking derivative wrt $\sigma_t$:
    \begin{equation}
        \partial_{t} \frac{(1-\sigma_t^2)^{r_1}}{\sigma_t^{r_2}} = -r_2 \frac{(1-\sigma_t^2)^{r_1 + 1}}{\sigma_t^{r_2 + 2}} + \frac{-2r_1(1-\sigma_t^2)^{r_1 - 1}}{\sigma_t^{r_2 - 1}},\label{eq:wrtsigma}
    \end{equation}
    which increases $r_1$ by at most one and $r_2$ by at most two. Therefore, $r_1 \leq k$ and $r_2 \leq 4k$. 
    On the other hand, taking derivative with respect to either $y_t$ or $X_t$ creates at most $34 = 1 + 2 + 4 \times 2\times 2 + 2\times 2 + 2 + 1 + 4 \times 2$ new terms (counting term with coefficient $4$ four times) and taking derivative wrt $\sigma_t$ (according to Equation~\eqref{eq:wrtsigma}) creates at most $r_2 + 2r_1 \leq 6k$, which means overall we have at most $(2\times 34 + 6)k = 74k$ new terms generated after taking $k$ derivatives. Therefore, overall, after taking $k$ derivatives we have at most $(74k)^k$ terms generated. 
    \end{proof}

\subsection{Proof of Lemma~\ref{lem:bnd1}}

We formally state and prove our upper bound on the derivatives $\partial_{t}^k \pa{y_{t} + \nabla \ln q_t(y_{t})}$ in Lemma~\ref{lem:bnd1}, restated here for convenience:
    
\bndlem*
\begin{proof}
    The proof directly follows from Lemma~\ref{lem:derformula} in conjunction with Lemma~\ref{lem:pnormbounds} below.
\end{proof}

\noindent In Lemma~\ref{lem:pnormbounds} we bound the $p$-norm of the terms that appear in Lemma~\ref{lem:derformula}.

    \begin{lemma}\label{lem:pnormbounds}
        We have
        \begin{align*}
            &\ve{\frac{(1 - \sigma_t^2)^{r_1}}{\sigma_t^{r_2}}\prod_{j=1}^{r_3} \an{y_t, X_t^{(\ci_j)}} \prod_{j=1}^{r_4}\an{X_t^{(\ck_j)}, X_t^{(\cll_j)}} X_t}_{p, \infty} \ \vee \ \ve{\frac{(1 - \sigma_t^2)^{r_1}}{\sigma_t^{r_2}}\prod_{j=1}^{r_3} \an{y, X_t^{(\ci_j)}} \prod_{j=1}^{r_4}\an{X_t^{(\ck_j)}, X_t^{(\cll_j)}} y_t}_{p, \infty}  
             \\
             &\qquad \leq R \pa{\frac{1}{\sigma_t^2}}^k\pa{\frac{R}{\sigma_t} + (kp)^{1/2}}^{2k},
        \end{align*}
        where by $\ve{v}_{p, \infty}$ we mean the infinity norm of the $p$th moment of vector $v$.
    \end{lemma}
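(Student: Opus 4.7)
The plan is to fix an arbitrary coordinate $m\in[d]$ and bound $\pa{\E|v_m|^p}^{1/p}$, where $v_m$ is the $m$-th component of the vector in question. Writing
\begin{align*}
v_m = \frac{(1-\sigma_t^2)^{r_1}}{\sigma_t^{r_2}} \prod_{j=1}^{r_3} A_j \prod_{j=1}^{r_4} B_j \cdot C_m
\end{align*}
with $A_j := \an{y_t, X_t^{(\ci_j)}}$, $B_j := \an{X_t^{(\ck_j)}, X_t^{(\cll_j)}}$, and $C_m\in\{X_{t,m}, y_{t,m}\}$, I would first dispose of the easy factors. Pointwise we have $(1-\sigma_t^2)^{r_1}\le 1$, $|B_j|\le R^2$ (both replicas lie in a ball of radius $R$), and $|X_{t,m}|\le R$. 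For the $y_{t,m}$ case the decomposition $y_{t,m}=X^\star_{t,m}+\sigma_t\xi_m$ with $|X^\star_{t,m}|\le R$ and $\xi_m\sim\mathcal N(0,1)$ yields $\ve{y_{t,m}}_q\lesssim R+\sigma_t\sqrt{q}$ by the triangle inequality.

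The crux is obtaining a dimension-free $L^q$ bound on $A_j$; the naive $|\an{y_t,X_t^{(\ci_j)}}|\le R\ve{y_t}$ will not do, since $\ve{y_t}$ carries $\sqrt d$-tails. Instead I would exploit that the pairwise marginal of $(y_t,X_t^{(\ci_j)})$ under the unconditional forward process coincides in distribution with $(\sqrt{1-\sigma_t^2}\bar X+\sigma_t\xi,\ \sqrt{1-\sigma_t^2}\bar X)$ where $\bar X\sim\qbase$ and $\xi\sim\gamma^d$ is independent of $\bar X$ (this is the standard Bayes-based factorization of $q_t(y_t)q^{t,y_t}(x)=q_t^{\text{prior}}(x)\mathcal N(y_t;x,\sigma_t^2 I)$). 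Under this representation,
\begin{align*}
A_j = \ve{X_t^{(\ci_j)}}^2 + \sigma_t\,\an{\xi, X_t^{(\ci_j)}},
\end{align*}
whose first summand is $\le R^2$ pointwise and whose second summand is, conditional on $X_t^{(\ci_j)}$, a centered Gaussian of variance at most $R^2$. Standard Gaussian moment bounds then give $\ve{A_j}_q\lesssim R(R+\sigma_t\sqrt{q})$, uniformly in $d$.

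With these marginal bounds in hand, generalized H\"older's inequality with $r_3+1$ factors applied to $\prod_j A_j\cdot C_m$ (each factor sent to $L^{(r_3+1)p}$, using that H\"older does not require independence) gives
\begin{align*}
\ve{v_m}_p \lesssim \frac{R^{2r_4}}{\sigma_t^{r_2}}\cdot \bigl[R\pa{R+\sigma_t\sqrt{(r_3+1)p}}\bigr]^{r_3}\cdot \ve{C_m}_{(r_3+1)p}\,.
\end{align*}
The final simplification invokes the constraints $r_1,r_3\le k$, $r_2\le 4k$, and $r_3+r_4\le k$ established in the proof of Lemma~\ref{lem:derformula}. Writing $2r_4+r_3=2(r_3+r_4)-r_3\le 2k-r_3$, bounding $(r_3+1)p\le 2kp$, and using $R\le R+\sigma_t\sqrt{kp}$, the $R$-exponents collapse:
\begin{align*}
R^{2r_4+r_3+O(1)}(R+\sigma_t\sqrt{kp})^{r_3+O(1)}\ \lesssim\ R\,(R+\sigma_t\sqrt{kp})^{2k}\,,
\end{align*}
and splitting $\sigma_t^{-r_2}\le\sigma_t^{-4k}=\sigma_t^{-2k}\cdot\sigma_t^{-2k}$ lets one fold one factor of $\sigma_t^{-2k}$ into the parenthesis, converting $(R+\sigma_t\sqrt{kp})^{2k}/\sigma_t^{2k}$ into $(R/\sigma_t+\sqrt{kp})^{2k}$. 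This matches the claimed bound $R\,(1/\sigma_t^2)^k(R/\sigma_t+\sqrt{kp})^{2k}$ up to universal constants (which can be absorbed into the $(74k)^k$ combinatorial factor in the parent Lemma~\ref{lem:bnd1}).

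The main obstacle is the dimension-free estimate on $\ve{A_j}_q$. The ``prior plus independent Gaussian'' trick is only valid on the \emph{pairwise} marginals $(y_t,X_t^{(\ci_j)})$; across different indices $j$ the posterior replicas and the noise $\xi$ are not jointly independent, so there is no single representation that simultaneously decouples all $A_j$'s from a common Gaussian. H\"older's inequality, which does not require independence of its factors, is precisely the tool that combines the individual pairwise marginal $L^{(r_3+1)p}$ bounds into the required $L^p$ bound on the product.
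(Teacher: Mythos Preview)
Your approach is correct and essentially the same as the paper's. Both arguments bound $(1-\sigma_t^2)^{r_1}$ and the $B_j,C_m$ factors pointwise, then reduce everything to a dimension-free $L^p$ bound on $\prod_j\an{y_t,X_t^{(\ci_j)}}$; the only difference is that the paper obtains this product bound by citing an external result (Lemma~4.4 of \cite{gatmiry2024learning}), whereas you give a self-contained argument via H\"older and the pairwise Bayes identity $(y_t,X_t^{(\ci_j)})\stackrel d=(X_t^{(\ci_j)}+\sigma_t\xi,\,X_t^{(\ci_j)})$, picking up a harmless extra factor of $\sqrt{r_3+1}$ inside the moment.
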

    \begin{proof}
    First, from the fact that the radius of support of $q_0$ is bounded by $R$, we have
    \begin{align*}
        \ve{\frac{(1 - \sigma_t^2)^{r_1}}{\sigma_t^{r_2}}\prod_{j=1}^{r_3} \an{y_t, X_t^{(\ci_j)}} \prod_{j=1}^{r_4}\an{X_t^{(\ck_j)}, X_t^{(\cll_j)}} X_t}_{p,\infty} 
        \leq \frac{1}{\sigma_t^{r_2}} \ve{\prod_{j=1}^{r_3} \an{y_t, X_t^{(\ci_j)}}}_p 
 R^{2r_4 + 1}.
    \end{align*}
    Now using the derivation in Lemma 4.4 in~\cite{gatmiry2024learning}, we get
    \begin{align*}
        \ve{\prod_{j=1}^{r_3} \an{y_t, X_t^{(\ci_j)}}}_p
        \leq R^{2r_3} \pa{1 + \frac{(r_3 p)^{1/2}\sigma_t}{R}}^{r_3}.
    \end{align*}
    Plugging this above and using $R\geq 1$:
    \begin{align*}
        \mathrm{LHS} \leq \frac{R^{2r_3 + 2r_4 + 1}}{\sigma_t^{r_2}} \pa{1 + \frac{(r_3 p)^{1/2}\sigma_t}{R}}^{r_3} &\leq R\pa{\frac{R}{\sigma_t^2}}^{2k} \pa{1 + \frac{(k p)^{1/2}\sigma_t}{R}}^{k}\\
        & \leq R\pa{\frac{1}{\sigma_t}}^{2k}\pa{\frac{R}{\sigma_t}}^{2k} \pa{1 + \frac{(k p)^{1/2}\sigma_t}{R}}^{2k}\\
        & \leq R \frac{1}{\sigma_t^{2k}}\pa{\frac{R}{\sigma_t} + (kp)^{1/2}}^{2k}
    \end{align*}
    Similarly for the other term, using Cauchy-Schwarz
    \begin{align*}
        \ve{\frac{(1 - \sigma_t^2)^{r_1}}{\sigma_t^{r_2}}\prod_{j=1}^{r_3} \an{y, X_t^{(\ci_j)}} \prod_{j=1}^{r_4}\an{X_t^{(\ck_j)}, X_t^{(\cll_j)}} y}_{p,\infty} 
        &\leq  \frac{1}{\sigma_t^{r_2}} \ve{\prod_{j=1}^{r_3} \an{y, X_t^{(\ci_j)}}}_{2p} 
        R^{2r_4}\ve{y}_{2,\infty}\\
        &\leq \frac{1}{\sigma_t^{r_2-1}}
        R^{2r_3} \pa{1 + \frac{(2r_3 p)^{1/2}\sigma_t}{R}}^{r_3} R^{2r_4}\\
        &\leq \frac{1}{\sigma_t^{2k}}\pa{\frac{R}{\sigma_t} + (kp)^{1/2}}^{2k}. \qedhere
    \end{align*}
    \end{proof}
   
\section{Proof of Lemma~\ref{lem:klboundlemma-main}}

We prove Theorem~\ref{thm:derivative} by translating the inequality in Lemma~\ref{lem:bnd1} from $y$ to a close by $\tilde y$ via a coupling between the distribution of $X_t$ conditioned on $y$ and $\tilde y$. Note that the argument of Lemma~\ref{lem:bnd1} is not immediate for $\tilde y$ because it does not follow the true reverse process as $y$ does. 

First, recall that $\bar X \sim \qbase$ and $\xi \sim \gamma^d$ and $y_t= X_t + \sigma_t \xi$, $X_t = \sqrt{1-\sigma_t^2}\bar X$. 
Now for every $\varepsilon_1 > 0$, consider the event
    \begin{align}
        \mathcal A_1 \coloneqq \left \{  \ve{y_t-X_t} \leq \sqrt d + \sqrt{\ln(1/\epsilon_1)} \right \},\label{eq:a1event}
    \end{align}
    and note that $\mathbb P(\mathcal A_1) \geq 1 - \epsilon_1$, since $\sigma_t \leq 1$.

\couplinglemma*
\begin{proof}
    Since $T - t \geq 1$, we have
    \begin{align*}
        \sigma_t^2 = 1- e^{-2(T-t)}
        \geq 1 - e^{-2} \geq 0.5.
    \end{align*}
    Let $p(x,y)$ denote the joint distribution of $(X_t,y_t)$, for $y_t = X_t + \xi$, $X_t = \sqrt{1-\sigma_t^2} \bar X$, $\bar X\sim \qbase$, and $\xi \sim N(0, \sigma_t^2 I)$. Note that $q^{t,y} = p(\cdot \mid y)$. We have
    \begin{align*}
        \absv{\ln\pa{\frac{p(y\mid x)}{p(\tilde y\mid x)}}} &= \frac{1}{2\sigma_t^2}\absv{\ve{y-x}^2 - \ve{\tilde y - x}^2}\\
        &\leq \absv{\ve{y-x}^2 - \ve{\tilde y - x}^2}\\
        &\leq 2\ve{y-\tilde y}\ve{y-x} + \ve{y-\tilde y}^2\\
        &\leq 2\delta (\sqrt d + \sqrt{\ln(1/\epsilon_1)}) + \delta^2\\
        &\leq 3\delta (\sqrt d + \sqrt{\ln(1/\epsilon_1)}),
    \end{align*}
    and by the assumed bound on $\delta$, we have $2\delta (\sqrt d + \sqrt{\ln(1/\epsilon_1)}) + \delta^2 \leq 0.5$, which implies
    \[
   (1 - 3\delta (\sqrt d + \sqrt{\ln(1/\epsilon_1)})) p(y\mid x) \leq p(\tilde y\mid x) \leq p(y\mid x) (1 + 3\delta (\sqrt d + \sqrt{\ln(1/\epsilon_1)})).
    \]
    Multiplying $p(x)$ on both sides, we get
    \begin{align*}
        (1 - 3\delta (\sqrt d + \sqrt{\ln(1/\epsilon_1)})) p(y\mid x)p(x) \leq p(\tilde y\mid x)p(x) \leq p(y\mid x)p(x) (1 + 3\delta (\sqrt d + \sqrt{\ln(1/\epsilon_1)})).
    \end{align*}
    But from Bayes' rule, it is easy to see that this implies
    \begin{align*}
        (1 - 3\delta (\sqrt d + \sqrt{\ln(1/\epsilon_1)}))^2 q^{t,y}(x) \leq q^{t,\tilde{y}}(x) \leq q^{t,y}(x) (1 + 3\delta (\sqrt d + \sqrt{\ln(1/\epsilon_1)}))^2.
    \end{align*}
    Given that $(1 + 3\delta (\sqrt d + \sqrt{\ln(1/\epsilon_1)}))^2 
    \leq 1 + 8\delta (\sqrt d + \sqrt{\ln(1/\epsilon_1)})$ by the assumed bound on $\delta$,
    we conclude that
    \begin{align*}
        \mathrm{TV}(q^{t,y}, q^{t,\tilde{y}}) &\leq 8\delta (\sqrt d + \sqrt{\ln(1/\epsilon_1)}). \qedhere
    \end{align*}
\end{proof}

\section{Proof of Theorem~\ref{thm:derivative}}

    Here we show a similar bound on the derivatives of the vector field for the curve $\tilde y$ instead of $y$, where $y$ and $\tilde y$ are defined in Section~\ref{sec:notation}. Throughout this section, we assume $\tilde y_t$ and $y_t$ are close by $\delta$ in the window $[t_0, t_0 + h]$:
    \begin{align}
        \ve{y_t - \tilde y_t}_{[t_0, t_0 + h]} \leq \delta.\label{eq:closenesscondition}
    \end{align}
    Later on in the proof of Theorem~\ref{thm:main:wasserstein}, we will show by induction that it is valid to assume Equation~\eqref{eq:closenesscondition} holds. 
    When it is clear, we will drop the $t$ indices from the variables for clarity. First, we restate Theorem~\ref{thm:derivative} and show the short proof based on the rest of this section. We then get into the details of proving the intermediate lemmas.

    \derivativethm*
    \begin{proof}
        For simplicity of exposition we will drop the time index $t$ from $y_t, X_t$ in the rest of this section, and write $y = X + \sigma_t\xi$, where $\bar X\sim \qbase$, $X = \sqrt{1-\sigma_t^2} \bar X$ and $\xi \sim \gamma^d$. Let $p(.|y)$ be the distribution of $X$ conditioned on $y$ and let $\tilde X$ be a sample from the conditional $p(.|\tilde y)$. For index $i$ let $X^{(i)}$ and $\tilde X^{(i)}$ be an independent and identical sample of $X$ and $\tilde X$, respectively.  
        Now using Lemma~\ref{lem:derivativeboundh} with $\delta_1 = \frac{\delta_2}{d(60k)^k}$, we get
        \begin{equation*}
            \mathbb P\pa{\Big|\E_{\{\tilde X^{(i)}\}}\frac{(1 - \sigma_t^2)^{r_1}}{\sigma_t^{r_2}}\prod_{j=1}^{r_3} \an{\tilde y, \tilde X^{(\ci_j)}} \prod_{j=1}^{r_4}\an{\tilde X^{(\ck_j)}, \tilde X^{(\cll_j)}} X\Big|_\infty \geq eR\pa{\frac{1}{\sigma_t^2}}^k\pa{\frac{R}{\sigma_t}}^{2k}+ e \pa{k\log(74kd/\delta_2)}^k} \leq \delta_2\,.
        \end{equation*}
        Taking union bound over all the terms, we get with probability at least $1-\delta_2$
        \begin{align*}
            \ve{\partial_{t}^k \pa{\tilde y_{t} + \nabla \log q_t(\tilde y_{t})}}_{\infty} &\leq e\pa{R + \delta + \sigma_t + 1}\pa{\frac{kh}{\sigma_t^2}}^k\pa{\frac{R}{\sigma_t}}^{2k}+ e \pa{k\log(74kd/\delta_2)}^{2k}\,. \qedhere
        \end{align*}
    \end{proof}

\noindent Next, we derive a pessimistic bound on $\E \absv{\prod_{j=1}^{r_3} \an{\tilde y, X^{(\ci_j)}}}$ for all ordered tuples of indices $\ci$, where each $X^{(i)}$ denotes an independent sample from the posterior distribution $q^{t,y}$, by obtaining a pessimistic bound on the term $\ve{\prod_{j=1}^{r_3} \an{y, X^{(\ci_j)}}}_p$.

\begin{lemma}\label{lem:momentwithouttilde}
       For all tuples of indices $\ci$, we have
       \begin{align*}
           \E \absv{\prod_{j=1}^{r_3} \an{\tilde y, X^{(\ci_j)}}} \leq \pa{R^2\pa{1 + \frac{(r_3 p)^{1/2}\sigma_t}{R}} + \delta R}^{r_3}
       \end{align*}
   \end{lemma}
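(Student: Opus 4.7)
The plan is to reduce the lemma to the moment bound already invoked in the proof of Lemma~\ref{lem:pnormbounds} (namely Lemma 4.4 of~\cite{gatmiry2024learning}) by decomposing $\tilde y = y + (\tilde y - y)$ and expanding the product. This splits each factor $\an{\tilde y, X^{(\ci_j)}}$ into a main piece involving $y$ (to which the existing moment bound applies unchanged, since the $X^{(\ci_j)}$ are still drawn from $q^{t,y}$) plus a perturbation piece of size at most $\delta R$ that can be controlled pointwise. No coupling to $q^{t,\tilde y}$ is needed precisely because the posterior in the statement is over $y$, not $\tilde y$.

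Concretely, the first step is to expand
\begin{equation*}
    \prod_{j=1}^{r_3} \an{\tilde y, X^{(\ci_j)}} = \prod_{j=1}^{r_3} \Bigl(\an{y, X^{(\ci_j)}} + \an{\tilde y - y, X^{(\ci_j)}}\Bigr) = \sum_{S \subseteq [r_3]} \prod_{j \in S} \an{y, X^{(\ci_j)}} \prod_{j \notin S} \an{\tilde y - y, X^{(\ci_j)}}.
\end{equation*}
For each index $j \notin S$, since $\ve{X^{(\ci_j)}} \leq R$ pointwise (as $X_t = \sqrt{1-\sigma_t^2}\bar X$ with $\ve{\bar X} \leq R$ under Assumption~\ref{assumption:bounded_plus_noise}) and $\ve{\tilde y - y} \leq \delta$ by~\eqref{eq:closenesscondition}, Cauchy--Schwarz gives $|\an{\tilde y - y, X^{(\ci_j)}}| \leq \delta R$ deterministically. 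Applying the triangle inequality to the sum over $S$ then yields
\begin{equation*}
    \E\absv{\prod_{j=1}^{r_3} \an{\tilde y, X^{(\ci_j)}}} \leq \sum_{S \subseteq [r_3]} (\delta R)^{r_3 - |S|}\, \E\absv{\prod_{j \in S} \an{y, X^{(\ci_j)}}}.
\end{equation*}

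The second step is to bound each inner expectation by re-using the estimate already applied in the proof of Lemma~\ref{lem:pnormbounds}: by Jensen's inequality and Lemma 4.4 of~\cite{gatmiry2024learning},
\begin{equation*}
    \E\absv{\prod_{j \in S} \an{y, X^{(\ci_j)}}} \leq \ve{\prod_{j \in S} \an{y, X^{(\ci_j)}}}_p \leq R^{2|S|}\Bigl(1 + \tfrac{(|S|\,p)^{1/2}\sigma_t}{R}\Bigr)^{|S|} \leq R^{2|S|}\Bigl(1 + \tfrac{(r_3\,p)^{1/2}\sigma_t}{R}\Bigr)^{|S|},
\end{equation*}
where in the last step I use $|S| \leq r_3$ to obtain a uniform estimate across all subsets so that the sum in $S$ factors nicely. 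Substituting and grouping terms by $k = |S|$ gives a clean binomial expansion:
\begin{equation*}
    \sum_{k=0}^{r_3} \binom{r_3}{k} (\delta R)^{r_3-k}\, \Bigl[R^{2}\Bigl(1 + \tfrac{(r_3 p)^{1/2}\sigma_t}{R}\Bigr)\Bigr]^{k} = \pa{\delta R + R^{2}\pa{1 + \tfrac{(r_3 p)^{1/2}\sigma_t}{R}}}^{r_3},
\end{equation*}
which is exactly the claimed bound.

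I do not anticipate a serious obstacle here; the lemma is essentially a routine perturbation of the moment estimate used in Lemma~\ref{lem:pnormbounds}. The only points requiring care are (i) making sure the moment bound of~\cite{gatmiry2024learning} can be applied to an arbitrary sub-product indexed by $S \subseteq [r_3]$ rather than the full product, and (ii) uniformly loosening $(|S|p)^{1/2}$ to $(r_3 p)^{1/2}$ so that the dependence on $|S|$ enters only through the exponent, allowing the sum over $S$ to collapse via the binomial theorem.
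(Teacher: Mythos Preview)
Your proposal is correct and follows essentially the same argument as the paper: both decompose $\tilde y = y + (\tilde y - y)$, expand the product over subsets $S \subseteq [r_3]$, bound the perturbation factors pointwise by $\delta R$, invoke the moment bound of~\cite{gatmiry2024learning} on the remaining sub-product, and collapse the sum via the binomial theorem. The only cosmetic difference is that the paper phrases the argument directly in the $L^p$ norm (yielding a slightly stronger inequality than stated), whereas you work with $\E|\cdot|$ and pass to $\|\cdot\|_p$ via Jensen where needed.
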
 
   \begin{proof}
       Using Cauchy-Schwarz,
       \begin{align*}
            \ve{\prod_{j=1}^{r_3} \an{\tilde y, X^{(\ci_j)}}}_p  
            &\leq  
            \sum_{S \subseteq [k]}^{r_3} \ve{\prod_{j \in S}\an{y,X^{(\ci_j)}}}_p \ve{\prod_{j \notin S}\an{\tilde y - y,X^{(\ci_j)}}}_p
            \\
            &\leq \sum_{\ell=1}^{r_3} \binom{r_3}{\ell}
             \pa{R^2\pa{1 + \frac{(r_3 p)^{1/2}\sigma_t}{R}}}^{\ell} \pa{\delta R}^{r_3 - \ell}
             \\
             &\leq \pa{R^2\pa{1 + \frac{(r_3 p)^{1/2}\sigma_t}{R}} + \delta R}^{r_3}. \qedhere
       \end{align*}
   \end{proof}

\begin{lemma}\label{lem:parallelodebound}
    For $\nu \triangleq \pa{3\delta (\sqrt d + \sqrt{\ln(1/\epsilon_1)})}^{1/p}$, we have for all tuples of indices $\ci$ that
    \begin{equation}
        \ve{\indic{\mathcal A_1}\prod_{j=1}^{r_3} \an{\tilde y,\tilde X^{(\ci_j)}}}_p \leq 
        \pa{R^2\pa{1 + \frac{(2r_3 p)^{1/2}\sigma_t}{R}} + \delta R + \nu R\pa{\sqrt{2p d} + R + \delta}}^{r_3}\,.
        \label{eq:deltabasedbound} 
    \end{equation}
\end{lemma}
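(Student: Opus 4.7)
The plan is to translate the moment bound of Lemma~\ref{lem:momentwithouttilde}, which involves posterior samples from the true $q^{t,y}$, to the perturbed setting of samples from $q^{t,\tilde y}$. The crucial tool is Lemma~\ref{lem:klboundlemma-main}, which on the event $\mathcal A_1$ gives a total variation bound $\mathrm{TV}(q^{t,y}, q^{t,\tilde y}) \leq \tau_0$ with $\tau_0 \lesssim \delta(\sqrt d + \sqrt{\ln(1/\epsilon_1)})$. The first step is to couple each $\tilde X^{(i)} \sim q^{t,\tilde y}$ with an $X^{(i)} \sim q^{t,y}$ via maximal coupling conditional on $(y, \tilde y)$. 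Let $Z_i \triangleq \mathbf{1}\{\tilde X^{(i)} \neq X^{(i)}\}$; these are independent $0/1$ variables with $\mathbb E[Z_i \mid y, \tilde y] \leq \tau_0$ on $\mathcal A_1$. Because both posteriors are supported in a ball of radius $R$, whenever $Z_{\ci_j} = 0$ the two inner products coincide, and otherwise $|\langle \tilde y, \tilde X^{(\ci_j)}\rangle| \leq R\|\tilde y\|$, yielding the pointwise bound
\begin{equation*}
|\langle \tilde y, \tilde X^{(\ci_j)}\rangle| \;\leq\; |\langle \tilde y, X^{(\ci_j)}\rangle| + R\|\tilde y\|\,Z_{\ci_j}.
\end{equation*}

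Next I would multiply these inequalities across $j$ and expand the product into a sum over subsets $S \subseteq [r_3]$. Each term has the form $\prod_{j \in S}|\langle \tilde y, X^{(\ci_j)}\rangle|\cdot R^{|S^c|}\|\tilde y\|^{|S^c|}\prod_{j \in S^c}Z_{\ci_j}$. Taking the $p$-norm with the indicator $\mathbf{1}_{\mathcal A_1}$, applying the triangle inequality over $S$, and then splitting each summand via Cauchy--Schwarz, I separate a ``clean'' factor $\prod_{j \in S}|\langle \tilde y, X^{(\ci_j)}\rangle|$ from a ``perturbation'' factor $\|\tilde y\|^{|S^c|}\prod_{j \in S^c}Z_{\ci_j}$. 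The clean factor is controlled by Lemma~\ref{lem:momentwithouttilde} applied with $p$ replaced by $2p$, giving a contribution $(R^2(1+(2r_3 p)^{1/2}\sigma_t/R)+\delta R)^{|S|}$. For the perturbation factor, I would condition on $(y, \tilde y)$ to convert the independent $Z_{\ci_j}$'s into a factor $\tau_0^\alpha$ for an appropriate exponent $\alpha$, then combine with a Gaussian moment bound $\|\|\tilde y\|\|_q \lesssim R + \delta + \sigma_t(\sqrt d + \sqrt q)$ derived from the decomposition $\tilde y = y + (\tilde y - y)$, $y - X = \sigma_t\xi$ with $\xi \sim \gamma^d$, and $\|\tilde y - y\|\leq \delta$.

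Finally, summing over $S$ the sum assembles into the binomial expansion of $(A+B)^{r_3}$ with $A = R^2(1+(2r_3 p)^{1/2}\sigma_t/R)+\delta R$ and $B = \nu R(\sqrt{2pd}+R+\delta)$, which is exactly the claimed bound. The main obstacle is the exponent-accounting in the previous step: one must distribute the outer $p$-norm asymmetrically between the clean moment factor and the perturbation factor so that the TV-based contribution from $\prod Z_{\ci_j}$ appears with precisely the exponent $1/p$ in the definition $\nu = (3\delta(\sqrt d + \sqrt{\ln(1/\epsilon_1)}))^{1/p}$, rather than a strictly smaller exponent that would yield a bound too weak to recombine as a clean binomial. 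A naive symmetric Cauchy--Schwarz split produces $\tau_0^{1/(2p)}$ in the perturbation factor and must be replaced with a carefully chosen Hölder triple that trades off the Gaussian moment of $\|\tilde y\|$ against the TV-derived factor to produce exactly $\tau_0^{1/p}$.
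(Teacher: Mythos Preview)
Your proposal follows the same route as the paper: use the TV bound from Lemma~\ref{lem:klboundlemma-main} to build a maximal coupling of each $(X^{(i)},\tilde X^{(i)})$ given $(y,\tilde y)$, decompose over the subset of factors where the coupling fails, control the ``clean'' part via Lemma~\ref{lem:momentwithouttilde} (at exponent $2p$, which is why the final bound carries $(2r_3p)^{1/2}$), control the ``dirty'' part via the crude bound $|\langle\tilde y,\tilde X\rangle|\le R\|\tilde y\|$ together with the TV probability, and recombine as a binomial. The only cosmetic difference is that the paper partitions the $p$-th moment directly over the events $\mathcal A_2(\mathcal U)=\{\text{agreement set}=\mathcal U\}$ and then splits via Cauchy--Schwarz, whereas you pointwise-expand and then apply Minkowski; these are interchangeable.

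On the exponent worry: no H\"older triple is needed --- the fix is just the order of operations. In the term
\[
\mathbf 1_{\mathcal A_1}\Bigl(\prod_{j\in S}|\langle\tilde y, X^{(\ci_j)}\rangle|\Bigr)^{p}\,(R\|\tilde y\|)^{p|S^c|}\prod_{j\in S^c}Z_{\ci_j},
\]
first take conditional expectation over the coupling randomness for the indices appearing in $S^c$, given $(y,\tilde y)$ and the $X^{(\ci_j)}$'s for $j\in S$. On $\mathcal A_1$ this extracts the full factor $\tau_0^{|S^c|}$, since the $Z_{\ci_j}$'s are conditionally independent Bernoullis with mean at most $\tau_0$ and are independent of everything else in the product. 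Only afterwards apply Cauchy--Schwarz in the remaining $(y,X)$-randomness to separate $\prod_{S}|\langle\tilde y,X\rangle|$ from $(R\|\tilde y\|)^{|S^c|}$. This keeps $\tau_0^{|S^c|}$ in the $p$-th moment, hence exactly $\nu=\tau_0^{1/p}$ per dirty factor after taking the $p$-th root, and the sum over $S$ closes into the claimed binomial via $A^p+(\nu B)^p\le (A+\nu B)^p$. If you do Cauchy--Schwarz first and condition second, you indeed only get $\tau_0^{|S^c|/2}$, which is the pitfall you flagged.
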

\begin{proof}
    First, note that we have the following bound:
    \begin{align}
        \ve{\prod_{j=1}^{r_3} \an{\tilde y,\tilde X^{(\ci_j)}}}_p^p
        &= \E \prod_{j=1}^{r_3} \an{\tilde y,\tilde X^{(\ci_j)}}^p\nonumber\\
        &\leq \E \prod_{j=1}^{r_3}\ve{\tilde y}^p \ve{\tilde X^{(\ci_j)}}^p\nonumber\\
        &\leq \E \prod_{j=1}^{r_3}\pa{\ve{y - \tilde y} + \ve{\xi} + \ve{ X^{\ci_j}}}^p \ve{\tilde X^{(\ci_j)}}^p\nonumber\\
        &\leq \E \prod_{j=1}^{r_3}\pa{\delta + \ve{\xi} + R}^p R^p\nonumber\\
        &\leq \E \sum_{\ell=1}^{r_3 p}\binom{r_3 p}{\ell}\ve{\xi}^\ell \pa{R + \delta}^{r_3 p-\ell} R^{r_3 p}\nonumber\\
        &\leq \sum_{\ell=1}^{r_3 p}\binom{r_3 p}{\ell}\pa{\sqrt{\ell d}}^\ell \pa{R + \delta}^{r_3 p-\ell} R^{r_3p} \nonumber\\
        &= \pa{R\pa{\sqrt{p d} + R + \delta}}^{r_3p},\label{eq:firstder_new}
    \end{align}
    which implies
    \begin{align*}
         \ve{\prod_{j=1}^{r_3} \an{\tilde y,\tilde X^{(\ci_j)}}}_p
         \leq \pa{R\pa{\sqrt{p d} + R + \delta}}^{r_3}.
    \end{align*}
    Therefore, applying Lemma~\ref{lem:klboundlemma-main} with $\epsilon_1 = \delta$, on event $\mathcal{A}_1$ we have
    \begin{align*}
        TV(q^{t,y},q^{t,\tilde{y}}) \leq 3\delta (\sqrt d + \sqrt{\ln(1/\epsilon_1)}),
    \end{align*}
    which means for all $m$,
    \begin{align*}
        TV(X^{(m)}, \tilde X^{(m)}) \leq 3\delta (\sqrt d + \sqrt{\ln(1/\epsilon_1)}).
    \end{align*}
    Therefore, the optimal coupling between $X^{(m)}$ and $\tilde X^{(m)}$ conditioned on $y$ satisfies
    \begin{align}
        \mathbb P\pa{X^{(m)} = \tilde X^{(m)}} \geq 1 - 3\delta (\sqrt d + \sqrt{\ln(1/\epsilon_1)})
        \label{eq:tvbound}
    \end{align}
 Now equipped with this coupling, for a subset $\mathcal U\subset [r_3]$, let $\mathcal A_2 = \mathcal A_2(\mathcal U)$ be the event that $X^{(\ci_j)} = \tilde X^{(\ci_j)}$ for all $j \in \mathcal U$.  
 Then we can write
    \begin{align*}
        &\E \pa{\indic{\mathcal A_1\cap \mathcal A_2(\mathcal U)} \prod_{j=1}^{r_3} \an{\tilde y,\tilde X^{(\ci_j)}}}^p \\
        &= \E
        \pa{\indic{\mathcal A_1 \cap \mathcal A_2(\mathcal U)}\prod_{j\in \mathcal U} \an{\tilde y,\tilde X^{(\ci_j)}}}^p \pa{\indic{\mathcal A_1 \cap \mathcal A_2(\mathcal U)}\prod_{j\notin \mathcal U} \an{\tilde y,\tilde X^{(\ci_j)}}}^p.
        \\
        &\leq
        \sqrt{\E\pa{\indic{\mathcal A_1 \cap \mathcal A_2(\mathcal U)}\prod_{j\in \mathcal U} \an{\tilde y,\tilde X^{(\ci_j)}}}^{2p}} \sqrt{\E \pa{\indic{\mathcal A_1 \cap \mathcal A_2(\mathcal U)}\prod_{j\notin \mathcal U} \an{\tilde y,\tilde X^{(\ci_j)}}}^{2p}}\numberthis\label{eq:couplingseparation}
    \end{align*}
    But on one hand, from Equation~\eqref{eq:tvbound} and the independence of the couplings, we have 
    \begin{equation*}
        \mathbb P(\mathcal A_2(\mathcal U) | \mathcal A_1) \leq  \pa{3\delta (\sqrt d + \sqrt{\ln(1/\epsilon_1)})}^{r_3 - |\mathcal U|}\,.    
    \end{equation*}
    Therefore, similar to the derivation in~\eqref{eq:firstder_new}
    \begin{align*}
         &\E \pa{\indic{\mathcal A_1 \cap \mathcal A_2(\mathcal U)}\prod_{j\notin \mathcal U} \an{\tilde y,\tilde X^{(\ci_j)}}}^p\\ 
         &\leq \E \indic{\mathcal A_1}\sum_{\ell=1}^{(r_3 - |\mathcal U|)p}\binom{(r_3 - |\mathcal U|)p}{\ell}\ve{\xi}^\ell \pa{R + \delta}^{(r_3 - |\mathcal U|)p-\ell} R^{(r_3 - |\mathcal U|)p} \mathbb P_{|y}(\mathcal A_2(\mathcal U) | \mathcal A_1)
         \\
         &\leq \mathbb P_{|y}(\mathcal A_2(\mathcal U) | \mathcal A_1)\sum_{\ell=1}^{(r_3 - |\mathcal U|)p}\binom{(r_3 - |\mathcal U|)p}{\ell}\ve{\xi}^\ell \pa{R + \delta}^{(r_3 - |\mathcal U|)p-\ell} R^{(r_3-|\mathcal U|)p} 
         \\
         &\leq 
         \pa{3\delta (\sqrt d + \sqrt{\ln(1/\epsilon_1)})}^{|\mathcal U|}\pa{R\pa{\sqrt{p d} + R + \delta}}^{(r_3 - |\mathcal U|) p}
         \\
         &= 
         \nu^{p(r_3 - |\mathcal U|)}\pa{R\pa{\sqrt{p d} + R + \delta}}^{(r_3 - |\mathcal U|) p},
    \end{align*}
    where $\nu \triangleq \pa{3\delta (\sqrt d + \sqrt{\ln(1/\epsilon_1)})}^{1/p}.$
    This implies
    \begin{equation}
         \ve{\indic{\mathcal A_1 \cap \mathcal A_2(\mathcal U)}\prod_{j\notin \mathcal U}^{r_3} \an{\tilde y,\tilde X^{(\ci_j)}}}_p
         \leq 
         \pa{3\delta (\sqrt d + \sqrt{\ln(1/\epsilon_1)})}^{1/p}\pa{R\pa{\sqrt{p d} + R + \delta}}^{r_3 - |\mathcal U|}\,. \label{eq:notinU}
    \end{equation}
    For the first term, we can use the coupling and substitute $\tilde X^{(\ci_j)}$'s by $X^{(\ci_j)}$ and then use Lemma~\ref{lem:momentwithouttilde}:
    \begin{align}
        \E \pa{\indic{\mathcal A_1 \cap \mathcal A_2(\mathcal U)}\prod_{j\in \mathcal U} \an{\tilde y,\tilde X^{(\ci_j)}}}^p
        &= \E \pa{\indic{\mathcal A_1 \cap \mathcal A_2(\mathcal U)}\prod_{j\in \mathcal U} \an{\tilde y,X^{(\ci_j)}}}^p \nonumber\\
        &\leq \E \pa{\prod_{j\in \mathcal U} \an{\tilde y,\tilde X^{(\ci_j)}}}^p \nonumber\\
        &\leq \pa{R^2\pa{1 + \frac{(r_3 p)^{1/2}\sigma_t}{R}} + \delta R}^{p |\mathcal U|}.\label{eq:firsttermbound}
    \end{align}
Combining Equations~\eqref{eq:firsttermbound} and~\eqref{eq:notinU} and plugging back into Equation~\eqref{eq:couplingseparation}:
\begin{align*}
    &\E \pa{\indic{\mathcal A_1} \prod_{j=1}^{r_3} \an{\tilde y,\tilde X^{(\ci_j)}}}^p
    \\
    &=
    \sum_{\mathcal U \subseteq [r_3]}\E \pa{\indic{\mathcal A_1\cap \mathcal A_2(\mathcal U)} \prod_{j=1}^{r_3} \an{\tilde y,\tilde X^{(\ci_j)}}}^p\\
    &\leq  
    \sum_{\mathcal U \subseteq [r_3]}\pa{R^2\pa{1 + \frac{(2r_3 p)^{1/2}\sigma_t}{R}} + \delta R}^{p|\mathcal U|}\pa{\nu R\pa{\sqrt{2p d} + R + \delta}}^{(r_3 - |\mathcal U|) p}\\
    &\leq \pa{R^2\pa{1 + \frac{(2r_3 p)^{1/2}\sigma_t}{R}} + \delta R + \nu R\pa{\sqrt{2p d} + R + \delta}}^{r_3 p}. \qedhere
\end{align*}
\end{proof}

\begin{lemma}[Replica bounds for the coupled variables]\label{lem:pnormboundstwo}
        Suppose we have
        \begin{align*}
        \delta \leq R \wedge 
        \frac{1}{\pa{\sqrt{d} + \sqrt{\ln(1/\epsilon_1)}}(\sqrt{2pd}/R + 1)^{p}}.
        \end{align*}
        Then for $\tilde y$ as defined in Section~\ref{sec:notation}, let $\tilde{X}^{(i)}$ be independent samples from the posterior distribution $q^{t,\tilde{y}}$. Then, for integers $r_2, r_3, r_4$ that satisfy $r_2 \leq 2(r_3 + r_4) \leq 2k$, under event  $\mathcal A_1$ defined previously, we have for all tuples of indices $\ci, \ck, \cll$,
        \begin{align*}
            &\ve{\indic{\mathcal A_1}\frac{(1 - \sigma_t^2)^{r_1}}{\sigma_t^{r_2}}\prod_{j=1}^{r_3} \an{\tilde y, \tilde X^{(\ci_j)}} \prod_{j=1}^{r_4}\an{\tilde X^{(\ck_j)}, \tilde X^{(\cll_j)}} \tilde X}_{p, \infty} \lesssim  \frac{R}{\sigma_t^{2k}}\pa{\frac{R}{\sigma_t} + (kp)^{1/2}}^{2k},\\ 
            &\ve{\indic{\mathcal A_1}\frac{(1 - \sigma_t^2)^{r_1}}{\sigma_t^{r_2}}\prod_{j=1}^{r_3} \an{\tilde y, \tilde X^{(\ci_j)}} \prod_{j=1}^{r_4}\an{\tilde X^{(\ck_j)}, \tilde X^{(\cll_j)}} \tilde y}_{p, \infty}  
             \lesssim \frac{R + 1}{\sigma_t^{2k}}\pa{\frac{R}{\sigma_t} + (kp)^{1/2}}^{2k}\,,
        \end{align*}
        where by $\ve{v}_{p, \infty}$ we mean the infinity norm of the $p$th moment of the vector $v$.
    \end{lemma}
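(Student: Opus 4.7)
\noindent\textbf{Proof proposal for Lemma~\ref{lem:pnormboundstwo}.} The plan is to reduce both inequalities to Lemma~\ref{lem:parallelodebound} combined with crude pointwise bounds, essentially mimicking the algebra already done in Lemma~\ref{lem:pnormbounds} but with $\tilde X^{(\cdot)}$ in place of $X^{(\cdot)}$. The key observation is that since $\qbase$ is supported in the origin-centered ball of radius $R$, every $\tilde X^{(i)}$ drawn from the posterior $q^{t,\tilde y}$ satisfies $\|\tilde X^{(i)}\| \le R$ almost surely; thus the factors $\prod_{j=1}^{r_4} \langle \tilde X^{(\ck_j)},\tilde X^{(\cll_j)}\rangle$ are bounded pointwise by $R^{2r_4}$, and the trailing $\tilde X$ is bounded pointwise by $R$. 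The only nontrivial factor to control is $\prod_{j=1}^{r_3} \langle \tilde y,\tilde X^{(\ci_j)}\rangle$, for which Lemma~\ref{lem:parallelodebound} already supplies the needed $L^p$ bound on the event $\mathcal A_1$.

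For the first inequality, I would pull the pointwise factors out to obtain
\begin{equation*}
    \ve{\indic{\mathcal A_1}\tfrac{(1-\sigma_t^2)^{r_1}}{\sigma_t^{r_2}}\prod_j \an{\tilde y,\tilde X^{(\ci_j)}} \prod_j \an{\tilde X^{(\ck_j)},\tilde X^{(\cll_j)}} \tilde X}_{p,\infty} \leq \tfrac{R^{2r_4+1}}{\sigma_t^{r_2}} \ve{\indic{\mathcal A_1}\prod_j \an{\tilde y,\tilde X^{(\ci_j)}}}_p\,,
\end{equation*}
then plug in Lemma~\ref{lem:parallelodebound}. Here the standing hypothesis $\delta \le 1/\bigl((\sqrt d+\sqrt{\ln(1/\epsilon_1)})(\sqrt{2pd}/R+1)^p\bigr)$ forces $\nu^p = 3\delta(\sqrt d+\sqrt{\ln(1/\epsilon_1)}) \lesssim (\sqrt{2pd}/R+1)^{-p}$, so the extra additive term $\nu R(\sqrt{2pd}+R+\delta)$ in Lemma~\ref{lem:parallelodebound} is at most $O(R^2)$ and gets absorbed into $R^2(1+(2r_3 p)^{1/2}\sigma_t/R)$ (using also $\delta\le R$). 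This reduces the bound to the same form as in the proof of Lemma~\ref{lem:pnormbounds}, namely $\lesssim R^{2r_3+2r_4+1}\sigma_t^{-r_2}(1+(kp)^{1/2}\sigma_t/R)^{r_3}$, which the same algebra (using $\sigma_t\le 1$, $r_2\le 2(r_3+r_4)\le 2k$, and $r_3\le 2k$) converts into $R\sigma_t^{-2k}(R/\sigma_t + (kp)^{1/2})^{2k}$, up to a constant.

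For the second inequality, $\tilde y$ no longer admits a deterministic bound, so I would apply Cauchy--Schwarz to split it off:
\begin{equation*}
    \ve{\indic{\mathcal A_1}\prod_j \an{\tilde y,\tilde X^{(\ci_j)}}\,\tilde y}_{p,\infty} \leq \ve{\indic{\mathcal A_1}\prod_j \an{\tilde y,\tilde X^{(\ci_j)}}}_{2p}\cdot \ve{\tilde y}_{2p,\infty}\,.
\end{equation*}
The first factor is handled by Lemma~\ref{lem:parallelodebound} at the exponent $2p$, and the second is controlled via the triangle inequality $\ve{\tilde y}_{2p,\infty} \le \delta + R + \sigma_t\ve{\xi}_{2p,\infty} \lesssim R + 1 + \sigma_t\sqrt{p}$, which produces the stated $(R+1)$ prefactor. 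Combining with the pointwise $R^{2r_4}$ bound on $\prod_j\an{\tilde X^{(\ck_j)},\tilde X^{(\cll_j)}}$ and the same final algebraic simplification gives the desired bound.

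\noindent\textbf{Main obstacle.} The only genuinely delicate step is verifying that the hypothesis on $\delta$ is exactly tight enough to make the coupling-error term $\nu R(\sqrt{2pd}+R+\delta)$ harmless; every other step is pointwise domination or a direct application of previously proved lemmas. In particular, care must be taken that the bound on $\delta$ is consistent with the one required by Lemma~\ref{lem:klboundlemma-main} (which Lemma~\ref{lem:parallelodebound} invokes with $\epsilon_1 = \delta$), so that the coupling estimate actually applies on $\mathcal A_1$.
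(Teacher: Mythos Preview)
Your proposal is correct and follows essentially the same route as the paper's proof: both arguments first use the hypothesis on $\delta$ to show that the coupling-error term $\nu R(\sqrt{2pd}+R+\delta)$ in Lemma~\ref{lem:parallelodebound} is dominated by the main term $R^2(1+(r_3 p)^{1/2}\sigma_t/R)$, then pull out the pointwise $R^{2r_4}$ (resp.\ $R^{2r_4+1}$) factors and repeat the algebra of Lemma~\ref{lem:pnormbounds}; for the second inequality both split off $\tilde y$ via Cauchy--Schwarz. The only cosmetic difference is that the paper bounds the split-off factor as $\ve{\tilde y - y}_{2,\infty}+\ve{y}_{2,\infty}\le \delta+R+\sigma_t$, whereas you use the $2p$-moment and pick up an extra $\sigma_t\sqrt{p}$ (harmlessly absorbed into the final $(R/\sigma_t+(kp)^{1/2})^{2k}$).
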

    \begin{proof}
    Note that given the condition on $\delta$,
    then in the bound of Equation~\eqref{eq:deltabasedbound} in Lemma~\ref{lem:parallelodebound} the second term is dominated by the first term, which implies
    \begin{align*}
        \ve{\indic{\mathcal A_1}\prod_{j=1}^{r_3} \an{\tilde y,\tilde X^{(\ci_j)}}}_p
        &\lesssim \pa{R^2\pa{1 + \frac{(r_3 p)^{1/2}\sigma_t}{R}} + \delta R}^{r_3}\\
        &\lesssim \pa{R^2\pa{1 + \frac{(r_3 p)^{1/2}\sigma_t}{R}}}^{r_3}.
    \end{align*}

   Therefore, from $r_2 \leq 2(r_3 + r_4)$ and using the fact that support of $q_0$ has bounded radius $R$:
    \begin{align*}
        &\ve{\indic{\mathcal A_1}\frac{(1 - \sigma_t^2)^{r_1}}{\sigma_t^{r_2}}\prod_{j=1}^{r_3} \an{\tilde y, \tilde X^{(\ci_j)}} \prod_{j=1}^{r_4}\an{\tilde X^{(\ck_j)}, \tilde X^{(\cll_j)}} \tilde X}_{p,\infty}\\ 
        &\leq \frac{1}{\sigma_t^{r_2}} \ve{\indic{\mathcal A_1}\prod_{j=1}^{r_3} \an{\tilde y, \tilde X^{(\ci_j)}}}_p 
 R^{2r_4 + 1}\\
 &
 \lesssim R\pa{\frac{1}{\sigma_t^2}}^{r_3 + r_4} \pa{\frac{R^2}{\sigma_t^{2}}}^{r_4} \pa{\frac{R^2}{\sigma_t^2} + \frac{R}{\sigma_t}(r_3p)^{1/2}}^{r_3}
 \\
 &
 \leq
 \frac{R}{\sigma_t^{2k}}\pa{\frac{R}{\sigma_t}}^{2r_4} \pa{\pa{\frac{R}{\sigma_t} + (r_3p)^{1/2}}^2}^{r_3}
 \\
 &\leq  \frac{R}{\sigma_t^{2k}}\pa{\frac{R}{\sigma_t} + (kp)^{1/2}}^{2k}.
    \end{align*}
    where we used the fact that $r_3 + r_4 \leq k$.
    Similarly for the other term, using Cauchy Swartz 
    \begin{align*}
        &\ve{\indic{\mathcal A_1} \frac{(1 - \sigma_t^2)^{r_1}}{\sigma_t^{r_2}}\prod_{j=1}^{r_3} \an{\tilde y, \tilde X^{(\ci_j)}} \prod_{j=1}^{r_4}\an{\tilde X^{(\ck_j)}, \tilde X^{(\cll_j)}} y}_{p,\infty}\\ 
        &\leq  \frac{1}{\sigma_t^{r_2}} \ve{\prod_{j=1}^{r_3} \an{\tilde y, \tilde X^{(\ci_j)}}}_{2p} 
        R^{2r_4}\pa{\ve{\tilde y - y}_{2,\infty} + \ve{y}_{2,\infty}}\\
        &\leq  \frac{1}{\sigma_t^{r_2}} \ve{\prod_{j=1}^{r_3} \an{\tilde y, \tilde X^{(\ci_j)}}}_{2p} 
        R^{2r_4}\pa{R + \sigma_t + \delta}\\
        &\lesssim \frac{R + 1}{\sigma_t^{2k}}\pa{\frac{R}{\sigma_t} + (kp)^{1/2}}^{2k}.\qedhere
    \end{align*}
    \end{proof}

\begin{lemma}[High probability control on higher derivatives -- fixed time]\label{lem:highprobderivatives}
    In the setting of Lemma~\ref{lem:pnormboundstwo} let
    \begin{align*}
        g(\bart) = \tilde y_{\bart} + \nabla \log q_t(\tilde y_{\bart}),
    \end{align*}
    where $\tilde{y}(t)$ is defined in Eq.~\eqref{eq:Fstar}. Then, for constant $c_1$ and arbitrary coordinate $i$, 
    \begin{align*}
        &\mathbb P\pa{\absv{\partial^k_{\bart} g_i(\bart)} \geq c_1(60k)^k (R + 1)\pa{\frac{1}{\sigma_t^2}}^k \pa{\frac{R}{\sigma_t} + \sqrt{\ln(1/\epsilon_1)}}^{2k}} \leq \epsilon_1.
    \end{align*}
\end{lemma}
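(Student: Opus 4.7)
The plan is to convert the $p$-th moment bound from Lemma~\ref{lem:pnormboundstwo} into a high-probability tail bound via Markov's inequality, with the moment index $p$ tuned to the failure parameter $\epsilon_1$.

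First, I would observe that the derivation leading to Lemma~\ref{lem:derformula} uses only the probability-flow ODE relation $\dot z_s = z_s + \nabla\log q_s(z_s)$ together with the Tweedie identity $\nabla\log q_s(z) = (\mathbb E^{s,z}X_s - z)/\sigma_s^2$; it does not use that $z$ is initialized at the exact marginal $q_0$. Hence the same expansion applies verbatim with $\tilde y_{\bart}$ in place of $y_{\bart}$ and posterior draws $\tilde X^{(\cdot)}\sim q^{\bart,\tilde y_{\bart}}$, and the coefficients still satisfy $\sum(|a_{\ci,\ck,\cll}| + |b_{\ci,\ck,\cll}|) \le (74k)^k$. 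This step is purely formal.

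Next, for a fixed coordinate $i$, I would bound the $p$-th moment of $\indic{\mathcal A_1}\,(\partial_{\bart}^k g(\bart))_i$ termwise. By the triangle inequality applied to the expansion above, together with Jensen's inequality to commute the inner posterior expectation with the outer $p$-norm over the randomness of $\tilde y_{\bart}$, Lemma~\ref{lem:pnormboundstwo} (valid because $\delta$ is assumed small enough for its hypothesis) gives
\begin{equation*}
\bigl\|\indic{\mathcal A_1}\,(\partial_{\bart}^k g(\bart))_i\bigr\|_p
\;\lesssim\; (74k)^k \,\frac{R+1}{\sigma_t^{2k}}\,\Bigl(\frac{R}{\sigma_t} + (kp)^{1/2}\Bigr)^{2k}.
\end{equation*}

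Third, I would apply Markov's inequality at $p = \Theta(\ln(1/\epsilon_1))$. This converts the moment bound into a tail bound of the form
\begin{equation*}
\mathbb P\!\left(\indic{\mathcal A_1}\cdot|(\partial_{\bart}^k g(\bart))_i|\ge c_1(Ck)^k(R+1)\sigma_t^{-2k}\bigl(R/\sigma_t + \sqrt{\ln(1/\epsilon_1)}\bigr)^{2k}\right)\le \epsilon_1/2,
\end{equation*}
where the absolute constant $C$ absorbs (i) the factor $e$ produced by Markov, (ii) the inflation from $\sqrt{kp}=\sqrt{k\ln(1/\epsilon_1)}\le \sqrt{k}\bigl(R/\sigma_t + \sqrt{\ln(1/\epsilon_1)}\bigr)$ after raising to the $2k$-th power, and (iii) the $(74k)^k$ coefficient mass. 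A union bound with $\mathbb P(\mathcal A_1^c)\le \epsilon_1/2$ (obtained by replacing $\epsilon_1$ by $\epsilon_1/2$ in the definition of $\mathcal A_1$) then yields the claim.

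The main obstacle is not conceptual but bookkeeping: the choice of $p$ has to simultaneously make $(kp)^{1/2}$ comparable to $\sqrt{\ln(1/\epsilon_1)}$ inside the $(\,\cdot\,)^{2k}$ factor and make $\epsilon_1^{-1/p}$ bounded by a constant independent of $k$. The direct choice $p = \ln(1/\epsilon_1)$ achieves this at the cost of an absolute constant $C$ inside the base of the exponential in $k$; the stated constant $60$ should be read as "some absolute constant", consistent with the $\lesssim$ scalings in Lemma~\ref{lem:bnd1} and Theorem~\ref{thm:derivative}. The heavy lifting has already been done in Lemma~\ref{lem:derformula} (the combinatorial expansion) and Lemma~\ref{lem:pnormboundstwo} (the dimension-free $p$-norm bound exploiting the bounded-support structure of $\qbase$); Lemma~\ref{lem:highprobderivatives} is a clean tail-bound corollary.
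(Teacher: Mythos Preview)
Your approach is essentially the paper's: combine the expansion of Lemma~\ref{lem:derformula} (transported to $\tilde y$) with the replica $p$-norm bound of Lemma~\ref{lem:pnormboundstwo}, then convert to a tail bound via Markov. The only substantive discrepancy is your choice of moment index. You take $p=\ln(1/\epsilon_1)$; the paper instead sets $p=\ln(1/\epsilon_1)/k$. With the paper's choice, $(kp)^{1/2}=\sqrt{\ln(1/\epsilon_1)}$ exactly, so the $(R/\sigma_t+(kp)^{1/2})^{2k}$ factor in the moment bound matches the threshold term and the Markov ratio is simply $(1/e)^{2kp}=(1/e)^{2\ln(1/\epsilon_1)}\le\epsilon_1$.

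With your choice $p=\ln(1/\epsilon_1)$, however, $(kp)^{1/2}=\sqrt{k}\cdot\sqrt{\ln(1/\epsilon_1)}$, and the ``inflation'' you mention is not harmless: bounding $R/\sigma_t+\sqrt{k}\sqrt{\ln(1/\epsilon_1)}\le \sqrt{k}\bigl(R/\sigma_t+\sqrt{\ln(1/\epsilon_1)}\bigr)$ and raising to the $2k$-th power costs an extra factor of $k^k$. Combined with the $(74k)^k$ coefficient mass, this yields a bound of the shape $(Ck^2)^k$ rather than $(Ck)^k$, which is not an ``absolute constant in the base'' and does not match the stated lemma. The fix is precisely the paper's tuning $p=\ln(1/\epsilon_1)/k$: it simultaneously makes $(kp)^{1/2}$ equal to the target $\sqrt{\ln(1/\epsilon_1)}$ and makes the Markov exponent $2kp$ independent of $k$.
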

\begin{proof}
    Using Lemmas~\ref{lem:pnormboundstwo} and~\ref{lem:derformula}, there is a constant $c_1$ such that 
    \begin{align*}
        \ve{\partial^k_{\bart} g_i(\bart)}_{p} &= \ve{\partial^k_{\bart} (y_{\bart, i} + \partial_i \log q_t(y_{\bart}))}_{p}\\
        &\leq (c_1/e)(60k)^k (R+1)
        \pa{\frac{1}{\sigma_t^2}}^k\pa{\frac{R}{\sigma_t} + (kp)^{1/2}}^{2k}.
    \end{align*}
    Moreover, for fixed $j \leq d$, let $\mathcal A_3 \subseteq \mathcal A_1$ be the event inside $\mathcal A_1$ where 
    $$\absv{\partial^k_{\bart} g_i(\bart)} \geq c_1(60k)^k (R + 1)\pa{\frac{1}{\sigma_t^2}}^k\pa{\frac{R}{\sigma_t} + \sqrt{\ln(1/\epsilon_1)}}^{2k}.$$
    Taking expectation with respect to $y$
    \begin{align*}
        &\E_y \E_{\{\tilde X^{(i)}\}} \absv{\partial^k_{\bart} g_i(\bart)}^p
        \\
        &\geq \mathbb P(\mathcal A_3) \pa{c_1(60k)^k(R + 1)\pa{\frac{1}{\sigma_t^2}}^k\pa{\frac{R}{\sigma_t} + \sqrt{\ln(1/\epsilon_1)}}^{2k}}^p.
    \end{align*}
    Now setting $p = \ln(1/\epsilon_1)/k$, we get
    \begin{align*}
        \mathbb P(\mathcal A_3) &\leq \pa{\frac{\frac{R}{\sigma_t}+ (kp)^{1/2}}{e\frac{R}{\sigma_t} + e  \sqrt{\ln(1/\epsilon_1)}}}^{2kp}= \pa{\frac{1}{e}}^{2\ln(1/\epsilon_1)} \leq \epsilon_1\,.\qedhere
    \end{align*}
\end{proof}

\begin{lemma}\label{lem:choosemapping}
In the same setting of Lemma~\ref{lem:highprobderivatives}, for
\begin{align*}
        \delta \leq R \wedge 
        \frac{1}{\pa{\sqrt{d} + \sqrt{\ln(1/\epsilon_1)}}(\sqrt{2\ln(1/\epsilon)d}/(R\sqrt k) + 1)^{\ln(1/\epsilon_1) / k}}\,,
\end{align*}
for some constant $c_1$, we have
\begin{align*}
    \mathbb P\pa{\sup_{\tilde y: \ve{\tilde y - y}\leq \delta} \absv{\partial_t^k g_i(t)\Big|_{s=0}}   \geq c_1(60k)^k (R + 1)\pa{\frac{1}{\sigma_t^2}}^k \pa{\frac{R}{\sigma_t} + \sqrt{\ln(1/\epsilon_1)}}^{2k}} \leq \epsilon_1.
\end{align*}
\end{lemma}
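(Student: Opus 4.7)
The plan is to lift the pointwise (in $\tilde y$) high-probability estimate of Lemma~\ref{lem:highprobderivatives} to a uniform bound over the ball $B_\delta \triangleq \{\tilde y\in\R^d : \ve{\tilde y - y} \leq \delta\}$ by a standard $\eta$-net argument combined with a Lipschitz estimate in $\tilde y$ for the map $\tilde y \mapsto \partial_{\bart}^k g_i(\bart)\big|_{\bart=0}$. Crucially, Lemma~\ref{lem:highprobderivatives} is a probability statement over the randomness of $y$, and the quantity to be controlled is a deterministic function of $\tilde y$ once $y$ is fixed; this makes a deterministic net (shifted by $y$) admissible.

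Concretely, I would first fix a minimal $\eta$-net $\mathcal N$ of the centered ball $B(0,\delta)\subset\R^d$ with $|\mathcal N|\le (3\delta/\eta)^d$. For each $v\in\mathcal N$ apply Lemma~\ref{lem:highprobderivatives} at $\tilde y = y+v$ with failure probability $\epsilon_1/|\mathcal N|$; this is legitimate precisely because the $\delta$ hypothesis of Lemma~\ref{lem:choosemapping} is Lemma~\ref{lem:pnormboundstwo}'s hypothesis with $\ln(1/\epsilon_1)$ replaced by the inflated $\ln(|\mathcal N|/\epsilon_1)$. A union bound then yields simultaneously for all $\tilde y_0 \in y + \mathcal N$,
\[
\absv{\partial_{\bart}^k g_i(\bart)\big|_{\bart=0}(\tilde y_0)} \leq c_1(60k)^k(R+1)\pa{\tfrac{1}{\sigma_t^2}}^k\pa{\tfrac{R}{\sigma_t} + \sqrt{\ln(1/\epsilon_1)+d\ln(3\delta/\eta)}}^{2k}
\]
with probability at least $1-\epsilon_1$. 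Choosing $\eta$ inverse polynomial in the relevant parameters makes $d\ln(3\delta/\eta) = O(\ln(1/\epsilon_1))$, which is absorbed by slightly enlarging the constant $c_1$. The second step is to prove a modulus of continuity $\bigl|\partial_{\bart}^k g_i(\bart)\big|_{\bart=0}(\tilde y)-\partial_{\bart}^k g_i(\bart)\big|_{\bart=0}(\tilde y')\bigr|\le L_k \ve{\tilde y - \tilde y'}$ on the event $\mathcal A_1$, and then combine $\sup_{\tilde y\in B_\delta}|\cdot|\le \max_{\tilde y_0\in y+\mathcal N}|\cdot| + L_k\eta$ with a further reduction of $\eta$ so that $L_k\eta$ is dominated by the stated right-hand side.

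The main obstacle I anticipate is obtaining the Lipschitz constant $L_k$. The expressions on the right-hand side of Lemma~\ref{lem:derformula} are expectations against the posterior $q^{t,\tilde y}$, which itself depends on $\tilde y$, so one cannot naively differentiate under the expectation. One clean route is a Stein-type identity analogous to the derivation of Lemma~\ref{lem:dovomi}: each differentiation of $\Etty[\,\cdot\,]$ in $\tilde y$ produces another posterior-expectation expression with one extra factor of order $(\tilde y - X_t)/\sigma_t^2$ and one extra inner product against $\tilde y$, to which the moment/coupling bound of Lemma~\ref{lem:pnormboundstwo} applies verbatim, yielding $L_k$ of order $O((\sqrt d + \sqrt{\ln(1/\epsilon_1)}+R)/\sigma_t^2)$ times the Lemma~\ref{lem:highprobderivatives} bound. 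An alternative is to use Lemma~\ref{lem:klboundlemma-main} at the finer scale $\eta$, giving $\mathrm{TV}(q^{t,\tilde y},q^{t,\tilde y'})\lesssim \eta(\sqrt d+\sqrt{\ln(1/\epsilon_1)})$, and transferring this TV bound to the bounded moments of Lemma~\ref{lem:pnormboundstwo}. In either case the overhead is $\poly(d,\sqrt{\ln(1/\epsilon_1)},R,1/\sigma_t)$, which propagates only logarithmically through the net cardinality and is absorbed harmlessly by the constant $c_1$.
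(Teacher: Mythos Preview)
Your approach is far more elaborate than what the paper does, and the key observation you are missing makes the whole $\eta$-net/Lipschitz machinery unnecessary. The paper's proof is one line: since the moment bounds feeding into Lemma~\ref{lem:highprobderivatives} (namely Lemmas~\ref{lem:parallelodebound} and~\ref{lem:pnormboundstwo}) hold for \emph{any measurable} $\tilde y = \tilde y(y)$ satisfying $\|\tilde y - y\|\le\delta$, one simply picks
\[
\tilde y(y)\;\triangleq\;\mathop{\mathrm{argmax}}_{\tilde y:\ \|\tilde y - y\|\le\delta}\ \Bigl|\partial_{t}^k g_i(t)\big|_{t=0}\Bigr|
\]
and invokes Lemma~\ref{lem:highprobderivatives} once. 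You actually brush against this idea yourself when you take the net to be ``deterministic, shifted by $y$'' --- that already means you are applying Lemma~\ref{lem:highprobderivatives} at $\tilde y(y)=y+v$, a function of $y$. But once you grant that, there is no reason to restrict to finitely many $v$'s: the argmax is just another such measurable selection, and the lemma applies to it directly.

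Separately, your net argument as written has a quantitative gap. An $\eta$-net of a $d$-dimensional ball has cardinality $\Theta((\delta/\eta)^d)$, so after the union bound the effective failure parameter becomes $\epsilon_1/|\mathcal N|$ and the threshold in Lemma~\ref{lem:highprobderivatives} inflates $\sqrt{\ln(1/\epsilon_1)}$ to $\sqrt{\ln(1/\epsilon_1)+d\ln(3\delta/\eta)}$. Your claim that ``choosing $\eta$ inverse polynomial in the relevant parameters makes $d\ln(3\delta/\eta)=O(\ln(1/\epsilon_1))$'' goes the wrong way: shrinking $\eta$ makes this term larger, and in general there is no choice of $\eta$ that keeps $d\ln(3\delta/\eta)$ bounded by $O(\ln(1/\epsilon_1))$ unless $d=O(\ln(1/\epsilon_1))$. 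So to match the stated bound (which has no extra $\sqrt d$ inside the $(\cdot)^{2k}$), the net route would at best degenerate to a pure Lipschitz argument with $\eta\approx\delta$, and then the burden shifts entirely to controlling $L_k\delta$ --- which you only sketch. The paper's argmax selection avoids all of this.
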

\begin{proof}
     Note that the $\tilde y$ can be an arbitrary mapping of $y$ in the neighborhood $\ve{y - \tilde y}\leq \delta$. Hence, we pick $\tilde y$ to be 
     \begin{align*}
         \tilde y(y) \triangleq \text{argmax}_{\tilde y: \ve{\tilde y - y}\leq \delta} \absv{\partial_t^k g_i(t)\Big|_{t=0}}.
     \end{align*}
     The result then follows from Lemma~\ref{lem:highprobderivatives}.
\end{proof}

\begin{lemma}\label{lem:derivativeboundh}
    Let the distance between the two curves, namely the true and algorithm curves, be upper bounded by 
    \begin{align}
        \tilde \delta \triangleq R \wedge
        \frac{1}{\pa{\sqrt{d} + \sqrt{\ln(1/\epsilon_1)}}(\sqrt{2\ln(1/\epsilon_1) d}/(R \sqrt k) + 1)^{\ln(1/\epsilon_1) / k}}\label{eq:smallenoughbound} \,,
    \end{align}
    that is $\sup_{t\in [t_0, t_0 + h]} \norm{y(t) - \tilde{y}(t)} \le \tilde{\delta}$, where $\tilde{y}(t)$ is defined in Lemma~\ref{lem:choosemapping} and $y$ is defined analogously with $y(0) = y$. Then
\begin{align*}
    \mathbb P\pa{\sup_{t\in [t_0, t_0 + h]} \absv{\partial_t^k g_i(t)\Big|_{t=0}}   \geq C_{1}} \leq 2\epsilon_1.
\end{align*}
    where 
\[
C_{1} \triangleq c_1(60k)^k (R + 1)\pa{\frac{1}{\sigma_t^2}}^k \pa{\frac{R}{\sigma_t} + \sqrt{\ln\pa{\frac{R + \sqrt d + \sqrt{\ln(1/\epsilon_1)}}{\epsilon_1}}}}^{2k}.
\]
\end{lemma}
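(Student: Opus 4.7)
\textbf{Proof plan for Lemma~\ref{lem:derivativeboundh}.} The statement is a uniform-in-$t$ version of Lemma~\ref{lem:choosemapping}, which already gives a high-probability bound on $|\partial_\tau^k g_i(\tau)|_{\tau=t^{*}}|$ at an \emph{arbitrary but fixed} time $t^{*}$, valid over the whole $\tilde{\delta}$-ball of $\tilde y$'s around $y(t^{*})$. The natural approach is therefore a discretization-plus-continuity (net) argument, which is where the extra factor of $(R + \sqrt d + \sqrt{\ln(1/\epsilon_1)})$ inside the logarithm of $C_1$ comes from: it is essentially the size of the $\eta$-net needed to cover $[t_0, t_0+h]$ after balancing.

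First I would introduce an $\eta$-net $\mathcal N = \{t_0, t_0+\eta, \ldots, t_0+h\}$ of $[t_0,t_0+h]$ with $|\mathcal N|=N\leq h/\eta+1$, and apply Lemma~\ref{lem:choosemapping} at each grid point $t^{*}\in\mathcal N$ with failure parameter $\epsilon_1'=\epsilon_1/N$. The hypothesis of Lemma~\ref{lem:choosemapping} --- that the sup be taken over all $\tilde y$ with $\|\tilde y - y(t^{*})\|\leq \tilde{\delta}$ --- is precisely the assumption of the current lemma, so a union bound over $\mathcal N$ yields, with probability at least $1-\epsilon_1$, that
\[
|\partial_\tau^k g_i(\tau)|_{\tau=t^{*}}| \;\leq\; c_1(60k)^k(R+1)\sigma_{t^{*}}^{-2k}\bigl(R/\sigma_{t^{*}}+\sqrt{\ln(N/\epsilon_1)}\bigr)^{2k}
\]
for every $t^{*}\in\mathcal N$. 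Choosing $N$ polynomial in $R+\sqrt d+\sqrt{\ln(1/\epsilon_1)}$ makes $\sqrt{\ln(N/\epsilon_1)}$ match the logarithm appearing in $C_1$.

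The second half is the Lipschitz interpolation step. For an arbitrary $t\in[t_0,t_0+h]$ and the nearest grid point $t^{*}$,
\[
|\partial_\tau^k g_i(\tau)|_{\tau=t}| \;\leq\; |\partial_\tau^k g_i(\tau)|_{\tau=t^{*}}| + \eta\cdot\sup_{s\in[t_0,t_0+h]}|\partial_s^{k+1} g_i(s)|.
\]
To control the second term, I would re-run the derivative calculation of Lemma~\ref{lem:derformula} one order higher --- i.e.\ for order $k+1$ --- and combine it with Lemma~\ref{lem:pnormboundstwo} and the Markov-type argument of Lemma~\ref{lem:highprobderivatives} to obtain a crude bound $M$ on $|\partial_\tau^{k+1} g_i|$ that holds with probability at least $1-\epsilon_1$, again uniformly over the net (this contributes the second $\epsilon_1$ in the total failure probability $2\epsilon_1$). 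Because $M$ is at most singly-exponential in the problem parameters and only $\eta M$ must be absorbed into the pointwise bound, we need only $\eta\lesssim 1/(M\cdot\text{(pointwise bound)})$, which is achieved by an inverse-polynomial choice of $\eta$. This choice feeds back into $N=\Theta(h/\eta)$ and the $\ln(N/\epsilon_1)$ term, but only logarithmically, recovering exactly the $\sqrt{\ln((R+\sqrt d+\sqrt{\ln(1/\epsilon_1)})/\epsilon_1)}$ appearing in $C_1$.

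The main obstacle I anticipate is verifying the crude $(k+1)$-th derivative bound cleanly: the full expansion from Lemma~\ref{lem:derformula} at order $k+1$ is combinatorially messy, and one has to confirm that whatever worst-case bound one uses is at most polynomial/singly-exponential in $R,d,\sigma_t^{-1}$ so that the $\eta M$ correction contributes only a logarithm. In fact this step is somewhat lossy but harmless: any crude bound of the form $M \leq \exp(\text{poly}(R,d,k,1/\sigma_t,\ln(1/\epsilon_1)))$ is fine, since $\ln M$ enters only through the log inside $C_1$. The only care needed is to make the net argument self-contained, i.e.\ to ensure that the union-bound failure event does not interact badly with the $\mathcal A_1$ event already hidden inside Lemma~\ref{lem:choosemapping}; this is handled by taking the intersection of $\mathcal A_1$ at each grid point, which only costs another union bound.
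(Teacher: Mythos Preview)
Your net-plus-union-bound skeleton is right and matches the paper, but your interpolation step is both more complicated than necessary and, as written, circular. The paper does \emph{not} pass through the $(k+1)$-th derivative. Instead it exploits that Lemma~\ref{lem:choosemapping} already delivers a bound on $|\partial_t^k g_i|$ uniformly over the entire \emph{spatial} $\delta$-ball around $y(\bar t)$, at each fixed grid time $\bar t$. The only remaining job is to show that for any $t\in[t_0,t_0+h]$, the point $\tilde y(t)$ falls inside the $2\tilde\delta$-ball around $y(\bar t)$ for the nearest grid point $\bar t$; this follows from the ODE movement estimate $\|y(t)-y(\bar t)\|\le 6|t-\bar t|(R+\sqrt d+\sqrt{\ln(1/\epsilon_1)})$ of Lemma~\ref{lem:basicbounds} (which holds with probability $1-\epsilon_1$ and accounts for the second $\epsilon_1$ in the failure bound) together with the assumed curve closeness $\|\tilde y(t)-y(t)\|\le\tilde\delta$. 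Taking net accuracy $\eta=\tilde\delta/(6(R+\sqrt d+\sqrt{\ln(1/\epsilon_1)}))$ gives net size $|\mathcal N|\lesssim h(R+\sqrt d+\sqrt{\ln(1/\epsilon_1)})/\tilde\delta$, and the union bound over $\mathcal N$ produces exactly the extra logarithm appearing in $C_1$.

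The specific gap in your plan is the line ``obtain a crude bound $M$ on $|\partial_\tau^{k+1} g_i|$ that holds with probability at least $1-\epsilon_1$, again uniformly over the net''. A net-uniform bound is only a bound at grid points, whereas the mean-value correction $\eta\cdot\sup_{s\in[t_0,t_0+h]}|\partial_s^{k+1}g_i(s)|$ requires control at \emph{every} $s$ --- which is the same uniform-in-time problem you are trying to solve, just one derivative higher. You could break the recursion with a genuinely pointwise-for-all-$s$ crude bound (e.g., bound each term in the Lemma~\ref{lem:derformula} expansion at order $k+1$ deterministically once $\|y_s\|,\|\tilde y_s\|$ are controlled via Lemma~\ref{lem:basicbounds}), but that is extra machinery. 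The paper's spatial-ball shortcut sidesteps this entirely: because the $k$-th derivative at time $t$ is a function of $(t,\tilde y(t))$ alone (by the formula of Lemma~\ref{lem:derformula}), placing $\tilde y(t)$ inside the ball covered by Lemma~\ref{lem:choosemapping} at the nearby grid time is all that is needed.
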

\begin{proof}
    Suppose we take a cover $\mathcal C$ of the interval $[t_0, t_0 + h]$ with accuracy $\frac{\tilde \delta}{6\pa{R + \sqrt d + \sqrt{\ln(1/\epsilon_1)}}}$. Then, using Lemma~\ref{lem:choosemapping} with a union bound, particularly since $2\tilde \delta$ satisfies the condition of this lemma, we get (using $h \leq 1$)
    \begin{align}
    \mathbb P\pa{\sup_{\tilde y: \ve{\tilde y - y}\leq 2\tilde \delta, y\in \mathcal C} \absv{\partial_t^k g_i(t)\Big|_{t=0}}   \geq C_1} \leq \epsilon_1.\label{eq:mainprob}
\end{align}

On the other hand, for every $s\in [t_0, t_0+h]$, if we assume $\bar s \in \mathcal C$ be the closest element of the cover to $s$, then from Lemma~\ref{lem:basicbounds} and the choice of accuracy of the cover:
\begin{align*}
    \ve{ y(\bar t) - y(t)} \leq \tilde \delta.
\end{align*}
Combining this with the distance between the curves and triangle inequality, we get
\begin{align*}
    \ve{\tilde y(t) - y(\bar t)} \leq 2\tilde \delta.
\end{align*}
Therefore, we showed that every point on the curve $\tilde y$ in time interval $[t_0, t_0 + h]$ is close to a point in the cover $\mathcal C$. Combining this with Equation~\eqref{eq:mainprob} implies with probability at least $1-2\epsilon_1$
\begin{align*}
    \mathbb P\pa{\sup_{t\in [t_0, t_0 + h]} \absv{\partial_t^k g_i(t)\Big|_{t=0}} \geq C_{1}} &\leq 2\epsilon_1\,. \qedhere
\end{align*}
\end{proof}

\section{Proof of Proposition~\ref{cor:contractionprops}}
In this section we exploit the framework we developed for obtaining a low-degree approximation of the score function on the probability flow ODE using the Picard iteration.
\subsection{Preliminaries}
Here we recall the basic setup for Picard iteration with polynomial approximation, as developed in~\cite{lee2018algorithmic}.

\begin{definition}
    Given vector field $F(x):\mathbb R^d \rightarrow \mathbb R^d$, motivated by the Picard iteration and following~\cite{lee2018algorithmic}, define the operator $T(x)$ that acts on a curve $x(t): [t_0, t_0 + h] \rightarrow \mathbb R^d$ as
    \begin{align*}
        T(x)(t) = x(t_0) + \int_{t_0}^{t_0 + h} F(x(s)) ds.
    \end{align*}
    Furthermore, suppose we have a basis $\{\phi_j\}_{j=1}^D$ of smooth one dimensional functions and points $\{c_j\}_{j=1}^D$ such that $1\leq \forall i,j \leq D$, $\phi_j(c_i) = 0$ if $i\neq j$ and $\phi_i(c_i) = 1$. Then, define the approximation $T_{\phi}$ of the $T$ operator corresponding to the basis $\{\phi_j\}_{j=1}^D$ by
    \begin{align}
        T_\phi(x)(t) = \int_{t_0}^{t_0 + h} \sum_{j=1}^D F(x(c_j))\phi_j(s) ds.\label{eq:approxop}
    \end{align}
\end{definition}

\noindent In particular, we pick $\phi_j$'s to be a basis for one dimensional polynomials of degree less than $D$; we choose them as the Lagrange multiplier polynomials for points $c_j$. This way, the integral in~\eqref{eq:approxop} can be computed in closed form.

\begin{definition}
    We say the basis $\phi = \{\phi_j\}_{j=1}^D$ is $\gamma_\phi$ bounded if
    \begin{align*}
        \sum_j \Bigl|\int_{t_0}^{t_0 + h} \phi_j(s) \, ds\Bigr| \leq \gamma_\phi h.
    \end{align*}
\end{definition}

\noindent Next, we generalize the approach of~\cite{lee2018algorithmic} and show an important Lipschitz property of $T_{\phi}$ with respect to the supremum norm $\|.\|$, when the vector field $F$ is close to a Lipschitz one.

\begin{lemma}\label{lem:operatorlipschitz}
    Suppose the basis $\phi$ is $\gamma_\phi$ bounded, and the vector field $F_s$ is $\Lt$ Lipschitz. 
    Then, for arbitrary $x,y \in \mathcal C([t_0,t_0 + h], \mathbb R^d)$ with $x(t_0) = y(t_0)$, 
    \begin{align}
        &\|T_\phi(x) - T_\phi(y)\|_{[t_0,t_0 + h]}
        \leq \Lt \|x - y\|_{[t_0,t_0 + h]} \gamma_\phi h,\nonumber\\
        &\|T_\phi^{\circ \ell}(x) - T_\phi^{\circ \ell}(y)\|_{[t_0,t_0 + h]} \leq \pa{\Lt\gamma_\phi h}^\ell \ve{x-y}_{[t_0,t_0 + h]}.\label{eq:lcombinations}
    \end{align}
\end{lemma}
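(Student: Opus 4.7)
The argument is a direct calculation bounding $T_\phi(x)-T_\phi(y)$ pointwise in $t$, followed by an easy induction for the iterated bound. Interpreting $T_\phi$ consistently with Eq.~\eqref{eq:Picard} and Algorithm~\ref{alg:picard}, namely $T_\phi(x)(t) = x(t_0) + \int_{t_0}^{t} \sum_{j=1}^D F_{c_j}(x(c_j))\phi_j(s)\,ds$, and using the hypothesis $x(t_0)=y(t_0)$, I would first write
\begin{align*}
(T_\phi(x) - T_\phi(y))(t) = \int_{t_0}^{t} \sum_{j=1}^D \bigl(F_{c_j}(x(c_j)) - F_{c_j}(y(c_j))\bigr)\phi_j(s)\,ds\,.
\end{align*}

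The key step is then to pull the sum outside by the triangle inequality, apply the $\Lt$-Lipschitz assumption on $F$ to get
\begin{align*}
\|(T_\phi(x) - T_\phi(y))(t)\| \le \Lt \sum_{j=1}^D \|x(c_j) - y(c_j)\| \cdot \Bigl|\int_{t_0}^{t}\phi_j(s)\,ds\Bigr|\,,
\end{align*}
bound each $\|x(c_j)-y(c_j)\|$ by $\|x-y\|_{[t_0,t_0+h]}$, and invoke Definition~\ref{def:bounded} (applied with the upper limit $t$ in place of the full window length, which is the form of the definition) to dominate $\sum_j |\int_{t_0}^{t} \phi_j(s)\,ds|$ by $\gamma_\phi (t-t_0) \le \gamma_\phi h$. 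Taking the supremum over $t \in [t_0, t_0+h]$ gives the first inequality.

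For the iterated bound \eqref{eq:lcombinations}, I would proceed by induction on $\ell$. The base case $\ell=1$ is what was just shown. For the inductive step, observe that the initial-value condition is preserved under one application of $T_\phi$: by construction $T_\phi(x)(t_0) = x(t_0) = y(t_0) = T_\phi(y)(t_0)$, so $T_\phi^{\circ(\ell-1)}(x)$ and $T_\phi^{\circ(\ell-1)}(y)$ also agree at $t_0$. Applying the first inequality with $x,y$ replaced by these iterates, and then chaining with the inductive hypothesis, produces the geometric factor $(\Lt\gamma_\phi h)^\ell$.

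I do not anticipate a substantial obstacle: the only subtlety is that the $\gamma_\phi$-bounded hypothesis must be invoked at an arbitrary intermediate time $t \le t_0+h$ rather than only at the endpoint, which is precisely the form in Definition~\ref{def:bounded}. This is what ultimately allows the conclusion to hold in sup norm over $[t_0,t_0+h]$ rather than only at the endpoint, and it is why the contraction drives the Picard iteration at an exponential rate in the downstream application to Proposition~\ref{cor:contractionprops}.
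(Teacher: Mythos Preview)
Your proposal is correct and follows essentially the same approach as the paper: write out the difference, apply the Lipschitz bound on $F$ at the nodes, bound each $\|x(c_j)-y(c_j)\|$ by the sup norm, and use the $\gamma_\phi$-bounded hypothesis to control $\sum_j|\int\phi_j|$, then iterate. If anything you are slightly more careful than the paper, which simply says the second inequality ``follows from applying the first line $\ell$ times'' without explicitly noting (as you do) that $T_\phi$ preserves the shared initial value so that the hypothesis $x(t_0)=y(t_0)$ carries through the induction.
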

\begin{proof}
    For every $1\leq s\leq h$, using $\tilde L$ Lipschitz property of our estimate for the score function,
    \begin{align*}
        \|F_{\bart}(x(\bart)) - F_\bart(y(\bart))\| 
        &\leq \Lt \|x(t) - y(t)\|\\
        &\leq \Lt\|x - y\|_{[t_0,t_0 + h]}.
    \end{align*}
    Hence
    \begin{align*}
        \|T_\phi(x) - T_\phi(y)\|_{[t_0,t_0 + h]} &= \sup_{t_0\leq h'\leq t_0 + h} \Big\|\int_{t_0}^{t_0 + h'} \sum_{j=1}^D \Big(F_{c_j}(x(c_j)) - F_{c_j}(y(c_j))\Big)\phi_j(s) ds\Big\|\\
        &\leq \sup_{1\leq j\leq D} \ve{F_{c_j}(x(c_j)) - F_{c_j}(y(c_j))}_\infty \sum_j \Big|\int_{t_0}^{t_0 + h} \phi_j(s) ds\Big|\\
        &\leq \Lt \|x - y\|_{[t_0,t_0 + h]}\gamma_\phi h.
    \end{align*} 
    the second line in~\eqref{eq:lcombinations} follows from applying the first line in~\eqref{eq:lcombinations} $\ell$ times.
\end{proof}

\begin{definition}(Low-degree vector field)\label{def:lowdeg}
    Let $y_t$ be the solution to the ODE $\dot y_\bart = F^*_\bart(y_\bart)$. We say the vector field $F^*$ is \emph{low-degree along $y_t$} if it accepts the following low degree approximation:
    \begin{align*}
        \ve{F^* \circ y - P_{\leq D} (F^* \circ y)}_{[t_0, t_0 + h]} \leq \epsilon_{\sf ld}, 
    \end{align*}
    where the curve $P_{\leq D} (F^*\circ y)$ is an approximation of $F^* \circ y$ whose coordinates are degree at most $D$ polynomials in the time variable $t\in [t_0, t_0 + h]$.
\end{definition}

\begin{lemma}\label{lem:truecurvedeviation}
    For $\dot y_\bart = F^*_\bart(y_\bart)$, given that $F^*$ is a low-degree vector field based on Definition~\ref{def:lowdeg}, we have
    \begin{align*}
        \ve{T_\phi^{\circ m}  y - y} \leq \frac{\pa{\Lt\gamma_\phi h}^{m} - 1}{\pa{\Lt\gamma_\phi h} - 1} (\lowdegerr + \max_{j=1}^D \ve{F(y(c_j)) - F^*(y(c_j))})\pa{1 + \gamma_\phi} h.
    \end{align*}
\end{lemma}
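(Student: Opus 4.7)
I would reduce the bound, via the Lipschitz estimate of Lemma~\ref{lem:operatorlipschitz} and a telescoping argument, to controlling a single Picard step $\ve{T_\phi(y) - y}_{[t_0,t_0+h]}$. Since $y$ is the exact fixed point of the ideal operator $T$ based on the true score $F^*$, this one-step discrepancy is driven by exactly two error sources: (i) using the estimate $F$ instead of $F^*$ at the collocation nodes, and (ii) replacing the continuous-in-$s$ integrand $F^*_s(y(s))$ by its Lagrange interpolant through those nodes, whose accuracy is controlled by $\lowdegerr$ via Definition~\ref{def:lowdeg}.

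\textbf{One-step estimate.} Subtracting the fixed-point identity $y(t) = y(t_0) + \int_{t_0}^{t} F^*_s(y(s))\,ds$ from the definition of $T_\phi$ and inserting $\pm F^*_{c_j}(y(c_j))$ under the sum gives
\begin{align*}
T_\phi(y)(t)-y(t)
&= \int_{t_0}^{t}\sum_{j=1}^D \bigl(F_{c_j}(y(c_j))-F^*_{c_j}(y(c_j))\bigr)\phi_j(s)\,ds\\
&\quad + \int_{t_0}^{t}\Bigl(\sum_{j=1}^D F^*_{c_j}(y(c_j))\phi_j(s) - F^*_s(y(s))\Bigr)ds.
\end{align*}
The first piece is controlled by $\max_j\ve{F(y(c_j))-F^*(y(c_j))}\cdot \sum_j\bigl|\int_{t_0}^{t}\phi_j(s)\,ds\bigr|\le \max_j\ve{F(y(c_j))-F^*(y(c_j))}\cdot\gamma_\phi h$ using $\gamma_\phi$-boundedness. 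For the second, let $P$ be the low-degree polynomial from Definition~\ref{def:lowdeg} with $\ve{F^*\circ y - P}_{[t_0,t_0+h]}\le\lowdegerr$. Because $P$ lies in the span of $\{\phi_j\}$, the reproducing property of Lagrange interpolation gives $\sum_j P(c_j)\phi_j(s)=P(s)$, so the second integrand equals $\sum_j\bigl(F^*_{c_j}(y(c_j))-P(c_j)\bigr)\phi_j(s) + \bigl(P(s)-F^*_s(y(s))\bigr)$. Each summand has pointwise norm at most $\lowdegerr$, and integrating plus one further application of $\gamma_\phi$-boundedness bounds the second piece by $\lowdegerr(1+\gamma_\phi)h$. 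Taking the sup over $t\in[t_0,t_0+h]$ and loosening $\gamma_\phi$ to $1+\gamma_\phi$ in the first piece yields $\ve{T_\phi(y)-y}_{[t_0,t_0+h]}\le \bigl(\lowdegerr + \max_j\ve{F(y(c_j))-F^*(y(c_j))}\bigr)(1+\gamma_\phi)h$.

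\textbf{Iteration and assembly.} Every iterate $T_\phi^{\circ \ell}(y)$ agrees with $y$ at $t_0$, so Lemma~\ref{lem:operatorlipschitz} applied to $T_\phi(y)$ and $y$ gives $\ve{T_\phi^{\circ(\ell+1)}(y)-T_\phi^{\circ\ell}(y)}_{[t_0,t_0+h]}\le (\Lt\gamma_\phi h)^\ell\,\ve{T_\phi(y)-y}_{[t_0,t_0+h]}$. Summing via the triangle inequality and evaluating the geometric series $\sum_{\ell=0}^{m-1}(\Lt\gamma_\phi h)^\ell = \tfrac{(\Lt\gamma_\phi h)^m-1}{\Lt\gamma_\phi h - 1}$ then delivers the stated inequality. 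I do not anticipate any serious obstacle; the lone point of care is matching the degree convention of Definition~\ref{def:lowdeg} with the Lagrange basis through $D$ nodes, so that the reproducing identity $\sum_j P(c_j)\phi_j = P$ used above is legitimate.
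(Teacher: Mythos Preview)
Your proposal is correct and follows essentially the same route as the paper: a one-step bound on $\ve{T_\phi(y)-y}_{[t_0,t_0+h]}$ via the Lagrange reproducing identity applied to $P_{\le D}(F^*\circ y)$ together with $\gamma_\phi$-boundedness, followed by the telescoping sum and Lemma~\ref{lem:operatorlipschitz} to obtain the geometric series. Your caveat about the degree convention is apt and resolves in your favor: in the paper's convention (cf.\ Fact~\ref{fact:tailorremainder}) $P_{\le D}$ has degree at most $D-1$, so it lies in the span of the $D$ Lagrange basis polynomials and your reproducing identity $\sum_j P(c_j)\phi_j = P$ is valid.
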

\begin{proof}
    We can write $T_\phi(y) = y_{t_0} + S_\phi(F\circ y)$, where for arbitrary curve $z\in \mathcal C([t_0,t_0 + h],\mathbb R^d)$ define 
    \[
    S_\phi(z)(.) = \sum_{j=1}^D z(c_j) \int_{t_0}^{.} \phi_j(s)ds.
    \]
    Note that because $P_{\leq D} \pa{F^* \circ y}$ is degree at most $D$ and $\phi_j$'s are the Lagrange multiplier polynomials at the Chebyshev points $(c_i)_{j=1}^D$, then
    \begin{align*}
        P_{\leq D} \pa{F^* \circ y} = \sum_{j=1}^D P_{\leq D} \pa{F^* \circ y}(c_j) \phi_j, 
    \end{align*}
    which from the definition of $S_{\phi}(z)$, implies
    \begin{align*}
        \int_{t_0}^{.} P_{\leq D} \pa{F^* \circ y}(s) ds =
        S_\phi \pa{P_{\leq D} \pa{F^*\circ y}}.
    \end{align*}
    Now combining this with Lemma~\ref{lem:operatorlipschitz} and using the low-degree Definition~\ref{def:lowdeg}
    \begin{align}
        &\ve{S_\phi(F \circ y) -\int_{t_0}^{.} P_{\leq D} \pa{F^* \circ y}(s) ds}_{[t_0, t_0 + h]} \nonumber\\ 
        &=\ve{S_\phi(F \circ y) - S_\phi \pa{P_{\leq D} \pa{F^*\circ y}}}_{[t_0, t_0+h]}\nonumber\\ 
        &=\ve{\sum_{j=1}^D \pa{F\circ y\Big|_{c_j} - P_{\leq D} \pa{F^*\circ y}\Big|_{c_j}} \pa{y_{t_0} + \int_{t_0}^{.} \phi_j(s)ds}}_{[t_0, t_0 + h]}\nonumber\\
        &\leq (\max_{j=1}^D \ve{F^*(y(c_j)) - P_{\leq D}\pa{F^*(y(c_j))}} +\ve{F(y(c_j)) - F^*(y(c_j))}) \sum_{j=1}^D \ve{y_{t_0} + \int_{t_0}^{.} \phi_j(s)ds}_{[t_0, t_0+h]}\nonumber\\ 
        &\leq (\lowdegerr + \max_{j=1}^D \ve{F(y(c_j)) - F^*(y(c_j))}) \sum_{j=1}^D \ve{y_{t_0} + \int_{t_0}^{.} \phi_j(s)ds}_{[t_0, t_0+h]}\nonumber\\ 
        &\leq (\lowdegerr + \max_{j=1}^D\ve{F(y(c_j)) - F^*(y(c_j))}) \gamma_\phi h.\label{eq:avali}
    \end{align}
    But from the definition of $\epsilon_{\sf ld}$,
    \begin{equation*}
        \ve{P_{\leq D} \pa{F^* \circ y} - F^*\circ y}_{[t_0, t_0 + h]} \leq \epsilon_{\sf ld},
    \end{equation*}
    which using the identity $y -y_{t_0}= \int_{t_0}^{.} F\circ y(s)ds$ implies
    \begin{align}
        \ve{\int_{t_0}^{.} P_{\leq D} \pa{F^* \circ y}(s)ds - (y-y_{t_0})}_{[t_0, t_0+h]} &= \ve{\int_{t_0}^{.} P_{\leq D} \pa{F^* \circ y}(s)ds - \int_{t_0}^{.} F^*\circ y (s) ds}_{[t_0, t_0 + h]} \nonumber\\
        &\leq \epsilon_{\sf ld} h \label{eq:dovomi}
    \end{align}
    Combining~\eqref{eq:avali} and~\eqref{eq:dovomi}
    \begin{align*}
        \ve{T_\phi(y) - y}_{[t_0, t_0 + h]} &\leq \ve{T_\phi(y) - y_{y_{t_0}} - \int_{t_0}^{.} P_{\leq D} \pa{F^* \circ y} ds}_{[t_0, t_0 + h]} \\
        &+ \ve{ \int_{t_0}^{.} P_{\leq D} \pa{F^* \circ y}ds - (y - y_{t_0})}_{[t_0, t_0 + h]}\\
        &\leq (\lowdegerr + \max_{j=1}^D\ve{F(y(c_j)) - F^*(y(c_j))})\pa{1 + \gamma_\phi} h. 
    \end{align*}
    Now applying Lemma~\ref{lem:operatorlipschitz} $i-1$ times
    \begin{align}
        \ve{T_\phi^{\circ i} (y) - T_\phi^{\circ (i-1)} (y)}
        &\leq 
        \pa{\Lt\gamma_\phi h}^{i-1} \ve{T_\phi (y) - y}_{[0,h]} \nonumber\\
        &\leq \pa{\Lt\gamma_\phi h}^{i-1} (\lowdegerr + \max_{j=1}^D\ve{F(y(c_j)) - F^*(y(c_j))})\pa{1 + \gamma_\phi} h\label{eq:irelation}
    \end{align}
    Summing~\eqref{eq:irelation} for $i=1,\dots,m$:
    \begin{align*}
        \ve{T_\phi^{\circ m} (y) - y} &\leq 
        \sum_{i=1}^m \ve{T_\phi^{\circ i} y - T_\phi^{\circ (i-1)} y}\\
        &\leq \frac{\pa{\Lt\gamma_\phi h}^{m} - 1}{\pa{\Lt\gamma_\phi h} - 1} (\lowdegerr + \max_{j=1}^D\ve{F(y(c_j)) - F^*(y(c_j))})\pa{1 + \gamma_\phi} h\,. \qedhere
    \end{align*}
\end{proof}

\begin{corollary}[Effect of approximate Picard iterations]\label{cor:contractionprops-app}
    For the exact probability flow ODE $\dot y_\bart = F^*_\bart(y_\bart)$, given that $F^*$ is a low-degree vector field based on Definition~\ref{def:lowdeg} and given $\Lt \gamma_\phi h \leq \frac{1}{2}$, for arbitrary curve $x \in \mathcal C([t_0,t_0 + h], \mathbb R^d)$ we have
    \begin{align*}
        &\ve{T_\phi^{\circ m} (y) - y}_{[t_0, t_0+h]} \leq 2(\lowdegerr + \max_{j=1}^D\ve{F(y(c_j)) - F^*(y(c_j))})\pa{1 + \gamma_\phi} h\\
        &\|T_\phi^{\circ m}(x) - T_\phi^{\circ m}(y)\|_{[t_0,t_0 + h]} \leq 
        \frac{1}{2^m} \ve{x-y}_{[t_0,t_0 + h]}. 
    \end{align*}
\end{corollary}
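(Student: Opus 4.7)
The plan is to derive both bounds as immediate consequences of the two preceding lemmas, using the hypothesis $\Lt\gamma_\phi h \le \tfrac12$ to collapse the geometric factors that appear in their statements.

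For the second inequality (contraction of $T_\phi^{\circ m}$), I would simply invoke the second assertion of Lemma~\ref{lem:operatorlipschitz} with $\ell = m$. That lemma gives
\[
\|T_\phi^{\circ m}(x) - T_\phi^{\circ m}(y)\|_{[t_0,t_0+h]} \le (\Lt\gamma_\phi h)^m \|x-y\|_{[t_0,t_0+h]},
\]
and plugging in $\Lt\gamma_\phi h \le 1/2$ yields $(\Lt\gamma_\phi h)^m \le 2^{-m}$, which is exactly the bound claimed. A small bookkeeping point: Lemma~\ref{lem:operatorlipschitz} was stated under the assumption $x(t_0) = y(t_0)$, so if I want the contraction to apply to an arbitrary $x$ and the true trajectory $y$ starting from possibly different initial points, I should note that $T_\phi$ replaces the initial value with the one prescribed by the recursion, so after a single application of $T_\phi$ both curves share the same value at $t_0$ and the lemma applies to $T_\phi^{\circ(m-1)}$; alternatively, one reads the statement here as applying to curves that already share this initial value, consistent with how \textsc{Picard} is initialized in Algorithm~\ref{alg:picard}.

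For the first inequality (approximate fixed-point property of the true ODE trajectory $y$), I would apply Lemma~\ref{lem:truecurvedeviation} directly. That lemma gives
\[
\|T_\phi^{\circ m}(y) - y\|_{[t_0,t_0+h]} \le \frac{(\Lt\gamma_\phi h)^m - 1}{(\Lt\gamma_\phi h) - 1} \bigl(\lowdegerr + \max_{j} \|F(y(c_j)) - F^*(y(c_j))\|\bigr)(1+\gamma_\phi)h.
\]
Writing $r \triangleq \Lt\gamma_\phi h$, the prefactor equals $\tfrac{1-r^m}{1-r}$, which for $r \le 1/2$ is bounded above by $\tfrac{1}{1-r} \le 2$ uniformly in $m$. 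Substituting this bound yields exactly the claim with the constant $2$.

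There is no real obstacle here: both inequalities are arithmetic consequences of the two lemmas combined with the step-size condition $\Lt\gamma_\phi h \le 1/2$. The only care needed is to match initial conditions correctly in the contraction estimate and to recognize the geometric series in the approximate-fixed-point estimate, both of which are routine.
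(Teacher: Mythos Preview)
Your proposal is correct and matches the paper's own proof, which simply states that the corollary follows directly from Lemmas~\ref{lem:operatorlipschitz} and~\ref{lem:truecurvedeviation}. Your added remarks about bounding the geometric prefactor $\frac{1-r^m}{1-r}\le 2$ for $r\le 1/2$ and about matching initial conditions in the contraction estimate are exactly the routine details needed to fill in that one-line justification.
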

\begin{proof}
    This follows directly from Lemmas~\ref{lem:operatorlipschitz} and~\ref{lem:truecurvedeviation}.
\end{proof}

\begin{lemma}\label{lem:picardmain}
    Recall the definition of $F$ and $F^*$ from Section~\ref{sec:notation}. For initial point $\ve{\bar y_{t_0} - y_{t_0}} \leq \epsilon_p$, define $y_s$, $\tilde y_s$, and $\bar y_s$ for $t_0 \leq s \leq t_0 + h$ as
    \begin{align*}
        &\dot{y}_s = F^*_s(y_s),\\
        &\dot{\tilde y}_s = F^*_s(\tilde y_s),\\
        &\bar y(s) = \bar y_{t_0} + (s-t_0)F_{t_0}( \bar y_{t_0}).
    \end{align*}
    Then, under the low degree Assumption~\ref{def:lowdeg} for the curve $\tilde y$ in the time interval $[t_0, t_0 + h]$, and picking step size
    $h = O(\frac{1}{1 + R^2})$ and assuming $\Lt \leq \frac{1}{2}$ and $\sigma_t \geq 1$,
    \begin{align*}
        \ve{y - T_\phi^{\circ m}(\bar y)}_{[t_0,t_0+h]} 
        &\leq 2\epsilon_p(1 + 2(1+\gamma_\phi)h) + (\lowdegerr + \max_{j=1}^D\ve{F(y(c_j)) - F^*(y(c_j))})\pa{1 + \gamma_\phi} h   \\
        &+ \frac{h}{2^m} 
        \pa{\tilde L\epsilon_p
        + \epsilon_{\sf err} 
        + \sqrt d R (R^2 + \ln(1/\epsilon_1))
        + (2 + 4R^2) \epsilon_p }.
    \end{align*}
\end{lemma}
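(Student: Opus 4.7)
The strategy is to bound $\|y - T_\phi^{\circ m}(\bar y)\|_{[t_0, t_0+h]}$ by inserting two auxiliary curves and applying the triangle inequality
\[
\|y - T_\phi^{\circ m}(\bar y)\| \le \|y - \tilde y\| + \|\tilde y - T_\phi^{\circ m}(\tilde y)\| + \|T_\phi^{\circ m}(\tilde y) - T_\phi^{\circ m}(\bar y)\|,
\]
where $\tilde y$ is the true probability-flow trajectory started from the perturbed point $\bar y_{t_0}$. This is the curve along which the low-degree hypothesis is stated, so it is the natural bridge between the ideal trajectory $y$ and the algorithm iterate. Each of the three terms will contribute one of the three blocks in the claimed bound.

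For the first term, $y$ and $\tilde y$ both solve $\dot z = F^*_t(z)$ and start within $\epsilon_p$. Since $F^*$ has spatial Lipschitz constant at most $2+4R^2$ (by differentiating $y + \nabla \log q_t(y)$ and using boundedness of the posterior covariance in the regime $\sigma_t \ge 1$), Gronwall yields $\|y - \tilde y\|_{[t_0,t_0+h]} \le e^{(2+4R^2)h}\epsilon_p \le 2\epsilon_p$ by the choice $h = O(1/(1+R^2))$; this produces the leading $2\epsilon_p$. For the second term, since the low-degree hypothesis holds along $\tilde y$ and $\Lt\gamma_\phi h \le 1/2$, I apply Corollary~\ref{cor:contractionprops-app} to $\tilde y$ to get $\|\tilde y - T_\phi^{\circ m}(\tilde y)\| \le 2(\lowdegerr + \max_j \|F(\tilde y(c_j)) - F^*(\tilde y(c_j))\|)(1+\gamma_\phi)h$. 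To match the claim, which evaluates the score error at $y(c_j)$ rather than $\tilde y(c_j)$, I relabel via $\|(F-F^*)(\tilde y) - (F-F^*)(y)\| \le (\Lt + 2 + 4R^2)\|y - \tilde y\| \le O(\epsilon_p)$; this residue is absorbed into the $4\epsilon_p(1+\gamma_\phi)h$ correction inside $2\epsilon_p(1 + 2(1+\gamma_\phi)h)$.

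For the third term, Eq.~\eqref{eq:fixed_lipschitz} of Corollary~\ref{cor:contractionprops-app} provides the $m$-fold contraction $\|T_\phi^{\circ m}(\tilde y) - T_\phi^{\circ m}(\bar y)\| \le \frac{1}{2^m}\|\tilde y - \bar y\|_{[t_0,t_0+h]}$. Since $\tilde y(t_0) = \bar y(t_0) = \bar y_{t_0}$, I write $\tilde y(s) - \bar y(s) = \int_{t_0}^s [F^*_u(\tilde y(u)) - F_{t_0}(\bar y_{t_0})]\,du$ and decompose the integrand around the true trajectory $y$ as
\[
[F^*_u(\tilde y(u)) - F^*_u(y(u))] + [F^*_u(y(u)) - F^*_{t_0}(y(t_0))] + [F^*_{t_0}(y(t_0)) - F_{t_0}(\bar y_{t_0})].
\]
The first piece is at most $(2+4R^2)\|y-\tilde y\| \le (2+4R^2)\epsilon_p$; the second is the time variation of $F^* \circ y$ along the \emph{true} ODE, at most $h\sup\|\partial_t(F^*\circ y)\|$, controlled by Lemma~\ref{lem:bnd1} at $k=1$ with $p = \Theta(\ln(1/\epsilon_1))$ (and converting from $\|\cdot\|_{p,\infty}$ to the Euclidean norm costs an extra factor of $\sqrt d$, giving $\sqrt d\, R(R^2 + \ln(1/\epsilon_1))$); the third splits as $\|F^*_{t_0}(y(t_0)) - F_{t_0}(y(t_0))\| + \|F_{t_0}(y(t_0)) - F_{t_0}(\bar y_{t_0})\| \le \epsilon_{\sf err} + \Lt\epsilon_p$. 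Combined with the $\frac{1}{2^m}$ prefactor, this produces the final block.

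The main obstacle is the decision to expand the third integrand around $y$ rather than $\tilde y$: this lets me invoke the pointwise derivative bound of Lemma~\ref{lem:bnd1} directly (valid along the true reverse process) instead of the weaker Theorem~\ref{thm:derivative} (which would introduce extra $(k\log(dk/\delta_2))^{2k}$ terms), and is responsible for the dimension factor entering only as $\sqrt d$ inside a coefficient already dampened by $h/2^m$. A secondary bookkeeping subtlety is that $2\epsilon_p(1 + 2(1+\gamma_\phi)h)$ must cleanly absorb \emph{both} the Gronwall stability error and the score-relabeling residue from the second step, so care is needed in tracking constants so that both the leading $2\epsilon_p$ and its $4\epsilon_p(1+\gamma_\phi)h$ correction emerge with exactly the prefactors claimed.
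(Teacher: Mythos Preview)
Your proposal is correct and follows essentially the same approach as the paper: the same three-term triangle inequality, the Gronwall bound $\|y-\tilde y\|\le 2\epsilon_p$ for the first term, Corollary~\ref{cor:contractionprops-app} applied along $\tilde y$ (with relabeling of the score error back to $y$) for the second, and the $2^{-m}$-contraction together with a four-piece splitting of $F^*_u(\tilde y(u)) - F_{t_0}(\bar y_{t_0})$ routed through $y$ for the third. The only cosmetic difference is that the paper packages the time-variation piece $\|F^*_{t_0}(y_{t_0}) - F^*_t(y_t)\|$ as the last part of Lemma~\ref{lem:basicbounds} rather than invoking Lemma~\ref{lem:bnd1} directly, but that lemma is itself proved by exactly the $k=1$, $p=\Theta(\ln(1/\epsilon_1))$ instantiation you describe.
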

\begin{proof}
    First, since we have the assumption $T-t_0 - h\geq 1$, we have $L_s \leq (2 + 4R^2), \forall s\in [t_0, t_0 + h]$. Hence, since we know $\gamma_\phi = O(1)$ for the Chebyshev basis, the assumption $h \leq O(\frac{1}{1+R^2})$ satisfies the precondition of Corollary~\ref{cor:contractionprops} on $h$. On the other hand, from Lemma~\ref{lem:divergencewithouterror}, again from the condition $h = O(\frac{1}{1+R^2})$, we get $\ve{y - \tilde y}_{[t_0, t_0+h]} \leq 2\epsilon_p$. Now combining this with Corollary~\ref{cor:contractionprops}:
    \begin{align*}
        \ve{y - T_\phi^{\circ m}(\bar y)}_{[t_0, t_0 + h]} 
        &\leq \ve{y - \tilde y}_{[t_0, t_0 + h]} + \ve{\tilde y - T_\phi^{\circ m}(\tilde y)}_{[t_0, t_0 + h]}  + \ve{T_\phi^{\circ m} (\bar y) - T_\phi^{\circ m}(\tilde y)}_{[t_0, t_0 + h]}\\
        &\leq 2\epsilon_p + 2(\lowdegerr + \max_{j=1}^D\ve{F(\tilde y(c_j)) - F^*(\tilde y(c_j))})\pa{1 + \gamma_\phi} h   
         + \frac{1}{2^m} \ve{\bar y-\tilde y}_{[t_0,t_0 + h]}\\
         &\leq 2\epsilon_p + 2(\lowdegerr + 2\ve{y - \tilde y}_{[t_0, t_0 + h]}
 + \max_{j=1}^D\ve{F( y(c_j)) - F^*(  y(c_j))})\pa{1 + \gamma_\phi} h   
         \\
         &+ \frac{1}{2^m} \ve{\bar y-\tilde y}_{[t_0,t_0 + h]}\\
         &\leq 2\epsilon_p(1 + 2(1+\gamma_\phi)h) + 2(\lowdegerr 
 + \max_{j=1}^D\ve{F( y(c_j)) - F^*( y(c_j))})\pa{1 + \gamma_\phi} h \\
 & +  
          \frac{1}{2^m} \ve{\bar y-\tilde y}_{[t_0,t_0 + h]}
         \numberthis\label{eq:aavali}
    \end{align*}
    But from Lemma~\ref{lem:basicbounds}, with probability at least $1-\epsilon_1$, for all $t_0 \leq s\leq t_0 + h$
    \begin{align}
        \ve{F^*_s(\tilde y_s)} \leq 6\pa{R + \sqrt d  + \sqrt{\ln(1/\epsilon_1)}}. ~\label{eq:scoreupperbound2}
    \end{align}
   
    Hence, using Lemma~\ref{lem:hessianestimate} and~\ref{lem:divergencewithouterror},
    \begin{align*}
        \frac{d}{ds}\ve{\bar y_s - \tilde y_s}
        &\leq \frac{\langle F_{t_0}(\tilde y_{t_0}) - F^*_s(\tilde y_s), \bar y_\bart - y_\bart\rangle}{\ve{\bar y_\bart - y_\bart}}\\
        &\leq  
        \ve{F_{t_0}( \tilde y_{t_0}) - F^*_\bart(\tilde y_\bart)}\\
        &\leq \ve{F_{t_0}(\tilde y_{t_0}) - F_{t_0}(y_{t_0})}
        +\ve{F_{t_0}(y_{t_0}) - F^*_{t_0}(y_{t_0})} 
        \\&
        +\ve{F^*_{t_0}( y_{t_0}) - F^*_{t}( y_{t})} + 
        \ve{F^*_{t}(y_{t}) - F^*_{t}(\tilde y_{t})}
        \\
        &\leq 
        \tilde L\epsilon_p
        + \epsilon_{\sf err} 
        + \sqrt d R (R^2 + \ln(1/\epsilon_1))
        + (2 + 4R^2) e^{\frac{5R^2}{\sigma_t^2} h}\epsilon_p .
    \end{align*}
    which implies
    \begin{align*}
        \ve{\bar y - \tilde y}_{[t_0, t_0+h]} \leq h\pa{\ve{F_{t_0}(\tilde y_{t_0}) - F^*_{t_0}(\tilde y_{t_0})} + 12\pa{R + \sqrt d  + \sqrt{\ln(1/\epsilon_1)}}}.
    \end{align*}
    Plugging this back into~\eqref{eq:aavali}
    \begin{align*}
        \ve{ y - T_\phi^{\circ m}(\bar y)} &\leq 2\epsilon_p(1 + 2(1+\gamma_\phi)h) + (\lowdegerr + \screrr)\pa{1 + \gamma_\phi} h   \\
        &+ \frac{h}{2^m} \pa{\tilde L\epsilon_p
        + \epsilon_{\sf err} 
        + \sqrt d R (R^2 + \ln(1/\epsilon_1))
        + (2 + 4R^2) e^{\frac{5R^2}{\sigma_t^2} h}\epsilon_p }.
    \end{align*}
    Using the assumption $h = O(\frac{1}{1 + R^2})$ completes the proof.
\end{proof}

\section{Proof of Theorem~\ref{thm:main}}\label{appendix:proofmaintheorem}
In this section, we put everything together to show our main end-to-end result, namely how to sample from the target distribution which is $\tilde O(\epsilon_{\sf err} R^2)$-close in TV to a distribution which is $\tilde O(\epsilon_{\sf err})$-close in $W_2$ to the true distribution $q$.

\begin{theorem}[Formal version of  Theorem~\ref{thm:main}]\label{thm:main:wasserstein}
    For error parameters $\epsilon_1, \epsilon_{\sf err} > 0$ which satisfy the bounds $\epsilon_{\sf err} \leq \epsilon_1/\ln(d/R)$ and $\epsilon_{\sf err} \leq \pa{(R/\sigma) \wedge (1/\sqrt d)}/\pa{\ln(\sigma/R) + \ln(d)}^2$, given step size $h = \frac{\gamma}{k(1+(R/\sigma)^2)\pa{\ln(1/\epsilon_1) + \ln(d)}}$ for small enough constant $\gamma$, where $k \triangleq \ln(1/\epsilon_{\sf err})$ is the degree of the polynomial approximation, with probability at least 
    \begin{align*}
        1 - O\Big(\epsilon_1 T(R/\sigma)^2\ln(1/\epsilon_{\sf err})(\ln(1/\epsilon) + \ln(d))\Big),
    \end{align*}
    for $T = \ln(R/\sigma) + \ln(d) + \ln(1/\epsilon_{\sf err})$, Algorithm~\ref{alg:main} outputs a sample whose distribution is close in 2-Wasserstein distance to the target measure $q$ to within error 
    \begin{align*}
        O(Tk\epsilon_{\sf err}\log(1/\epsilon_1)) = O\Big(\pa{\ln\pa{R/\sigma} + \ln\pa{d} + \ln(1/\epsilon_{\sf err})} \ln(1/\epsilon_{\sf err}) \ln(k/\epsilon_1)\epsilon_{\sf err}\Big),
    \end{align*}
    in $$O\Big(T(R/\sigma)^2\ln(1/\epsilon_{\sf err})\pa{\ln(1/\epsilon_1) + \ln(d)}\Big)$$ number of rounds and $m = \ln\pa{R/\sigma} + \ln(d) + \ln\ln(1/\epsilon_1) + \ln(1/\epsilon_{\sf err}) + \ln(\tilde L)$ number of Picard iterations in each round.
\end{theorem}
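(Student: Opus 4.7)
The plan is an induction over the $n = T/h$ time windows produced by Algorithm~\ref{alg:main}, maintaining under a synchronous coupling the invariant $\epsilon_p^{(i)} \triangleq \|\bar y_{t_i} - y_{t_i}\|$, where $\bar y_{t_i}$ is the algorithm iterate at the end of window $i$ and $y_t$ is the true probability flow ODE trajectory started from $y_0 \sim q_0$. The base case $\epsilon_p^{(0)}$ measures the cost of initializing Algorithm~\ref{alg:main} at $\gamma^d$ rather than $q_0$: since $\qbase$ is supported on a ball of radius $R$ and the OU forward process contracts exponentially to $\gamma^d$, one has $W_2(\gamma^d,q_0) \le R\,e^{-T}$, and the choice $T = \Theta(\log(R/\sigma) + \log d + \log(1/\epsilon_{\sf err}))$ makes this at most $\epsilon_{\sf err}$.

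For the inductive step, condition on $\epsilon_p^{(i)} \le \tilde\delta$ (the threshold of Lemma~\ref{lem:derivativeboundh}). Over the window $[t_i,t_{i+1}]$ of length $h$ I would chain three ingredients. First, Theorem~\ref{thm:derivative} with $\delta = \epsilon_p^{(i)}$ and $\delta_2 = \epsilon_1/d$ gives, by a union bound over the $d$ coordinates, a pointwise high-probability bound on $\partial_t^k(\tilde y_t + \nabla \log q_t(\tilde y_t))$ of order $C_1 = R(k/\sigma^2)^k(R/\sigma)^{2k} + (k\log(kd/\epsilon_1))^{2k}$. Second, Fact~\ref{fact:tailorremainder} converts this into a low-degree approximation error $\lowdegerr \le C_1 h^k / k!$; plugging in $k = \log(1/\epsilon_{\sf err})$ and $h = \Theta\!\bigl(\sigma^2 / (k(1+(R/\sigma)^2)(\log d + \log(1/\epsilon_1)))\bigr)$ as in the theorem statement, Stirling gives $\lowdegerr \lesssim \epsilon_{\sf err}$, so the hypothesis of Definition~\ref{def:lowdeg} holds along the trajectory through $\bar y_{t_i}$. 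Third, I would invoke Lemma~\ref{lem:picardmain} with Picard depth $m = \Theta(\log(R/\sigma) + \log d + \log\tilde L + \log\log(1/\epsilon_1) + \log(1/\epsilon_{\sf err}))$: the contraction factor $2^{-m}$ drives the otherwise large term $\sqrt d\, R(R^2 + \log(1/\epsilon_1))$ down to $O(\epsilon_{\sf err} h)$, and Assumption~\ref{assumption:score} together with a union bound over the $D$ Chebyshev nodes per window and $n$ windows controls $\screrr$ by $O(\epsilon_{\sf err}\log(nD/\epsilon_1))$. Reading Lemma~\ref{lem:picardmain} at the endpoint (not in sup norm), the per-window recursion becomes
\begin{equation*}
\epsilon_p^{(i+1)} \le (1 + O(h))\,\epsilon_p^{(i)} + O\!\bigl(h\,k\,\epsilon_{\sf err}\log(1/\epsilon_1)\bigr).
\end{equation*}

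Unrolling this recursion over the $n = T/h$ windows, the multiplicative factor accumulates to $e^{O(T)}$ (applied to the tiny initial error $Re^{-T} \le \epsilon_{\sf err}$) while the additive contributions accumulate to $O(Tk\epsilon_{\sf err}\log(1/\epsilon_1))$, which is exactly the stated Wasserstein bound. Union-bounding the per-window failure probabilities from Theorem~\ref{thm:derivative} and from Assumption~\ref{assumption:score} at each Chebyshev node over all $nD$ score queries yields the stated total failure probability $O(\epsilon_1 T(R/\sigma)^2\log(1/\epsilon_{\sf err})(\log(1/\epsilon_1)+\log d))$. The main technical obstacle will be obtaining the tight per-window multiplier $(1+O(h))$ rather than the apparent $2(1+O(h))$ that appears at face value in Lemma~\ref{lem:picardmain}: one must evaluate $T_\phi^{\circ m}(\bar y)$ at the right endpoint $t_{i+1}$ and use a Gronwall-type differential inequality on $\|y_t - \tilde y_t\|$ driven by the local Lipschitz constant of the drift, instead of the crude sup-norm bound $\|y-\tilde y\|_{[t_i,t_{i+1}]} \le 2\epsilon_p^{(i)}$. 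A secondary subtlety is verifying, inductively, that $\epsilon_p^{(i)}$ never exceeds the threshold $\tilde\delta$ needed to invoke Theorem~\ref{thm:derivative}: under the assumed upper bounds on $\epsilon_{\sf err}$ in the theorem statement, the final error $O(Tk\epsilon_{\sf err}\log(1/\epsilon_1))$ is dominated by $\tilde\delta$, closing the induction.
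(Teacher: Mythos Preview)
Your proposal follows essentially the same approach as the paper's proof: induction over the $T/h$ windows, invoking the derivative bound (the paper uses Lemma~\ref{lem:derivativeboundh}, your Theorem~\ref{thm:derivative}), then Fact~\ref{fact:tailorremainder} with $k=\ln(1/\epsilon_{\sf err})$ to get $\lowdegerr\le\epsilon_{\sf err}$, then Lemma~\ref{lem:picardmain} with the stated choice of $m$, and finally a union bound over windows and nodes. Two minor remarks: the base case should be $W_2(\gamma^d,q_0)\le (R+\sqrt d)\,e^{-T}$ rather than $R\,e^{-T}$ (this is why $T$ contains a $\ln d$ term), and your flagged concern about the per-window multiplier is well-founded --- the paper simply writes $O(\epsilon_p)$ at the inductive step, and to make the induction close one indeed needs the sharper $(1+O(R^2h))\epsilon_p$ coming from the Gronwall bound in Lemma~\ref{lem:divergencewithouterror} rather than the loose factor $2$ stated in Lemma~\ref{lem:picardmain}.
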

\begin{proof}
     We can scale the distribution, sample from the scaled distribution, then scale back. Note that this scaling procedure does not change the ratio $R/\sigma$. Therefore, without loss of generality, we assume $\sigma_0 = \sigma = 1$, since our bounds only depend on the quantity $R/\sigma$. We run the forward process up to time
     \begin{align}
         T \coloneqq \Theta(\ln(R) + \ln(d) + \ln(1/\epsilon_{\sf err})).\label{eq:tdefinition}
     \end{align}

      Now we prove inductively that $\ve{y - T_\phi^{\circ m}(\bar y)}_{[t^{(i)},t^{(i)}+h]} = O((t^{(i)} + h)k\epsilon_{\sf err}\log(k/\epsilon_1))$ for all $i$. 
    First, note that we can bound the Wasserstein distance of the target distribution $\qbase$ and $\mathcal \gamma^d$ as
    \begin{equation*}
        W_2(\qbase, \gamma^d) \leq W_2(\qbase, \delta_{\{0\}}) + W_2(\delta_{\{0\}}, \gamma^d),
    \end{equation*}
    where $\delta_{\{0\}}$ is the point mass at the origin. But 
    \begin{align*}
        &W_2(\qbase, \delta_{\{0\}}) = \E_{q_{\rm base}} \ve{Y}^2 \leq R,\\
        &W_2(\gamma^d, \delta_{\{0\}}) = \sqrt d.
    \end{align*}
    Therefore
    \begin{equation*}
        W_2(\qbase, \gamma^d) \leq R + \sqrt d.
    \end{equation*}
    Now from the Wasserstein contraction property of the OU process and the choice of $T$ in~\eqref{eq:tdefinition}, we get
    \begin{equation*}
        W_2(q_T, \gamma^d) \leq \pa{R + \sqrt d}e^{-T} = O(\epsilon_{\sf err})\,.
    \end{equation*}
    To prove the step of induction for the interval $[t^{(i)}, t^{(i)} + h] = [t_0, t_0 + h]$, we know
    from the hypothesis of induction that for all previous intervals $(t^{(j)}, t^{(j)} + h)$ for $j < i$:
    \begin{equation*}
        \ve{y - T_\phi^{\circ m}(\bar y)}_{[t^{(j)},t^{(j)}+h]} \leq O((t^{(j)}+h)k\epsilon_{\sf err} \log(k/\epsilon_1)),
    \end{equation*}
    which from the definition of the iterates of the algorithm, i.e. $\hat y = T_\phi^{\circ m}(\bar y)$ implies
    \begin{equation*}
        \ve{y - \hat y}_{[0,t_0]} \leq O(t_0 k\epsilon_{\sf err} \log(k/\epsilon_1)).
    \end{equation*}
Using Lemma~\ref{lem:divergencewithouterror}, this further implies, given the choice of $h$, 
\begin{equation*}
    \ve{\tilde y - y}_{[t_0, t_0 + h]} = O(t_0k\epsilon_{\sf err} \log(k/\epsilon_1)).
\end{equation*}
Hence, we can use $\epsilon_p = O(t_0 k\epsilon_{\sf err} \log(k/\epsilon_1))$ in Lemma~\ref{lem:picardmain}.

Now given the assumptions $\epsilon_{\sf err} \leq (1/\ln(d/R)) \epsilon_1$ and $\epsilon_{\sf err} \leq \pa{R \wedge (1/\sqrt d)}/\pa{\ln(1/R) + \ln(d)}^2$, and since $\sigma_t \geq \sigma_0 \geq 1$ it is easy to check that $\tilde \delta = O(t_0k\epsilon_{\sf err}\log(k/\epsilon_1))$ satisfies the assumption of Lemma~\ref{lem:derivativeboundh}.

Hence, we can apply Lemma~\ref{lem:derivativeboundh} with 
     \begin{equation*}
         k = \ln(1/\epsilon_{\sf err}),
     \end{equation*}
     using the fact that $\sigma_t \geq \frac{1}{2}$ and $R\geq 1$
     there is constant $c_2$ such that for the interval $[t_0, t_0 + h] \coloneqq [t^{(i)}, t^{(i)} + h]$, 
    \begin{align*}
        \mathbb P\pa{\sup_{s\in [t_0, t_0 + h]} \absv{\partial_s^k g^{(\tilde y(s))}_i(s)\Big|_{s=0}}   \geq \pa{c_2k^2R^2 \pa{\ln(1/\epsilon_1) + \ln(d)}}^k} \leq 2\epsilon_1.
    \end{align*}
    Moreover, using this with Fact~\ref{fact:tailorremainder} and by picking step size
    \begin{align*}
        h = \frac{\gamma}{k(1+R^2)\pa{\ln(1/\epsilon_1) + \ln(d)}}
    \end{align*}
    for small enough constant $\gamma$, we get with probability at least $1-2\epsilon_1$
    \begin{align}
        \ve{g - P_{\leq k} g}_{[t_0, t_0+h]}
        \leq \pa{h c_2 e kR^2 \pa{\ln(1/\epsilon_1) + \ln(d)}}^k
        \leq \epsilon_{\sf err}.\label{eq:errld}
    \end{align}
    Therefore, we can now use Lemma~\ref{lem:picardmain} with 
    \begin{align*}
        m \coloneqq \ln\pa{R} + \ln\pa{d} + \ln\ln(1/\epsilon_1) + \ln(1/\epsilon_{\sf err}) + \ln(\tilde L)
    \end{align*}
    number of Picard iterations; using the fact that $\gamma_\phi = O(1)$ for the Chebyshev basis $(\phi_0,\ldots,\phi_k)$ for the polynomials defined in Section~\ref{sec:coll}, setting $\epsilon_{\sf ld} = \epsilon_{\sf err}$ in Assumption~\ref{def:lowdeg} based on Equation~\eqref{eq:errld}, and using Assumption~\ref{assumption:score}, we get
    \begin{align}
         \ve{y - T_\phi^{\circ m}(\bar y)}_{[t_0,t_0+h]} &= O\pa{\epsilon_p +  h\pa{\epsilon_{\sf err} + \sum_{j=1}^k \ve{F(y(c_j)) - F^*(y(c_j))}}} \nonumber\\
        &+ \frac{h}{2^m}O\pa{\tilde L\epsilon_p
        + \epsilon_{\sf err} 
        + \sqrt d R (R^2 + \ln(1/\epsilon_1))
        + (2 + 4R^2) \epsilon_p }\nonumber\\
        &\leq 
        O(\epsilon_p + h k\epsilon_{\sf err} \log(k/\epsilon_1)) \nonumber\\
        &+ \frac{h}{2^m}O\pa{\tilde L\epsilon_p
        + \epsilon_{\sf err} 
        + \sqrt d R (R^2 + \ln(1/\epsilon_1))
        + (2 + 4R^2) \epsilon_p }.,
        \label{eq:middleeq}
    \end{align}
    where the last line follows from the choice of $m$.

 Now using the fact that $h = O(\frac{1}{1+R^2})$, we have
\begin{align*}
    \frac{h}{2^m} (2+4R^2)\epsilon_p = O(\frac{1}{2^m} Tk\epsilon_{\sf err} \log(1/\epsilon_1)) = O(\frac{T/h}{2^m}hk\epsilon_{\sf err} \log(1/\epsilon_1)) = O(hk\epsilon_{\sf err} \log(k/\epsilon_1)),
\end{align*}
where we used the fact that 
\begin{align*}
    T/h = O\pa{(1+R^2)\ln(1/\epsilon_{\sf err})^2\pa{\ln(1/\epsilon_1) + \ln(d)}} = O(2^m).
\end{align*}
Similarly, it is not hard to check that with the choice of $m$ we have
\begin{align*}
    \frac{h}{2^m}\pa{\tilde L\epsilon_p
        + \epsilon_{\sf err} 
        + \sqrt d R (R^2 + \ln(1/\epsilon_1))} = O(hk\epsilon_{\sf err} \log(k/\epsilon_1))
\end{align*}

Therefore, we can upper bound~\eqref{eq:middleeq} as
    \begin{align*}
        \mathbb E \ve{y - T_\phi^{\circ m}(\bar y)}_{[t_0,t_0+h]} \leq O(\epsilon_p) + O(hk \epsilon_{\sf err}\log(k/\epsilon_1)) = O((t_0+h) k\epsilon_{\sf err}\log(k/\epsilon_1)).
    \end{align*}
   which proves the step of induction. 
    Therefore, overall we showed
    \begin{align*}
        \ve{y - \hat y}_{[0,T]} = O(Tk\epsilon_{\sf err} \log(1/\epsilon_1)) = O\Big(\pa{\ln\pa{R} + \ln\pa{d} + \ln(1/\epsilon_{\sf err})} \ln(1/\epsilon_{\sf err}) \log(k/\epsilon_1)\epsilon_{\sf err}\Big),
    \end{align*}
    after $O\Big(TR^2\ln(1/\epsilon_{\sf err})\pa{\ln(1/\epsilon_1) + \ln(d)}\Big)$ number of rounds, where in each round we apply the Picard iteration for $m = \ln\pa{R} + \ln(d) + \ln\ln(1/\epsilon_1) + \ln(1/\epsilon_{\sf err}) + \ln(\tilde L)$ number of times. Note that this Wasserstein guarantee holds only with probability at least $1-O(\epsilon_1 T/h) = 1 - O\Big(\epsilon_1 TR^2\ln(1/\epsilon_{\sf err})(\ln(1/\epsilon) + \ln(d))\Big)$ after applying a union bound.
\end{proof}

\section{Proof of Corollary~\ref{cor:TV}}
\label{app:TV}

In this section we show how to use underdamped Langevin Monte Carlo to upgrade the Wasserstein bound achieved by our sampler into a total variation bound, thus proving Corollary~\ref{cor:TV}. The pseudocode for this is provided in Algorithm~\ref{alg:main2}, where we propose a slight modification of Algorithm~\ref{alg:main} that relies on running the underdamped Langevin Monte Carlo algorithm (see Appendix~\ref{sec:ulmc}) for a short period of time at the end of Algorithm~\ref{alg:main}. This step is often referred to as a \emph{corrector step} in the diffusion model literature. The resulting Algorithm~\ref{alg:main2} admits a TV guarantee, stated in Corollary~\ref{cor:TV}.

\begin{algorithm2e}
\DontPrintSemicolon
\caption{\textsc{CorrectedCollocationDiffusion}($(s_t),\varepsilon$)}
\label{alg:main2}
    \KwIn{Score estimates $s_t$ satisfying Assumption~\ref{assumption:score}, target error $0 < \epsilon < 1$}
    \KwOut{Sample from a distribution $\hat{q}$ satisfying $\mathrm{TV}(\hat{q},q) \le \epsilon$}
    $\eta \gets \Theta(\min(\epsilon^{5/3}/(L^{1/4}d^{1/2}),\epsilon\sigma^2/R^2))$, where $L \le \mathrm{poly}(d/\eta)$ is an upper bound on the Lipschitz constant of $q$.\;
    $x\gets ${\sc CollocationDiffusion}($(s_t),\varepsilon$)\;
    Run underdamped Langevin Monte Carlo (see Appendix~\ref{sec:ulmc}) with friction parameter $\Theta(\sqrt{L})$ and step size $h = \epsilon^{2/3}/(d^{1/3}M(\eta)^{1/3}L^{1/2})$ for $M(\eta)$ steps, where $M(\eta)$ denotes the number of steps used to run {\sc CollocationDiffusion} in the previous step. Let the resulting sample be $x'$.\;
    \Return{$x'$}
\end{algorithm2e} 

\subsection{Underdamped Langevin Monte Carlo}
\label{sec:ulmc}

In this section, for the sake of completeness, we briefly review underdamped Langevin Monte Carlo, which is only used in this final phase of our algorithm to convert from Wasserstein closeness to TV closeness.

Given an estimate $s$ of the log-density of a distribution $q$, and a \emph{friction parameter} $\gamma$, underdamped Langevin Monte Carlo with step size $h$ and score estimate $s$ is given by
\begin{align*}
    \mathrm{d}x_t &= v_t \, \mathrm{d}t\\
    \mathrm{d}v_t &= (s(x_{\lfloor t/h\rfloor h}) - \gamma v_t)\,\mathrm{d}t + \sqrt{2\gamma}\,\mathrm{d}B_t\,,
\end{align*}
where $B_t$ is a standard Brownian motion.

We will use the following result of~\cite{chen2024probability}, which is a consequence of the short-time regularization of~\cite{guillin2012degenerate}:

\begin{theorem}[Theorem A.5 of~\cite{chen2024probability}, restated]\label{thm:corrector}
    Let $q \propto e^{-H}$ be a distribution over $\R^d$ for which $\nabla H$ is $L$-Lipschitz. Let $p$ be an arbitrary distribution over $\R^d$. Suppose $s: \R^d\to\R^d$ satisfies $\|s - \nabla H\|_{L_2(q)}^2 \le \esc^2$. Let $T\lesssim 1/\sqrt{L}$.
    
    If $p_N$ denotes the distribution given by running underdamped Langevin Monte Carlo initialized at $p$ for $T / h$ steps and step size $h$ with friction parameter $\Theta(\sqrt{L})$, then
    \begin{equation}
        \mathrm{TV}(p_N, q) \lesssim \frac{W_2(p,q)}{L^{1/4}T^{3/2}} + \frac{\esc T^{1/2}}{L^{1/4}} + L^{3/4}T^{1/2}d^{1/2}h\,. \label{eq:reg}
    \end{equation}
\end{theorem}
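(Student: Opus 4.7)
The plan is to decompose the error $\mathrm{TV}(p_N,q)$ via the triangle inequality by inserting the intermediate distribution $P_T p$, where $P_t$ denotes the \emph{exact} continuous-time underdamped Langevin semigroup with drift $-\nabla H$ and friction $\gamma = \Theta(\sqrt{L})$. Since $q$ is stationary for this dynamics, we have
\begin{equation*}
  \mathrm{TV}(p_N,q) \;\le\; \mathrm{TV}(p_N, P_T p) \;+\; \mathrm{TV}(P_T p, P_T q)\,.
\end{equation*}
These two terms will account for (discretization + score error) and (short-time regularization) respectively, giving rise to the three terms in the bound.

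For the second term, the strategy is to invoke the short-time hypocoercive regularization of underdamped Langevin due to Guillin--Wang~\cite{guillin2012degenerate} (and its quantitative refinements). With friction tuned to $\gamma \asymp \sqrt{L}$, this yields a bound of the form
\begin{equation*}
  \mathrm{TV}(P_T p, P_T q) \;\lesssim\; \frac{W_2(p,q)}{L^{1/4} T^{3/2}}
\end{equation*}
valid in the regime $T\lesssim 1/\sqrt{L}$. The key identity one uses is that derivatives in the velocity variable regularize the position variable through the off-diagonal coupling in the generator; with the right friction, one gets the $T^{3/2}$ in the denominator (rather than $T^{1/2}$, as in overdamped Langevin), which is the main quantitative ingredient of this part.

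For the first term, the strategy is Girsanov's theorem applied to the law on path space of exact continuous UDL (with drift $\nabla H$) versus its Euler-discretized, score-approximated counterpart (with drift $s$ frozen at the most recent grid point), both initialized at $p$. The Radon--Nikodym derivative between the two path measures has a log-density that decomposes into two integrals in time, giving
\begin{equation*}
  \mathrm{KL}(p_N \,\|\, P_T p) \;\lesssim\; \frac{1}{\gamma}\int_0^T \mathbb{E}\bigl\|s(X_{\lfloor t/h\rfloor h}) - \nabla H(X_t)\bigr\|^2\,\mathrm{d}t\,,
\end{equation*}
which after splitting via triangle inequality and using $\gamma\asymp \sqrt{L}$ contributes (i) a score-error term bounded using $\|s-\nabla H\|_{L_2(q)}^2 \le \esc^2$, giving $\esc^2 T/\sqrt{L}$, and (ii) a discretization term bounded using $L$-Lipschitzness of $\nabla H$ together with the bound $\mathbb E \|X_t - X_{\lfloor t/h\rfloor h}\|^2 \lesssim d h^2$ (since typical velocities under the dynamics are of size $\sqrt{d}$), giving $L^2 d h^2 T / \sqrt{L} = L^{3/2} d h^2 T$. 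Pinsker's inequality then converts these into the claimed $\esc T^{1/2}/L^{1/4}$ and $L^{3/4} T^{1/2} d^{1/2} h$ terms.

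The main obstacle will be the score-error step in the Girsanov calculation: the $L_2$ norm in the Girsanov integrand is taken against the law of $P_t p$, not against $q$, so one cannot directly plug in the hypothesis $\|s-\nabla H\|_{L_2(q)}\le \esc$. The standard workaround is to first prove the Girsanov bound with $p$ replaced by $q$ (where the $L_2(q)$ bound applies exactly since $P_t q = q$), and then transfer to arbitrary initialization $p$ using the regularization bound proved for the second term above, absorbing the resulting extra term into the $W_2(p,q)/(L^{1/4} T^{3/2})$ piece. Everything else is routine stochastic calculus and Pinsker, but this transfer is what makes the quantitative constants work out to exactly the three clean terms in the statement.
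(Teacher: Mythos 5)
The paper does not prove this statement: it is imported verbatim as Theorem A.5 of \cite{chen2024probability}, so there is no in-paper proof to compare against. That said, your sketch is a faithful reconstruction of how the cited result is actually established. The decomposition $\mathrm{TV}(p_N,q)\le \mathrm{TV}(p_N,P_Tp)+\mathrm{TV}(P_Tp,P_Tq)$ with $P_Tq=q$, the use of the Guillin--Wang short-time hypocoercive regularization for the second term, and Girsanov plus Pinsker for the first term is exactly the standard route, and your exponent bookkeeping checks out: with $\gamma\asymp\sqrt L$ the regularization gives $W_2/(\gamma^{1/2}T^{3/2})=W_2/(L^{1/4}T^{3/2})$; the score-error contribution to the path-space KL is $\esc^2T/\gamma$, whose square root is $\esc T^{1/2}/L^{1/4}$; and the discretization contribution $L^2\cdot dh^2\cdot T/\gamma=L^{3/2}dh^2T$ yields $L^{3/4}T^{1/2}d^{1/2}h$ after Pinsker (the estimate $\E\|X_t-X_{\lfloor t/h\rfloor h}\|^2\lesssim dh^2$ from Gaussian-scale velocities is the right heuristic). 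You also correctly flag the one genuine subtlety, namely that the Girsanov integrand is taken against the algorithm's path measure rather than $q$, and your proposed fix (run the argument from initialization $q$ first, then transfer via the regularization estimate) is the one used in the literature. The only caveat is that this remains a sketch: the quantitative $W_2\to\mathrm{TV}$ regularization with the $L^{-1/4}T^{-3/2}$ scaling is itself a nontrivial theorem you are invoking rather than proving, and the transfer step requires a careful change of measure to also control the discretization increment $\E\|X_t-X_{\lfloor t/h\rfloor h}\|^2$ under the algorithm's law; neither is carried out, but neither would fail.
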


\subsection{Completing the proof}

\noindent \emph{A priori} it might appear that the third term in Eq.~\eqref{eq:reg} forces a choice of $h = O(d^{-1/2}\epsilon)$, translating to an iteration complexity of $\Omega(d^{1/2}/\epsilon)$. Here we use an idea of~\cite{gupta2024faster}: because in our application of Theorem~\ref{thm:corrector}, $W_2(p,q)$ can be made quite small, we can take $T$ to be small to get a much better bound on the iteration complexity $T/h$.

\begin{proof}[Proof of Corollary~\ref{cor:TV}]
    For $\eta$ to be tuned, let $M(\eta) = (R/\sigma)^2\cdot \polylog(1/\epsilon,d,R,\tilde{L})$ denote the number of iterations of Algorithm~\ref{alg:main} needed to achieve Wasserstein error $\eta$ in Theorem~\ref{thm:main}. We will take $h = \epsilon^{2/3} / (d^{1/3} M(\eta)^{1/3} L^{1/2})$ and $T = M(\eta)h$ in Theorem~\ref{thm:corrector} to conclude that, starting from the distribution given by Algorithm~\ref{alg:main}, if we run underdamped Langevin Monte Carlo for $M(\eta)$ iterations, we will produce a distribution $p_N$ for which
    \begin{equation}
        \mathrm{TV}(p_N, q) \lesssim \frac{\eta L^{1/4} d^{1/2}}{M(\eta)^{2/3}\epsilon^{2/3}} + \frac{\esc M(\eta)^{1/3} \epsilon^{1/3}}{L^{1/2} d^{1/6}} + \epsilon\,, \label{eq:shorttime_tv}
    \end{equation}
    conditioned on the event that Algorithm~\ref{alg:main} succeeds in achieving Wasserstein error $\eta$. The latter happens with probability $1 - \eta(R/\sigma)^2$, so the overall TV error of Algorithm~\ref{alg:main2} is given by Eq.~\eqref{eq:shorttime_tv} plus $\eta(R/\sigma)^2$.
    
    If we take $\eta = \min(\epsilon^{5/3}/(L^{1/4}d^{1/2}), \epsilon\sigma^2/R^2$, then $\eta(R/\sigma)^2 \le \epsilon$, $M(\eta) \le \tilde{O}(R/\sigma)^2 \cdot \polylog(1/\epsilon,d,\tilde{L},L)$, and the first term on the right-hand side is bounded by $\epsilon$. By the assumption that $\esc \le \tilde{O}(\frac{\epsilon^{2/3} L^{1/2}d^{1/6}}{(R/\sigma)^{2/3}})$, the second term on the right-hand side is bounded by $\epsilon$. By replacing $\epsilon$ with $c\epsilon$ for sufficiently small constant $c$ in the above, we obtain the claimed bound.
\end{proof}

    \section{Upper estimates on the movement of the probability flow ODE}

In this section we prove some useful properties of the score function. First, we bound the smoothness of the score function $\nabla \log q_t$.

\begin{lemma}\label{lem:hessianestimate}(Bounding the operator norm of the true score)
    For $t \geq 1$, we have
     \begin{align*}
        &\ve{\nabla \pa{y_{t} + \nabla \log q_t(y_{t})}}
        \leq e^{-(T-t)}\pa{2 + 4R^2},\\
        &\ve{\nabla^2 \log q_t(y_{t})}_{\sf op} \leq 2 + 4e^{-(T-t)}R^2,
    \end{align*}
    and for $2(T - t) < 1$,
    \begin{align*}
        \ve{\nabla \pa{y_{t} + \nabla \log q_t(y_{t})}}
        \leq \frac{5R^2}{(T-t)^2}.
    \end{align*}
\end{lemma}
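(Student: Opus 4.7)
The plan is to reduce everything to bounding the operator norm of the Jacobian of the posterior mean $\mu_t(y)=\E^{t,y}X_t$, and then invoke the boundedness of the support of $\qbase$. Starting from Tweedie's identity~\eqref{eq:scoreformula}, namely $\nabla\log q_t(y)=(\mu_t(y)-y)/\sigma_t^2$, direct differentiation in $y$ gives
\[
\nabla\bigl(y+\nabla\log q_t(y)\bigr)=\Bigl(1-\tfrac{1}{\sigma_t^2}\Bigr)I+\tfrac{1}{\sigma_t^2}\nabla\mu_t(y),\qquad \nabla^2\log q_t(y)=\tfrac{1}{\sigma_t^2}\nabla\mu_t(y)-\tfrac{1}{\sigma_t^2}I.
\]
So the whole problem reduces to controlling $\|\nabla\mu_t(y)\|_{\sf op}$.

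The key identity is $\nabla_y\mu_t(y)=\Sigma_t(y)/\sigma_t^2$, where $\Sigma_t(y)\triangleq \Cov^{t,y}(X_t)$ is the posterior covariance. I would derive this either by rearranging the calculation in Lemma~\ref{lem:dovomi}, or more transparently by noting that $\nabla_y\log p(x\mid y)=(x-\mu_t(y))/\sigma_t^2$ (which follows by differentiating $\log p(y\mid x)-\log q_t(y)$ in $y$ and applying Tweedie to $\log q_t(y)$), so that $\nabla_y\mu_t(y)=\E^{t,y}\bigl[X_t(X_t-\mu_t(y))^\top\bigr]/\sigma_t^2=\Sigma_t(y)/\sigma_t^2$. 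Since $X_t=\sqrt{1-\sigma_t^2}\,\bar X$ with $\|\bar X\|\le R$, for any unit $v$ we have $v^\top\Sigma_t(y)v=\Var^{t,y}(\langle v,X_t\rangle)\le \E^{t,y}\|X_t\|^2\le (1-\sigma_t^2)R^2=e^{-2(T-t)}R^2$, giving $\|\nabla\mu_t(y)\|_{\sf op}\le e^{-2(T-t)}R^2/\sigma_t^2$.

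Substituting back yields the master inequalities
\[
\|\nabla(y+\nabla\log q_t(y))\|_{\sf op}\le \tfrac{(1-\sigma_t^2)(\sigma_t^2+R^2)}{\sigma_t^4},\qquad \|\nabla^2\log q_t(y)\|_{\sf op}\le \tfrac{1}{\sigma_t^2}+\tfrac{(1-\sigma_t^2)R^2}{\sigma_t^4},
\]
and it remains to specialize to the two regimes. For $T-t\ge 1$ we have $\sigma_t^2\ge 1-e^{-2}>1/2$, hence $1/\sigma_t^2\le 2$ and $1/\sigma_t^4\le (1-e^{-2})^{-2}<2$; also $e^{-2(T-t)}\le e^{-1}e^{-(T-t)}$, so the extra powers of $e^{-(T-t)}$ absorb the constants $1/(1-e^{-2})^k$ and we get both claimed estimates. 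For $2(T-t)<1$, the elementary inequality $1-e^{-x}\ge x/2$ on $[0,1]$ applied with $x=2(T-t)$ gives $\sigma_t^2\ge T-t$, hence $\sigma_t^4\ge (T-t)^2$, and then $(1-\sigma_t^2)(\sigma_t^2+R^2)\le 1+R^2\le 5R^2/2$ (using $R\ge 1$, which is assumed throughout the paper), yielding $5R^2/(T-t)^2$.

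The argument is essentially routine; the only mild obstacle is bookkeeping the sign in $\nabla_y\mu_t=+\Sigma_t/\sigma_t^2$ (there is potential for a sign slip via the quotient rule), which I would resolve by going through the $\nabla_y\log p(x\mid y)$ route rather than Lemma~\ref{lem:dovomi}'s covariance form. Once that identity is in hand, the rest is a two-regime operator-norm calculation with the bound $\|\Sigma_t\|_{\sf op}\le (1-\sigma_t^2)R^2$ doing all the work.
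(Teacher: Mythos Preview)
Your proposal is correct and follows essentially the same route as the paper: both arguments reduce to the identity $\nabla^2\log q_t(y)=\Sigma_t(y)/\sigma_t^4-I/\sigma_t^2$ (equivalently your $\nabla_y\mu_t=\Sigma_t/\sigma_t^2$), bound the posterior covariance by $(1-\sigma_t^2)R^2 I$ via the support assumption, and then split into the regimes $T-t\ge 1$ (where $\sigma_t^2\ge 1-e^{-2}>1/2$) and $2(T-t)<1$ (where $\sigma_t^2\ge T-t$). Your write-up is slightly more careful with signs and constants than the paper's, but the approach is the same.
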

\begin{proof}
    It is easy to check the Hessian of $\log q_t$ can be written in the following form:
    \begin{align*}
        -\nabla^2 \log q_t(y) = -\frac{1}{\sigma_t^4} \Cov(q^{t,y}) + \frac{1}{\sigma_t^2} I.
    \end{align*}
    But using the radius $R$ assumption on the support of $\qbase$, we can bound the covariance as
    \begin{align*}
        \Cov(q^{t,y}) \leq \frac{1-\sigma_t^2}{\sigma_t^4} R^2 I.
    \end{align*}
    Therefore 
    \begin{align}
        -\nabla \pa{y_{t} + \nabla \log q_t(y_{t})}
         = \frac{1-\sigma_t^2}{\sigma_t^2} I - \frac{1-\sigma_t^2}{\sigma_t^4} R^2 I.\label{eq:coretmp}
    \end{align}
    Now for $t \geq 1$, we have $\sigma_t^2 = (1-e^{-2(T-t)}) \geq 1-1/e^2 \geq 0.5$. Therefore, for $t\geq 1$,
    \begin{align*}
        \ve{\nabla \pa{y_{t} + \nabla \log q_t(y_{t})}}
        \leq (1-\sigma_t^2)\pa{2 + 4R^2} = e^{-2(T-t)}\pa{2 + 4R^2}.
    \end{align*}
    Similarly
    \begin{align*}
        \ve{\nabla^2 \log q_t(y_{t})}
        \leq 2 + (1-\sigma_t^2)\pa{4R^2}  = 2 + 4e^{-2(T-t)}R^2. 
    \end{align*}
    On the other hand, for $s = 2(T - t) < 1$, using $e^{-s} \leq 1-s+\frac{s^2}{2} \leq 1-\frac{s}{2}$ we have $\sigma_t^2 = 1-e^{-(T-t)} \geq T - t$. Then from the assumption $R \geq 1$ and Equation~\eqref{eq:coretmp}, we have
    \begin{align*}
        \ve{\nabla \pa{y_{t} + \nabla  \log q_t(y_{t})}}
        &\leq \pa{\frac{2}{T - t}+ \frac{4R^2}{(T-t)^2}} \leq \frac{5R^2}{(T-t)^2}\,. \qedhere
    \end{align*}
\end{proof}

\noindent We can then use this bound on the smoothness to control the extent to which two processes evolving according to the same probability flow ODE diverge over time:

\begin{lemma}\label{lem:divergencewithouterror}(Distance between true ODE solutions starting from close points)
    For $y_{t}, \tilde y_{t}$ that evolve according to probability flow ODE, i.e.,
    \begin{align*}
        &\frac{d}{dt}y_{t} = y_{t} + \nabla \log q_t(y_{t})\\
        &\frac{d}{dt}\tilde y_{t} = \tilde y_{t} + \nabla \log q_t(\tilde y_{t}),
    \end{align*}
    with initial condition satisfying $\ve{y_{t_0} - \tilde y_{t_0}} \leq \epsilon$, then, for time window $s$ such that $T-(t_0+s) \geq 1$, we have 
    \begin{align*}
        \ve{y_{t_0+s} - \tilde y_{t_0+s}} \leq e^{\frac{5R^2}{\sigma_t^2} s}\epsilon
    \end{align*}
\end{lemma}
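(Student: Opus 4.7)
The plan is the standard Grönwall argument: control $u(t) \triangleq \|y_t - \tilde{y}_t\|$ by bounding its time derivative using the Lipschitz constant of the probability flow drift, then integrate. First I would write
\begin{align*}
\tfrac{d}{dt}\tfrac12 u(t)^2 &= \langle y_t - \tilde{y}_t,\; (y_t + \nabla \log q_t(y_t)) - (\tilde{y}_t + \nabla \log q_t(\tilde{y}_t))\rangle \\
&= \int_0^1 (y_t - \tilde{y}_t)^\top \nabla\bigl(y + \nabla \log q_t(y)\bigr)\big|_{y = \tilde{y}_t + r(y_t - \tilde{y}_t)} (y_t - \tilde{y}_t)\,dr,
\end{align*}
so $\tfrac{d}{dt}\tfrac12 u(t)^2 \le L_t \, u(t)^2$, where $L_t$ is any uniform bound on the operator norm of the Jacobian of the vector field $y \mapsto y + \nabla \log q_t(y)$ along the segment from $\tilde{y}_t$ to $y_t$.

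Next I would apply Lemma~\ref{lem:hessianestimate} to supply $L_t$. The hypothesis $T - (t_0 + s) \ge 1$ guarantees $T - t \ge 1$ for every $t \in [t_0, t_0+s]$, so we are in the regime where that lemma yields an $O(R^2)$ bound on the Jacobian uniformly in $y$; rearranging the prefactor using $\sigma_t^2 \in [1/2, 1]$ and $R \ge 1$ gives $L_t \le 5R^2/\sigma_t^2$ (the constant $5$ is convenient but not essential — any absolute constant suffices, and the statement only needs an upper bound of this form). Because $L_t$ is a pointwise bound on the Jacobian on all of $\mathbb{R}^d$, the integral over the convex combination $r \in [0,1]$ inherits the same bound.

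Finally I would invoke Grönwall's inequality. Since $\tfrac{d}{dt} u(t)^2 \le 2 L_t\, u(t)^2$ and $u(t_0) \le \epsilon$,
\begin{equation*}
u(t_0 + s)^2 \;\le\; \epsilon^2 \exp\!\Bigl(2 \int_{t_0}^{t_0+s} L_\tau \, d\tau\Bigr) \;\le\; \epsilon^2 \exp\!\Bigl(\tfrac{10 R^2}{\sigma_t^2}\, s\Bigr),
\end{equation*}
where I upper-bound the integrand by its supremum over the window and fold the factor $2$ into the constant absorbed by the stated bound; taking square roots gives the claim.

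There is no real obstacle here: everything reduces to a clean application of Grönwall once the Jacobian bound from Lemma~\ref{lem:hessianestimate} is available. The only points that require minor care are (i) verifying that the hypothesis $T - (t_0+s) \ge 1$ really puts us in the good regime of that lemma for every intermediate $t$ and (ii) observing that the Jacobian bound holds uniformly in $y$, so it can be applied to every point on the interpolating segment between $y_t$ and $\tilde{y}_t$.
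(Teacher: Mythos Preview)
Your proposal is correct and follows essentially the same route as the paper: bound the time derivative of the separation using the Jacobian estimate from Lemma~\ref{lem:hessianestimate}, then apply Gr\"onwall. The only cosmetic difference is that the paper differentiates $\ln\|y_t-\tilde y_t\|$ directly rather than $\tfrac12\|y_t-\tilde y_t\|^2$, and it integrates the (time-dependent) bound $e^{-(T-t)}(2+4R^2)$ before relaxing to $5R^2/\sigma_t^2$, whereas you take the supremum first; both yield the stated constant.
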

\begin{proof}
    Note that for $t_0 \leq t \leq t_0 + s$, we have $t = T-t \geq 1$ from our assumption, therefore from Lemma~\ref{lem:hessianestimate}:
    \begin{align*}
        \frac{d}{dt}\ve{y_{t} - \tilde y_{t}} &= \frac{\langle y_{t} - \tilde y_{t}, \nabla \log q_t(y_{t}) - \nabla \log q_t(\tilde y_{t})\rangle}{\ve{y_{t} - \tilde y_{t}}}\\
        &= \frac{\int_{r=0}^1 \pa{y_{t} -  \tilde y_{t}}^\top\nabla \pa{y_{t} + q_t(y_{t} + r(\tilde y_{t} - y_{t}))}\pa{y_{t} -  \tilde y_{t}}dr}{\ve{y_{t} - \tilde y_{t}}}\\
        &\leq e^{-t}(2+4R^2)\frac{\ve{y_{t} -  \tilde y_{t}}^2}{\ve{y_{t} -  \tilde y_{t}}}\\
        &= e^{-t}(2+4R^2)\ve{y_{t} -  \tilde y_{t}}\,,
    \end{align*}
    which implies
    \begin{align}
        \frac{d}{dt}\ln\pa{\ve{y_{t} - \tilde y_{t}}} \leq e^{-t}(2+4R^2).\label{eq:timeder_new}
    \end{align}
    Integrating Equation~\eqref{eq:timeder_new} from $t = t_0$ to $t = t_0 + s$ we get the desired result for the first part. Similarly for the case when $T - t_0 < 1$:
    \begin{align}
        \frac{d}{dt}\ln\pa{\ve{y_{t} - \tilde y_{t}}} \leq \frac{5R^2}{t^2},
    \end{align}
    which implies (using inequality $\sigma_t^2 = 1-e^{-t}\leq t$ for $t = T - (t_0+s)$)
    \begin{align*}
        \ve{y_{t_0+s} - \tilde y_{t_0+s}}
        &\leq e^{\frac{5R^2}{T-(t_0+s)} s}\epsilon
        \leq e^{\frac{5R^2}{\sigma_t^2} s}\epsilon\,. \qedhere
    \end{align*}
\end{proof}

\begin{lemma}\label{lem:basicbounds}(Distributional guarantees for probability flow ODE)
    Along the backward ODE
    \begin{align}
        \dot y_\bart = F^*_\bart(y_\bart)\label{eq:odedeff}
    \end{align}
      with $F^*_t(y) = y + \nabla \log q_t(y)$ we have with probability $1-\epsilon_1$,
    \begin{align*}
        &\forall \bart\in [t_0, t_0+h], 
        \ve{y_\bart} \leq e\pa{R + \sqrt d + \sqrt{\ln(1/\epsilon_1)}},\\
        &\forall \bart\in [t_0, t_0+h], \ve{F^*_\bart(y_\bart)} \leq 6\pa{R + \sqrt d + \sqrt{\ln(1/\epsilon_1)}},\\
        &\forall s_1, s_2, \ve{y_{s_1} - y_{s_2}}\leq 6(s_2 - s_1)\pa{R + \sqrt d + \sqrt{\ln(1/\epsilon_1)}}\,\\
        &\ve{ F^*_{t_0}(y_{t_0}) - F^*_{t_0 + h}(y_{t_0 + h})} \lesssim \sqrt d R (R^2 + \ln(1/\epsilon)).
    \end{align*}
\end{lemma}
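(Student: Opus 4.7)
The plan is to first pin down $\|y_{t_0}\|$ using the fact that the probability flow ODE preserves marginals, then propagate that control deterministically across $[t_0,t_0+h]$ via Gr\"onwall, and finally handle the temporal fluctuation of $F^*$ by splitting it into a spatial piece and a time-derivative piece.

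\textbf{Initial bound at $t_0$.} Since $(y_\bart)$ solves the probability flow ODE~\eqref{eq:odedeff} initialized at $y_0\sim q_0$, we have $y_{t_0}\sim q_{t_0}$. By Assumption~\ref{assumption:bounded_plus_noise} this marginal is the law of $\sqrt{1-\sigma_{t_0}^2}\,\bar X+\sigma_{t_0}\,\xi$ with $\|\bar X\|\le R$ and $\xi\sim\gamma^d$. A standard $\chi^2$ tail bound for $\|\xi\|$ gives, on an event $\mathcal E$ with $\mathbb P(\mathcal E)\ge 1-\epsilon_1$, the bound $\|y_{t_0}\|\le R+\sqrt d+\sqrt{\ln(1/\epsilon_1)}$. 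Conditioning on $\mathcal E$, the rest of the argument is deterministic.

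\textbf{Propagation across the window.} By Tweedie's identity (see Eq.~\eqref{eq:scoreformula}), $F^*_s(y)=y+(\mu_s(y)-y)/\sigma_s^2$, and $\|\mu_s(y)\|\le R$ for all $s$. Assuming $T-(t_0+h)\ge 1$ (so that $\sigma_s^2\ge 1/2$ on the window), this gives the crude bound $\|F^*_s(y)\|\le 3\|y\|+2R$. Integrating the ODE and applying Gr\"onwall to $\|y_\bart\|\le \|y_{t_0}\|+\int_{t_0}^{\bart}(3\|y_s\|+2R)\,ds$ yields $\|y_\bart\|\le e^{3h}(\|y_{t_0}\|+2Rh)$. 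For $h$ below an absolute constant the factor $e^{3h}$ is at most $e$, proving the first claim; re-substituting into $\|F^*_s(y)\|\le 3\|y\|+2R$ gives the second claim, and the third claim is immediate from $\|y_{s_1}-y_{s_2}\|\le \int_{s_1}^{s_2}\|F^*_\bart(y_\bart)\|\,d\bart$.

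\textbf{Temporal change of $F^*$: the main obstacle.} For the fourth claim I split
\[
F^*_{t_0}(y_{t_0})-F^*_{t_0+h}(y_{t_0+h}) = \bigl[F^*_{t_0}(y_{t_0})-F^*_{t_0}(y_{t_0+h})\bigr] + \bigl[F^*_{t_0}(y_{t_0+h})-F^*_{t_0+h}(y_{t_0+h})\bigr].
\]
The first bracket is a purely spatial change and is controlled by Lemma~\ref{lem:hessianestimate} (which bounds $\|\nabla F^*_{t_0}\|_{\sf op}$ by $O(R^2)$) combined with the third bound of the present lemma applied to $\|y_{t_0+h}-y_{t_0}\|$. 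The second bracket is the pointwise time derivative of $F^*_t(y)$ at a fixed spatial argument $y=y_{t_0+h}$; differentiating $(\mu_t(y)-y)/\sigma_t^2$ in $t$, plugging in the formula for $\partial_t\mu_t(y)$ supplied by Lemma~\ref{lem:firstderiv} and the elementary identity $\partial_t(1/\sigma_t^2)=O(1/\sigma_t^2)$, and substituting $\|X_t\|\le R$ almost surely together with the $\mathcal E$-bound on $\|y\|$, yields a pointwise derivative bound scaling like $R(R^2+1)(R+\|y\|)$. Integrating over $[t_0,t_0+h]$ and substituting $\|y\|\lesssim R+\sqrt d+\sqrt{\ln(1/\epsilon_1)}$ gives the announced bound $\lesssim\sqrt d\,R\,(R^2+\ln(1/\epsilon_1))$. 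The only technical subtlety is keeping $d$ entering only through the single factor $\sqrt d$ inherited from $\|y\|$, which follows from the same posterior-moment Cauchy--Schwarz bookkeeping already carried out in Lemma~\ref{lem:pnormbounds}.
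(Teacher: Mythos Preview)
Your proof is correct. Parts 1--3 follow the paper's argument essentially verbatim: bound $\|y_{t_0}\|$ via the Gaussian tail of $\xi$, use the Tweedie representation $F^*_s(y)=y+(\mu_s(y)-y)/\sigma_s^2$ together with $\sigma_s^2\ge 1/2$ to get $\|F^*_s(y)\|\le C_1\|y\|+C_2 R$, and then Gr\"onwall. (Incidentally, your constant $3\|y\|+2R$ is the arithmetically correct one; the paper's $\tfrac32\|y\|+\tfrac{R}{2}$ appears to be a typo.)

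For the fourth claim your route diverges from the paper's. The paper does not decompose into a spatial piece and a frozen-$y$ temporal piece; instead it invokes Lemma~\ref{lem:bnd1} with $k=1$ to bound the $\|\cdot\|_{p,\infty}$ norm of the \emph{total} time derivative $\partial_t F^*_t(y_t)$ along the trajectory, picks up a $\sqrt d$ when passing from $\|\cdot\|_{p,\infty}$ to the Euclidean norm, integrates over $[t_0,t_0+h]$, and then sets $p=\ln(1/\epsilon_1)$ in a Markov-type argument to get the high-probability statement. Your approach is more elementary: once you condition on $\mathcal E$ and have the deterministic bound on $\|y_\bart\|$, both the spatial piece (via Lemma~\ref{lem:hessianestimate}) and the frozen-$y$ temporal piece (via Lemma~\ref{lem:avali} and $|\langle y,X_t\rangle|\le R\|y\|$) become deterministic bounds, and the $\sqrt d$ enters only through $\|y\|$. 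This avoids the separate $p$-norm-to-high-probability step. Both arguments rely on the standing assumption $h=O(1/R^2)$ to land on the stated bound; your reference to Lemma~\ref{lem:pnormbounds} at the end is unnecessary, since the only Cauchy--Schwarz you actually need is the trivial $|\langle y,X_t\rangle|\le R\|y\|$.
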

\begin{proof}
    Recall we can write $y_t = X_t + \sigma_t \xi$ where $F^*_t(y) = y_t + \frac{\E^{t,y} X - y_t}{\sigma_t^2}$. Now for time $t_0$, using $\sigma_{t_0}^2 \geq 0.5$ for $t_0 \geq 1$ as we showed in Lemma~\ref{lem:hessianestimate}, we have 
    \begin{align}
        \ve{F^*_{t_0}(y_{t_0})}
        &= \E \ve{y_{t_0} + \frac{\E^{t,y_{t_0}} X - y_{t_0}}{\sigma_{t_0}^2}}\\
        &\leq \pa{1+\frac{1}{\sigma_{t_0}^2}}\ve{y_{t_0}} + \frac{1}{\sigma_{t_0}^2}R\\
        &\leq \frac{3}{2} \ve{y_{t_0}} + \frac{R}{2}\label{eq:vecfieldbound}
    \end{align}
    Now because $\ve{\xi}$ is subgaussian, we get with probability at least $1-\epsilon_1$,
    \begin{align*}
        \ve{y_{t_0}} \leq R + \sqrt d + \sqrt{\ln(1/\epsilon_1)}.
    \end{align*}
    On the other hand, from the definition~\eqref{eq:odedeff}
    \begin{align*}
        \frac{d}{ds}\ve{y_s} \leq \ve{F^*_s(y_s)} \leq \frac{3}{2}\ve{y_s} + \frac{R}{2}, 
    \end{align*}
    so up to time $s$, we get
    \begin{align*}
        \ve{y_{s}} \leq (R + \sqrt d + \sqrt{\ln(1/\epsilon_1)}) e^{s\pa{R + \sqrt d + \sqrt{\ln(1/\epsilon_1)}}},
    \end{align*}
    which implies from the fact that $h \leq \frac {1}{R + \sqrt d + \sqrt{\ln(1/\epsilon_1)}}$,
    \begin{align*}
        \ve{y_s} \leq e\pa{R + \sqrt d + \sqrt{\ln(1/\epsilon_1)}}.
    \end{align*}
    The second part follows from~\eqref{eq:vecfieldbound}. For the third part we have
    \begin{align*}
        \ve{y_{s_1} - y_{s_2}} &= \ve{\int_{s_1}^{s_2} F^*_s(y_s) ds}\\
        &\leq \int_{s_1}^{s_2} \ve{F^*_s(y_s)} ds\\
        &\leq 6(s_2 - s_1)\pa{R + \sqrt d + \sqrt{\ln(1/\epsilon_1)}}. 
    \end{align*}
For the last part, using Lemma~\ref{lem:bnd1}:
\begin{align*}
    \pa{\mathbb E\ve{\partial_t F^*_t(y_t)}^p}^{1/p} \lesssim \sqrt d R (R + \sqrt p)^2.
\end{align*}
Integrating from $t_0$ to $t_0 + h$:
\begin{align*}
    \pa{\mathbb E\ve{ F^*_{t_0}(y_{t_0}) - F^*_{t_0 + h}(y_{t_0 + h})}^p}^{1/p} \lesssim h \sqrt d R (R + \sqrt p)^2,
\end{align*}
which implies with probability at least $\epsilon_1$ we have
\begin{align*}
    \ve{ F^*_{t_0}(y_{t_0}) - F^*_{t_0 + h}(y_{t_0 + h})} \lesssim \sqrt d R (R + \sqrt{\ln(1/\epsilon)})^2.
\end{align*}
This completes the proof.
\end{proof}

\end{document}